\documentclass{article} 
\usepackage{iclr2022_conference,times}


\usepackage{amsmath,amsfonts,bm}









\def\eqref#1{equation~\ref{#1}}









\def\1{\bm{1}}










\DeclareMathAlphabet{\mathsfit}{\encodingdefault}{\sfdefault}{m}{sl}
\SetMathAlphabet{\mathsfit}{bold}{\encodingdefault}{\sfdefault}{bx}{n}











\newcommand{\R}{\mathbb{R}}



\usepackage[utf8]{inputenc} 
\usepackage[T1]{fontenc}    
\usepackage{hyperref}       
\usepackage{url}            
\usepackage{booktabs}       
\usepackage{amsfonts}       
\usepackage{nicefrac}       
\usepackage{microtype}      
\usepackage{xcolor}         

\usepackage{dashrule}
\usepackage{booktabs}
\usepackage{caption}
\usepackage{color}
\usepackage{amsfonts}
\usepackage{amsmath}
\usepackage{algorithm}
\usepackage{algorithmic}
\usepackage{graphicx}
\usepackage{nicefrac}
\usepackage{bbm}
\usepackage{amsthm}
\usepackage{wrapfig}
\usepackage{tikz}
\usepackage[titletoc,title]{appendix}
\usepackage{stmaryrd}
\usepackage{amsmath}
\usepackage{enumitem}
\usepackage{mathtools}
\usepackage{subcaption,booktabs}


\newtheorem{definition}{Definition}
\newtheorem{lemma}{Lemma}
\newtheorem{corollary}{Corollary}
\newtheorem{theorem}{Theorem}
\newtheorem{proposition}{Proposition}

\newtheorem{remark}{Remark}

\newtheorem{fact}{Fact}

\def\be{\begin{equation}}
\def\ee{\end{equation}}
\def\beas{\begin{eqnarray*}}
	\def\eeas{\end{eqnarray*}}
\def\bea{\begin{eqnarray}}
\def\eea{\end{eqnarray}}

\newcommand{\x}{{\mathbf x}}
\newcommand{\y}{{\mathbf y}}
\newcommand{\g}{{\mathbf g}}

\newcommand{\uu}{{\mathbf u}}

\newcommand{\aaa}{{\mathbf a}}
\newcommand{\bb}{{\mathbf b}}

\newcommand{\A}{{\mathcal A}}
\newcommand{\B}{{\mathcal B}}

\newcommand{\X}{{\mathcal X}}

\newcommand{\EE}{{\mathbb E}}

\newcommand{\N}{{\mathbb N}}
\newcommand{\SSS}{{\mathbb S}}
\newcommand{\nocontentsline}[3]{}
\newcommand{\tocless}[2]{\bgroup\let\addcontentsline=\nocontentsline#1{#2}\egroup}

\newcommand{\abs}[1]{\left\lvert#1 \right\rvert}
\newcommand{\norm}[1]{\left\|#1 \right\|}

\newcommand{\mat}[1]{\llbracket#1\rrbracket}
\newcommand{\sep}[2]{\mathrm{sep}_{(#1)}\left( #2 \right)}
\newcommand{\rank}[1]{\mathrm{rank}\left( #1 \right)}
\newcommand{\trace}[1]{\mathrm{trace}\left( #1 \right)}

\def\multiset#1#2{\ensuremath{\left(\kern-.3em\left(\genfrac{}{}{0pt}{}{#1}{#2}\right)\kern-.3em\right)}}


\newcommand{\eg}{\emph{e.g.}}
\newcommand{\ie}{\emph{i.e.}}

\newcommand{\wrt}{w.r.t.}

\newcommand{\RNum}[1]{\uppercase\expandafter{\romannumeral #1\relax}}

\title{The Inductive Bias of In-Context Learning: \\ Rethinking Pretraining Example Design}

\author{Yoav Levine, Noam Wies, Daniel Jannai, Dan Navon, Yedid Hoshen \& Amnon Shashua \\
The Hebrew University of Jerusalem\\
\texttt{\{yoav.levine, noam.wies, daniel.jannai, dan.nav\}@mail.huji.ac.il}
}

\iclrfinalcopy
\begin{document}

	\maketitle
	
	\begin{abstract}
		Pretraining Neural Language Models (NLMs) over a large corpus involves chunking the text into training examples, which are contiguous text segments of sizes processable by the neural architecture. 
		We highlight a bias introduced by this common practice:
		we prove that the pretrained NLM can model much stronger dependencies between text segments that appeared in the same training example, than it can between text segments that appeared in different training examples.  
		This intuitive result has a twofold role. First, it formalizes 
		the motivation behind a broad line of recent successful NLM training heuristics, proposed for the pretraining and fine-tuning stages, which do not necessarily appear related at first glance.
		Second, our result clearly indicates further improvements to be made in NLM pretraining for the benefit of Natural Language Understanding tasks. 
		As an example, we propose ``kNN-Pretraining": we show that including semantically related non-neighboring sentences in the same pretraining example yields improved sentence representations and open domain question answering abilities.
		This theoretically motivated degree of freedom for \textit{pretraining example design} indicates new training schemes for self-improving representations. 
	\end{abstract}
	
	\section{Introduction}
	\ifdefined\SQUEEZE \vspace{-3mm} \fi

	Beyond excelling in their core task of pure language modeling, modern Neural Language Models (NLMs) show impressive zero- and few-shot abilities in more general Natural Language Understanding (NLU) tasks~\citep{GPT3}. 
	This implies that the training corpus contains the information required for performing such tasks, and moreover it implies that the common pretraining process 
	grants the trained NLM some access to these higher level capabilities. 
	In this paper, we highlight a connection between the quality of the emergent NLU capabilities and a basic component in the NLM training scheme: the process of segmenting the corpus into training examples.
	
	Specifically, NLMs self-train over huge training corpora (typically, billions to trillions of words).
	A basic, automatic, operation in the training pipeline is to segment these corpora into {training examples}: contiguous text chunks of sizes processable by the neural architecture (typically, up to thousands of words). 
	We formalize an expressivity bias that this segmentation process introduces, to be referred to as the \textit{in-context bias}, which directly affects the NLM's ability to integrate cross-corpus information.
	We show that the NLM can model much stronger dependencies between sentences that were shown together at least once {in-context}, \ie, in the same training example, than between sentences that were never shown together in the same input.
	This inductive bias may be good for language modeling, but it implies that NLU capabilities that involve integrating information from different examples across the corpus (see, \eg, figure~\ref{fig:fig1}), are under-favored by design in the current setting.
	Thus, if one sentence in the corpus can elucidate the meaning of another sentence (e.g., defines a hard concept or provides auxiliary information), our result implies that a model that saw them in different training examples will enjoy this elucidation less than a model that saw them in the same training example.   
	
	While standard approximation results examine the expressivity of an architecture over a single input, our theoretical approach pertains to the entire training process, and examines the expressive capacity of the resultant NLM with respect to the training set. 
	Therefore, our approximation result ties an optimization parameter (the learning-rate) to the regular NLM architecture expressivity parameters (depth, width). 
	Intuitively, sentences that were never shown in the same input can only access each other via the weights of the network during training. The mechanism for ``storing" information in the network involves a very small learning-rate term $\eta$; our analysis formalizes and quantifies an ``expressivity  toll" that the model pays when making use of such harder-to-access stored information.
	
	We employ the tool of a function’s separation rank with respect to subsets of its variables, which
	quantifies its ability to model input dependencies between these subsets. The separation rank
	was employed for analyzing the dependencies modeled by convolutional~\citep{cohen2017inductive}, recurrent~\citep{levine2018benefits}, and self-attention ~\citep{levine2020limits} networks with respect to a single input example.
	In order to analyze an NLM's ability to model dependencies between \textit{different} training examples, we refine the usage of this measure in two manners: (1) we introduce the $\varepsilon$-separation rank, which measures the effective ability of a function to model dependencies in a finite precision setting, and (2) we modify the separation rank such that it can account for the more intricate mechanism of mixing between variables that occurs in the sequential case.  
	
	Specifically, we upper bound the log of the separation rank of a depth $L$ width $d_x$ self-attention based NLM, with respect to two sentences that are shown in its input, by $\tilde{O}(d_xL)$, and prove that this bound is tight. 
	On the other hand, we upper bound this measure with respect to two sentences that were never shown in the same input by $\tilde{O}(d_x[L-0.5 \log_3(\eta^{-1})])$.
	Given common learning-rate values of $\eta\in[10^{-6},10^{-4}]$, 
	{this implies a guaranteed ``depth deficit" of $\sim 6$ layers for modeling dependencies between sentences that are not seen in the same training example}. After the presentation of our results, we point at empirical evidence that imply that this depth deficit is more significant, and may behave like a fraction of $L$. We leave attempts to tighten the depth deficit estimates to future work.  
	
	\vspace{-3mm}
	\subsection{The in-context bias drives a variety of existing approaches}
	\vspace{-2mm}
	Several recent works intuitively rely on the above formalized in-context expressivity bias in different manners, and significantly improve both task-specific training and pretraining of NLMs. 
	\cite{gao2020making} advance the frontier in $k$-shot learning via finetuning. 
	They show that by concatenating several related training examples per input, instead of using standard fine-tuning practice of one example per input, the $k$-shot performance on sentence similarity tasks is considerably boosted.
	Another example was pointed out in~\cite{Humeau2020Poly-encoders,thakur2020augmented}: when training for sentence similarity tasks, including both sentences in the same input leads to a performance gain of around $10$ points relative to separately encoding each sentence.
	In the challenging setting of open-domain question answering, 
	\cite{izacard2020leveraging} jointly attend to all documents that may contain the answer, and show large gains relative to prior methods that consider these documents in separate forward passes.  
	
	Turning our focus to methods that leverage the in-context bias for improved pretraining, the most straightforward effort is a body of work aimed at reducing the quadratic dependence of the Transformer computation on input sequence length~\citep{tay2020long}. While allowing for more text in-context during training, this does not improve the model's ability to integrate text across different documents in the corpus. The following approaches take a further step and enable direct cross-corpus connections during pretraining. 
	\cite{lewis2020pre} attend to related documents 
	when maximizing the likelihood of a target document. The scope of related documents is restricted by meta-data: taken from the same Wikipedia entry as the input, or published on the same date.  
	\cite{guu2020realm} expand the scope of the related documents, by training a Knowledge-Retrieval  model that has access to the entire Wikipedia corpus. They retrieve several related documents per target document, but condition on each related document independently.
	Outside of the natural language domain,~\cite{rao2021msa} train a Transformer based protein-LM that receives multiple related protein sequences in-context. Their protein-LM surpasses previous methods which process one sequence per input
	by a wide margin, with significant parameter efficiency.	
	
	\vspace{-3mm}
	\subsection{Leveraging the in-context bias for NLU oriented training}
	\vspace{-2mm}
	
	Though the in-context bias is intuitive, the above subsection surveys recent advances that leverage it in non-trivial manners. 
	Having formalized the theoretical advantage for in-context integration of related text, the roots of the above successes can be unified, and importantly, new methods for tilting the pretraining bias towards NLU tasks are indicated.
	Following the presentation of our theoretical results in section~\ref{sec:2}, we detail in section~\ref{sec:3} two controlled setting exemplifications of new methods that directly leverage the in-context bias.
	
	Our first experiment augments the Task Adaptive PreTraining (TAPT) setting of~\cite{gururangan2020don},
	in which an NLM that was pretrained on a general corpus continues pretraining (with its original objective) on the training set of an NLU task.
	We perform TAPT on the SentEval sentence similarity benchmark~\citep{conneau2018senteval}, and during TAPT introduce the following augmentation: along with SentEval sentences, we simultaneously pretrain on related sentences from Wikipedia, the general pretraining corpus.
	The related sentences are found via k-Nearest Neighbors (kNN) search between the   embeddings of SentEval examples and all Wikipedia sentences; we thus dub this approach \textit{kNN-TAPT}.
	Importantly, during kNN-TAPT, each input includes a training example from the task, appended \textit{in-context} by its Wikipedia neighbors.
	We demonstrate significant gains of the kNN-TAPT over regular TAPT on SentEval sentence similarity tasks. 
	A dedicated ablation study shows the significance of adding the general corpus neighbors {in-context}, versus in separate training examples, during kNN-TAPT. 
	
	\begin{wrapfigure}{r}{0.62\textwidth}
		\begin{center}
			\vspace{0mm}
			\includegraphics[scale=0.35,clip=false,trim=107 120 100  52]{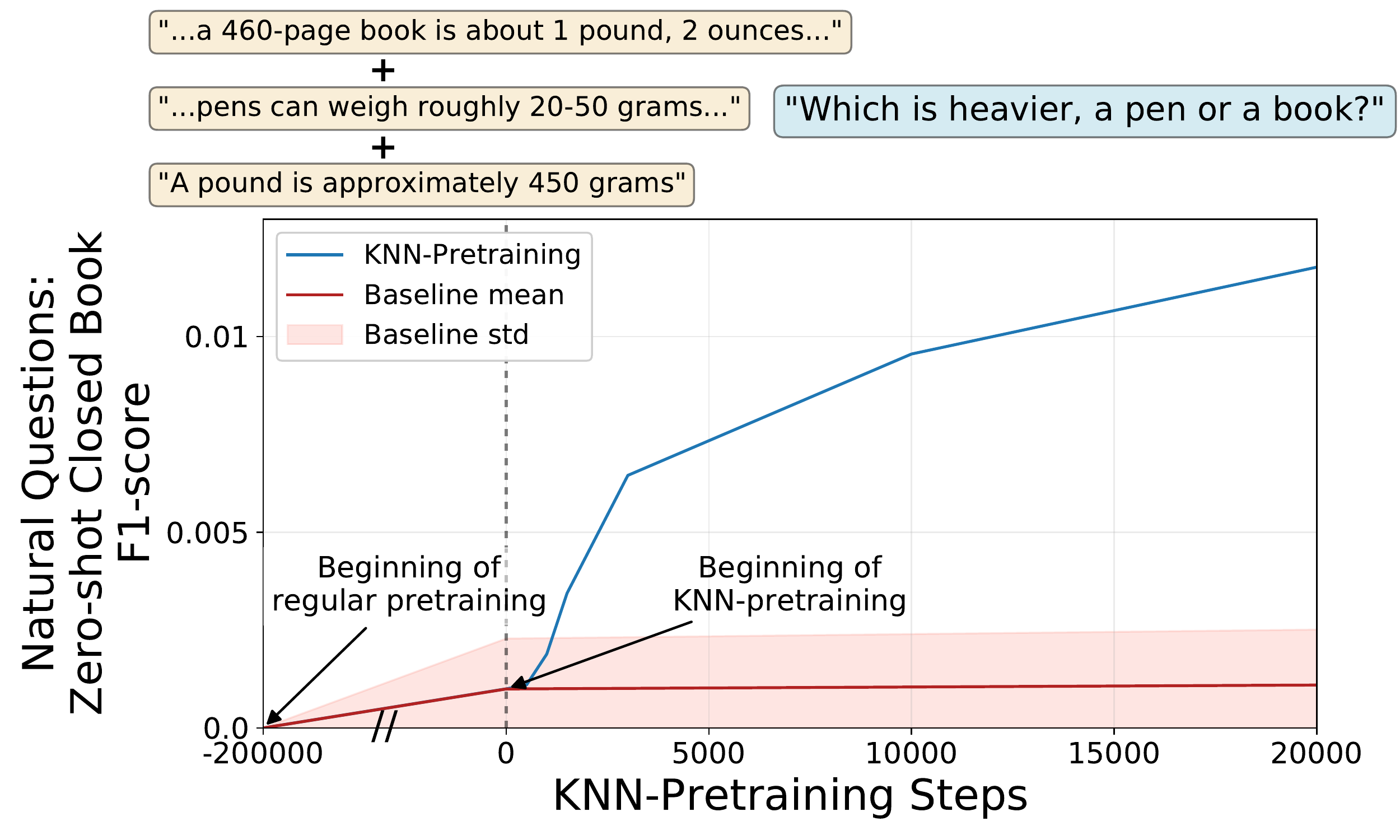}
		\end{center}
		\vspace{10mm}
		\caption{A $10\%$ addition of kNN-Pretraining boosts zero-shot closed book QA score by$\sim5$X (evaluation set size is 20,000).\label{fig:fig1}}\vspace{-3.5mm}
	\end{wrapfigure}
	
	Our second experiment introduces a task-independent pretraining phase, 
	dubbed kNN-Pretraining.
	As in kNN-TAPT, we group together sentences with similar sentence representations 
	in the same training example, but in kNN-Pretraining we use only sentences from the general pretraining corpus.
	This can be viewed as a sentence-focused variation of the above surveyed pretraining schemes in~\cite{lewis2020pre} and~\cite{guu2020realm}, who operate on full documents (up to $512$ each), and is very similar to RETRO by~\cite{borgeaud2021improving} (DeepMind), who show the benefits of this approach given much larger resources.
	Figure~\ref{fig:fig1} 
	shows that after regular pretraining for $200$K steps on Wikipedia, the zero-shot closed book performance of $3$ different randomly initialized GPT2-medium models ($345$M parameters) on open domain questions from Wikipedia~\citep{kwiatkowski2019natural} is very low (correct on less than $50$ questions out of 20,000 in the evaluation set). Adding kNN-Pretraining for $20$K steps raises  performance significantly (correct on roughly $250$ questions in the evaluation set), reflecting the enhanced ability to integrate knowledge from related sentences, acquired via the in-context bias. 
	
	In summary, our main contributions are:
	\ifdefined\SQUEEZE \vspace{-1.1mm} \fi
	\begin{itemize}
	\ifdefined\SQUEEZE \vspace{-1.1mm} \fi
\item We formally establish the \textit{in-context bias}: information  within pretraining examples is better represented than information integrated across pretraining examples.
  	\ifdefined\SQUEEZE \vspace{-1.1mm} \fi
	\item We ask and answer a new type of network expressivity question:  \textit{how expressive is a network with respect to examples seen during its training process?}
  	\ifdefined\SQUEEZE \vspace{-1.1mm} \fi
\item We demonstrate that in-context bias motivated ``pretraining example design" elicits better representations from the same data: \textit{kNN-Pretraining} improves on several NLU tasks.
	\end{itemize}
	\ifdefined\SQUEEZE \vspace{-3mm} \fi

	\ifdefined\SQUEEZE \vspace{-3mm} \fi
	\section{Theoretical analysis: The in-context bias of self-attention}\label{sec:2}
	\ifdefined\SQUEEZE \vspace{-3mm} \fi
	In this section, we consider the entire NLM training procedure as a functional  
	that receives an unlabeled training corpus and outputs a trained NLM. 
	Our analysis focuses on the corpus segmentation into training examples as a hyper-parameter of this functional.
	We reduce the high-level notion of representing ``cross-corpus correlations" to a quintessential case study: we quantify the NLM's ability to model dependencies between two sentences that appear in the same training example (the \textit{in-context} representation) and in different training examples (the \textit{sequential} representation).
	
	We believe that the in-context bias can be shown to exist in a broad range of architectures, but we focus on self-attention since almost all modern NLMs are based on the Transformer architecture of~\cite{vaswani2017attention}.
	Our theoretical framework is based on that of \cite{levine2020limits,wies2021vocabulary}, who analyze a simplified, theoretically accessible, self-attention network.
	They study the expressivity of this self-attention architecture with respect to its input, and use a measure of a multivariate function's ability to correlate two subsets of its variable set, referred to as the separation rank.
	The analyzed framework captures the connectivity of self-attention but omits its softmax and ReLU non-linearities (see eq.~\ref{eq:our_layer} below).
	We refer the reader to \cite{levine2020limits,wies2021vocabulary} for a discussion on the impact of these relaxations. Essentially, they are shown to weaken the overall network power but still allow a meaningful comparison of the self-attention integration abilities.  
	Importantly, both works derive unforeseen theoretical conclusions from analyses of the separation rank measure for this architecture class, and then provide extensive empirical corroboration for their manifestation in common Transformer architectures, reinforcing the relevance of this setting.
	In the following, we describe in section~\ref{sec:2:1} the analyzed \textit{in-context} and \textit{sequential} self-attention representations of two sentences. 
	Then, in section~\ref{sec:2:2}, we present the separation rank, which we use in section~\ref{sec:2:3} for quantifying the advantage of in-context representations versus sequential ones.
	
	\ifdefined\SQUEEZE \vspace{-4mm} \fi
	\subsection{The analyzed in-context and sequential representations}\label{sec:2:1}
	\ifdefined\SQUEEZE \vspace{-3mm} \fi
	
	For an input sequence of $N$ embedding vectors $\{\x^j\in\R^{d_x}\}_{j=1}^N$, denote the function realized by the analyzed $H$-headed depth-$L$ width-$d_x$ Transformer architecture at output location $i\in[N]$ by: $\g^{i,L,d_x}_{\mathcal{W}}\left(\x^{1},...,\x^{N}\right)\in\R^{d_x}$, where $\mathcal{W}$ stands for learned parameters, recursively defined: 
	
	\ifdefined\SQUEEZE \vspace{-8mm} \fi
	\begin{align}\label{eq:our_layer}
	\g_{\mathcal{W}}^{i,l+1,d_x}\left(\g_{\mathcal{W}}^{1,l,d_x},...,\g_{\mathcal{W}}^{N,l,d_x}\right)=\sum_{h=1}^{H}&W^{\textrm{O},l,h}\sum_{j=1}^{N}a_{hj}^{i}W^{\textrm{V},l,h}\g_{\mathcal{W}}^{j,l,d_x}
	\\\nonumber
	a_{hj}^{i}:=\left\langle W^{\textrm{Q},l,h}\g_{\mathcal{W}}^{i,l,d_x},W^{\textrm{K},l,h}\g_{\mathcal{W}}^{j,l,d_x}\right\rangle	~~~&;~~~~
	\g_{\mathcal{W}}^{i,0,d_x}=\x^{i}
	\end{align}
	\ifdefined\SQUEEZE \vspace{-5mm} \fi
	
	where $\mathcal{W}$ is composed of Key, Query, Value and Output matrices:
	$\forall l\in[L] ,h\in[H]$, $W^{\textrm{K},l,h},W^{\textrm{Q},l,h},W^{\textrm{V},l,h}$, $(W^{\textrm{O},l,h})^{\top}\in\R^{d_a\times d_x}$, where we assume the standard choice $d_a={d_x}/{H}$.
	For a word from vocabulary of size $V$, $w\in[V]$, the translation into the Transformer dimension is done via a mapping $E_{M^{\textrm{V}}}:[V]\to\R^{d_x}$:
	\ifdefined\SQUEEZE \vspace{-2mm} \fi
	\begin{equation}\label{eq:embedding}
	E_{M^{\textrm{V}}}\left(w\right) = \left(M^{\textrm{V}}\right)_{w}
	\ifdefined\SQUEEZE \vspace{-4mm} \fi
	\end{equation}
	
	where $M^{\textrm{V}}\in\R^{d_x\times V}$ is the learned vocabulary embedding matrix, and $\left(M^{\textrm{V}}\right)_w$ is its $w$th column, also referred to as the learned word embedding for $w$. 
	Overall, the function of the analyzed Transformer over a sequence of $N$ words $\{w^j\in[V]\}_{j=1}^N$ can be written by composing eqs.~\ref{eq:our_layer} and~\ref{eq:embedding}:
	
	\ifdefined\SQUEEZE \vspace{-4mm} \fi
	\begin{equation}\label{eq:transformer}
	\y^{i,L,d_x}_{\mathcal{W}, M^{\textrm{V}}}\left(w^{1},...,w^{N}\right)= \g^{i,L,d_x}_{\mathcal{W}}\left(E_{M^{\textrm{V}}}\left(w^{1}\right),...,E_{M^{\textrm{V}}}\left(w^{N}\right)\right)
	\end{equation}
	\ifdefined\SQUEEZE \vspace{-4mm} \fi
	
	For simplicity of presentation, we examine two sentences $S_1$ and $S_2$ of equal length ${N}$: $S_1=\{w_1^j\}_{j=1}^N$ and $S_2=\{w_2^j\}_{j=1}^N$.
	The in-context representation simply concatenates both in the input:
	\ifdefined\SQUEEZE \vspace{-2mm} \fi
	\begin{align}\label{eq:in_context}
	\y^{i,L,d_x}_{\textrm{in-context}}\left(S_1,S_2\right):&=\y^{i,L,d_x}_{\mathcal{W}, M^{\textrm{V}}}\left(w_1^{1},...,w_1^{N},w_2^{1},...,w_2^{N}\right).
	\end{align} 
	\ifdefined\SQUEEZE \vspace{-5mm} \fi
	
	For the sequential approach, we consider a setup in which sentence $S_1$ is inserted into the network at training step $t$ and sentence $S_2$ is inserted into the network at training step $t+1$.
	The output of the network at training step $t$ is therefore:
	$\y^{i,L,d_x}_{\mathcal{W}_{t}, M^{\textrm{V}}_{t}}\left(S_1\right)$,
	where $\mathcal{W}_{t},M^{\textrm{V}}_{t}$ stand for all the learned weights before training step $t$.
	Focusing on autoregressive NLMs for simplicity of presentation (the analysis holds for bidirectional NLMs as well), the log-likelihood loss is given by $\mathcal{L}\left(S_1\right)=-\sum_{j=1}^N\log \left[\left(softmax\left\{\left(M_t^{\textrm{V}}\right)^\top\y^{j,L,d_x}_{\mathcal{W}_{t}, M^{\textrm{V}}_{t}}\left(S_1\right)\right\}\right)_{w^{j+1}_1}\right]$, and the gradient update for any learned weight $\theta\in\{\mathcal{W}_{t}, M^{\textrm{V}}_{t}\}$ is:
	$\theta_{t+1}\left(S_1;\eta\right) = \theta_{t} - \eta \cdot  \nicefrac{\partial\mathcal{L}\left(S_1\right)}{\partial \theta_{t}}$, where $\eta$ is the learning rate.
	Accordingly, the analyzed sequential representation is the network output after training step $t+1$:
	\ifdefined\SQUEEZE \vspace{-1mm} \fi
	\begin{align}\label{eq:seq_1}
	\y^{i,L,d_x,\eta}_{\textrm{sequential}}\left(S_1,S_2\right):&=\y^{i,L,d_x}_{\mathcal{W}_{t+1}\left(S_1;\eta\right),M^{\textrm{V}}_{t+1}\left(S_1;\eta\right)}\left(S_2\right).
	\end{align}
	\ifdefined\SQUEEZE \vspace{-5mm} \fi
	
	In practice, two relevant  non-neighboring sentences are not necessarily shown in consecutive pretraining steps.
	In comparison to the realistic scenario of $S_1$ and $S_2$ appearing at \textit{any} training step,
	this simplifications tilts the representation in favor of modeling high correlations between $S_1$ and $S_2$.
	Thus, by upper bounding the ability to correlate $S_1$ and $S_2$ in the setting of eq.~\ref{eq:seq_1} (as we do in section~\ref{sec:2:3}), we establish an inherent limitation of the network to access information that was stored in its weights via the gradient update mechanism.       
	In the next subsection, we present our approach for measuring a network's ability to correlate two sentences seen during training, which we will use in order to separate between the {in-context} and {sequential} settings.
	
	\ifdefined\SQUEEZE \vspace{-3mm} \fi
	\subsection{A measure for modeling in-context and sequential dependencies}\label{sec:2:2}
	\ifdefined\SQUEEZE \vspace{-2mm} \fi
	In this section, we refine the separation rank, used in prior work in order to analyze the dependencies between two sentences appended in-context. 
	In section~\ref{sec:2:2:1} we present the separation rank and introduce a finite precision refinement of it, referred to as \textit{the effective separation rank}, which helps to elucidate the degradation in integration ability caused by the gradient update mechanism.
	In section~\ref{sec:2:2:2} we point at a structural problem in employing the separation rank in the same manner in which it was employed in prior work that analyzed only architecture expressivity, and introduce the \textit{the sequential separation rank}, meaningful for both the in-context and sequential cases. 
	
	\ifdefined\SQUEEZE \vspace{-2mm} \fi
	\subsubsection{The effective separation rank}\label{sec:2:2:1}
	\ifdefined\SQUEEZE \vspace{-2mm} \fi
	The separation rank, introduced in \cite{beylkin2002numerical} for high-dimensional numerical analysis, 
	was employed for various applications, \eg,~chemistry~\citep{harrison2003multiresolution}, particle engineering~\citep{hackbusch2006efficient}, and machine learning~\citep{beylkin2009multivariate}. 
	More recently,
	the separation rank has been established as a measure of dependencies modeled by deep convolutional and recurrent networks \wrt~their inputs~\citep{cohen2017inductive,cohen2017analysis,levine2018benefits}, and tied to quantum entanglement measures for proving that these deep learning architectures can model elaborate many-body quantum particle correlations~\citep{levine2018deep,PhysRevLett.122.065301,PhysRevLett.124.020503}. 
	Recently, \cite{levine2020limits,wies2021vocabulary} employed this measure for studying the expressivity of a self-attention architecture with respect to its input.
	
	For a function $y(A,B)$ over variables $A=\{\aaa^j\in\X\}_{j=1}^M$ and $B=\{\bb^j\in\X\}_{j=1}^M$, the separation rank \wrt~$(A,B)$ is the minimal number of summands that together sum up to equal $y(A,B)$, where each summand is \emph{multiplicatively separable \wrt~$(A,B)$}, \ie,~is equal to a product of two functions~--~one that intakes only $A$ variables and another that intakes only $B$ variables. 
	Formally, the \emph{separation rank} of $y:\X^{2M}\to\R$ \wrt~$(A,B)$ is defined as follows:
	\ifdefined\SQUEEZE \vspace{-3.5mm} \fi
	\begin{equation}\label{eq:sep}
	\sep{A,B}{y}:=\min\left\{R\in\N:\exists{g_1{\ldots}g_R,g'_1{\ldots}g'_R:\X^M\to\R}~s.t.~y\left(A,B\right)=\sum_{r=1}^{R}g_{r}\left(A\right)g'_r\left(B\right)
	\right\}
	\end{equation}
	\ifdefined\SQUEEZE \vspace{-7mm} \fi
	
	If the separation rank of a function \wrt~$(A,B)$ is~$1$, it is multiplicatively separable \wrt~$(A,B)$, meaning it cannot take into account consistency between $A$ and $B$.
	The higher $\sep{A,B}{y}$ is, the farther~$y$ is from this situation, \ie,~the more it models dependency between $A$ and $B$.
	We will further make use of the \textit{effective separation rank}:
	\ifdefined\SQUEEZE \vspace{-2mm} \fi
	\bea\label{eq:e_sep}
	\varepsilon\textrm{-}\sep{A,B}{y}:=\min\left\{R'\le\sep{A,B}{y}\in\N:\exists\tilde{y}:\X^{2M}\to\R~s.t.\right.
	\quad~\\
	\left.\left[\sep{A,B}{\tilde{y}}=R'\right]\wedge\left[\forall x\in\X^{2M}:\left|\tilde{y}\left(x\right)-y\left(x\right)\right|\leq\varepsilon\right]\right\} 
	\nonumber
	\eea
	\ifdefined\SQUEEZE \vspace{-6mm} \fi
	
	In words, if a function has a high separation rank, but it can be approximated up to error $\varepsilon$ by a function with a low separation rank, then it has a low $\varepsilon$-separation rank.

	Prior works compare two functions by establishing the differences between their \textit{separation ranks}. In principle, these differences could manifest only in irrelevant magnitudes
	(if many of the summands in the separation rank definition are negligibly small for the function with the higher separation rank, for example).
	The \textit{effective separation rank} is key to our analysis because we rely on the fact that information on past examples is stored in the network weights in a small magnitude (due to a small learning-rate).
	We show in section~\ref{sec:2:3} that much of the integration between text segments from different training examples occurs in very small magnitudes due to  high powers of the learning rate, limiting the effective integration, as measured by the $\varepsilon$-separation rank. 
	Our techniques for bounding the $\varepsilon$-separation rank are extendable to prior works, and while these did not examine the gradient update mechanism, their results can be reinforced due to the guarantees of this introduced measure.

	\ifdefined\SQUEEZE \vspace{-3mm} \fi
	\subsubsection{The sequential separation rank}\label{sec:2:2:2}
	\ifdefined\SQUEEZE \vspace{-1mm} \fi
	\citet{levine2020limits}, who were the first to apply the separation rank to functions realized by Transformer architectures, studied classical architecture expressivity questions which apply only to the in-context representation.
	Accordingly, they analyzed only the separation rank of $\g^{i,L,d_x}$, defined in eq.~\ref{eq:our_layer}, and
	the input variables considered for calculating the separation rank were the word embedding vectors.
	A fundamental difficulty arises when attempting to directly apply this method to the sequential representation: the word embedding vectors are learned parameters of the architecture. 
	In the sequential case, when the second sentence $S_2$ is introduced after the calculation at time-step $t$, the vectors used to describe it, if we were to follow prior practice, would already have depended on $S_1$. 
	
	In order to meaningfully measure the integration ability of two sentences across examples in the presence of the gradient update mechanism, 
	we 
	introduce 
	an auxiliary\textit{ sentence association layer} with new variables: $\aaa,\bb\in\R^{d_x}$, which explicitly associates each employed vocabulary embedding vector with the sentence index $s\in\{1,2\}$ that invoked its usage:
	\ifdefined\SQUEEZE \vspace{-1mm} \fi
	\begin{equation}\label{eq:sent_ass_layer}
	Z^s\left(E_{M^{\textrm{V}}}(w^j_s)\right) := \begin{cases}
	E_{M^{\textrm{V}}}(w^j_s)\odot \aaa \qquad\qquad\qquad  \textrm{if}~~ s=1\\
	E_{M^{\textrm{V}}}(w^j_s)\odot \bb \qquad\qquad\qquad \textrm{if}~~ s=2
	\end{cases}
	\end{equation}
	\ifdefined\SQUEEZE \vspace{-3mm} \fi
	
	where $\odot$ denotes element-wise multiplication.
	We define the sentence association operation over the analyzed representations, denoted
	$\mathcal{Z}_y\left(\aaa,\bb\right)$ with $y\in\{\y^{i,L,d_x}_{\textrm{in-context}}\left(S_1,S_2\right),\y^{i,L,d_x,\eta}_{\textrm{sequential}}\left(S_1,S_2\right)\}$ (eqs.~\ref{eq:in_context} or~\ref{eq:seq_1}), 
	to be the application of the sentence association layer of eq.~\ref{eq:sent_ass_layer} to all uses of the input embedding layer during the computation of $y$. 
	Meaning, for both mechanisms, that chosen word embeddings are marked with the identity of the sentence that invoked them.  
	Finally, we define the following specialization of the separation rank measure to our setting, referred to as the \textit{sequential separation rank} of $y\in\{\y^{i,L,d_x}_{\textrm{in-context}}\left(S_1,S_2\right),\y^{i,L,d_x,\eta}_{\textrm{sequential}}\left(S_1,S_2\right)\}$:
	\ifdefined\SQUEEZE \vspace{-2mm} \fi
	\begin{align}  \label{eq:seq_sep} \mathrm{seq}\textrm{-}&\mathrm{sep}(y):= \sep{\aaa,\bb}{\mathcal{Z}_y}\\\nonumber
	\varepsilon\textrm{-}\mathrm{seq}\textrm{-}&\mathrm{sep}(y):= \varepsilon\textrm{-}\sep{\aaa,\bb}{\mathcal{Z}_y}
	\end{align}
	\ifdefined\SQUEEZE \vspace{-5mm} \fi
	
	Clearly, when the introduced variables are vectors of $\mathbf{1}$, the auxiliary layer in eq. 8 is the identity operation and so $\mathcal{Z}_y(\mathbf{1},\mathbf{1})=y$ for both representations.
	More deeply, our expressivity questions query the ability of the in-context and sequential mechanisms to integrate two sets of variables, and $\mathcal{Z}_y$ captures the  essence of this ability by explicating where each set enters the computation. 
	
	In the next subsection, we show that for the in-context case, analyzed in prior work, the introduced measure of the sequential separation rank is asymptotically equal to the previously employed measure of separation \wrt~a partition of the input word embeddings~\citep{levine2020limits}. Thus, the properties of the existing framework are unchanged under the new definition.
	At the same time, for the sequential case brought forth in this paper, the sequential separation rank considers both the effect of $S_1$ on the gradient-updated word embedding and the introduction of $S_2$ into the computation.\footnote{To see this, note that for $s=2$ the operation in  eq.~\ref{eq:sent_ass_layer} includes both $M^{\textrm{V}}_{t+1}\left(\aaa\right)$ and variables from $\bb$.}
	In the following section, we make use of both  extensions to the separation rank in eqs.~\ref{eq:e_sep} and~\ref{eq:seq_sep}
	in order to establish the in-context bias.
	
	\ifdefined\SQUEEZE \vspace{-2mm} \fi
	\subsection{The expressive advantage of in-context learning }\label{sec:2:3}
	\ifdefined\SQUEEZE \vspace{-2mm} \fi
	We show below that the function computed by a self-attention based NLM when inserting sentences $S_1$ and $S_2$ together in its input (the {in-context representation})
	can model more elaborate dependencies between $S_1$ and $S_2$ than the function attained when showing $S_1$ in the input, modifying the network's weights according to its loss, and then showing $S_2$ in a subsequent input (the {sequential representation}).
	We begin by stating the following corollary, following from theorem~2 in \citet{levine2020limits} and proposition 1 in appendix \ref{corr_1_proof}, which upper bounds the sequential separation rank of the \textit{in-context} representation:
	\begin{corollary}\label{corr:1}
		Let $y^{(p,i),L,d_x}_\text{\emph{in-context}}$ be the $p\in[d_x]$ entry of the analyzed in-context representation defined in eq.~\ref{eq:in_context}. Assume that $L>\log_{3}d_x$. Then ($\tilde{O}$ notation omits log terms: $\log d_x,\log L,\log H$):
		\ifdefined\SQUEEZE \vspace{-3mm} \fi
		\begin{align}\label{eq:upper_bound_in_context}
		\log\left[\mathrm{seq}\textrm{-}\mathrm{sep}\left(y^{(p,i),L,d_x}_\text{\emph{in-context}}\right)\right]&=
		\tilde{O}\left(L\cdot d_x\right)
		\end{align}
		\ifdefined\SQUEEZE \vspace{-5mm} \fi
	\end{corollary}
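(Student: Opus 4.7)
The plan is to deduce Corollary 1 via a two-step reduction. The first step, carried out by Proposition 1 in the appendix, translates the sequential separation rank of the in-context representation into the classical separation rank of the underlying Transformer function with respect to the partition of input positions induced by $(S_1,S_2)$. The second step applies Theorem 2 of \citet{levine2020limits}, which bounds this classical separation rank by $\tilde{O}(L\cdot d_x)$ whenever $L>\log_{3}d_x$.

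For the first step, the argument I would follow is this. In the in-context representation of eq.~\ref{eq:in_context}, the embedding matrix $M^{\textrm{V}}$ is fixed throughout the single forward pass that processes both sentences, so for each word $w_s^j$ the sentence association layer of eq.~\ref{eq:sent_ass_layer} acts as the pointwise map $E_{M^{\textrm{V}}}(w_s^j)\mapsto E_{M^{\textrm{V}}}(w_s^j)\odot\aaa$ if $s=1$ and $\mapsto E_{M^{\textrm{V}}}(w_s^j)\odot\bb$ if $s=2$. Consequently,
\begin{equation*}
\mathcal{Z}_{y}(\aaa,\bb) = \left[g_{\mathcal{W}}^{i,L,d_x}\bigl(E_{M^{\textrm{V}}}(w_1^1)\odot\aaa,\ldots,E_{M^{\textrm{V}}}(w_1^N)\odot\aaa,\,E_{M^{\textrm{V}}}(w_2^1)\odot\bb,\ldots,E_{M^{\textrm{V}}}(w_2^N)\odot\bb\bigr)\right]_p,
\end{equation*}
in which the first $N$ arguments depend only on $\aaa$ and the last $N$ only on $\bb$. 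Any decomposition witnessing the classical separation rank of $\left[g_{\mathcal{W}}^{i,L,d_x}(\cdot)\right]_p$ with respect to the partition of its input positions into the first and last $N$ slots therefore lifts verbatim to a separable decomposition of $\mathcal{Z}_y(\aaa,\bb)$ with respect to $(\aaa,\bb)$, giving $\mathrm{seq}\text{-}\mathrm{sep}\bigl(y^{(p,i),L,d_x}_{\textrm{in-context}}\bigr)\leq\sep{S_1,S_2}{[g_{\mathcal{W}}^{i,L,d_x}]_p}$.

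For the second step, Theorem 2 of \citet{levine2020limits} applies directly to the right-hand side and yields $\log\sep{S_1,S_2}{[g_{\mathcal{W}}^{i,L,d_x}]_p}=\tilde{O}(Ld_x)$ under the hypothesized depth condition $L>\log_{3}d_x$. The only substantive new content is the reduction that Proposition 1 makes precise, and the main thing I expect to be delicate there is verifying that the element-wise product with $\aaa$ or $\bb$ commutes cleanly with the recursive self-attention layer of eq.~\ref{eq:our_layer}: because no gradient update occurs inside $\mathcal{Z}_y$ in the in-context setting, the parameters $W^{\textrm{Q},l,h},W^{\textrm{K},l,h},W^{\textrm{V},l,h},W^{\textrm{O},l,h}$ are constants independent of $(\aaa,\bb)$, so the $(\aaa,\bb)$-separability structure propagates through every layer without mixing. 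I do not expect any genuinely new analytic difficulty beyond this bookkeeping — the bound itself is inherited from the pre-existing architectural analysis.
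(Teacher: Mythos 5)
Your proof is essentially the paper's: Proposition~1 in the appendix carries out exactly the lifting you describe --- take an $R$-term classical decomposition of $\mathbf{g}_{\mathcal{W}}^{i,L,d_x}$ with respect to the first-$N$/last-$N$ position partition, substitute the $\aaa$- and $\bb$-weighted embeddings as arguments, and read off an $R$-term $(\aaa,\bb)$-separable decomposition of $\mathcal{Z}_y$ --- and the $\tilde{O}(L\cdot d_x)$ bound on the right-hand side is imported directly from Theorem~2 of \citet{levine2020limits}. The ``delicate'' commutation issue you flag at the end is not actually needed: the reduction substitutes concrete $\aaa$-only and $\bb$-only arguments into a pre-existing decomposition of $\mathbf{g}_{\mathcal{W}}^{i,L,d_x}$, so there is nothing to track through the layers.
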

	\ifdefined\SQUEEZE \vspace{-2mm} \fi
	
	However, the $\varepsilon$-separation rank of the \textit{sequential} representation is upper bounded by a lower term:
	\begin{theorem}\label{theorem:upper_bounds}
		(See proof in appendix \ref{upper_bound}). Let $y^{(p,i),L,d_x,\eta}_\text{\emph{sequential}}$ be the $p\in[d_x]$ entry of the analyzed sequential representation defined in eq.~\ref{eq:seq_1}.
		Assume that all learned parameters and all gradients are bounded: $\forall \theta\in\{\mathcal{W}, M^{\textrm{V}}\}:0<\Lambda_{\min}\le\abs{\theta},\abs{\nicefrac{\partial\mathcal{L}\left(S_1\right)}{\partial \theta}}\le\Lambda_{\max}$,\footnote{The upper boundedness assumption resembles practices of gradient clipping and weight decay, and the lower boundedness assumption resembles finite precision.}
		$N<d_x$, and that $L>\log_3d_x$.
		Then, $\forall\varepsilon>0$:
		\ifdefined\SQUEEZE \vspace{-2mm} \fi
		\begin{align}\label{eq:upper_bound_sequential}
		\log\left[
		\varepsilon\textrm{-}\mathrm{seq}\textrm{-}\mathrm{sep}
		\left(y^{(p,i),L,d_x,\eta}_\text{\emph{sequential}}\right)\right]&=
		\tilde{O}\left([L+0.5\log_3(\eta)]\cdot d_x\right)~~
		\end{align}	
		\ifdefined\SQUEEZE \vspace{-6mm} \fi
	\end{theorem}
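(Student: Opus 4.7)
The plan is to exploit the fact that in the sequential setting, the weights used to process $S_2$ differ from $\theta_t\in\{\mathcal{W}_t,M^{\textrm{V}}_t\}$ only through the $\eta$-scaled perturbation $\delta\theta=-\eta\,\partial\mathcal{L}(S_1)/\partial\theta_t$, so the entire $S_1$-dependence of $y^{i,L,d_x,\eta}_\text{sequential}$ enters via this $\eta$-suppressed perturbation. Since a depth-$L$ Transformer (eq.~\ref{eq:our_layer}) is polynomial in its weights of degree bounded by a fixed function of $L$, I would Taylor-expand the output around $\theta_t$:
\[
y^{i,L,d_x,\eta}_\text{sequential}(S_1,S_2) \;=\; \sum_{k=0}^{K_{\max}} \eta^k\, y_k(S_1,S_2),
\]
where each coefficient $y_k$ is a contraction of the $k$-th mixed partial derivative of the Transformer with respect to $\theta$ (a function of $S_2$ alone, evaluated at $\theta_t$) against the $k$-th tensor power of $\partial\mathcal{L}(S_1)/\partial\theta_t$ (a function of $S_1$ alone). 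Under the sentence-association map $\mathcal{Z}$ of eq.~\ref{eq:sent_ass_layer}, each $y_k$ therefore decomposes as a sum of separable products in $(\aaa,\bb)$, giving an immediate handle on its sequential separation rank.

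Next, using the boundedness assumptions $\Lambda_{\min}\le|\theta|,|\partial\mathcal{L}/\partial\theta|\le\Lambda_{\max}$ and the fixed polynomial degree of the Transformer in its weights, bound the magnitude of each Taylor coefficient by $|y_k|\le C^k$ for some $C=\mathrm{poly}(L,d_x,N,H,\Lambda_{\max})$. Truncating the Taylor series at order $K^\ast$ then produces an $\varepsilon$-approximant $\tilde y=\sum_{k<K^\ast}\eta^k y_k$ provided $(\eta C)^{K^\ast}\le\varepsilon$. This lets me replace the $\varepsilon$-sequential-separation rank of $y^{i,L,d_x,\eta}_\text{sequential}$ by the ordinary $\mathrm{seq}\textrm{-}\mathrm{sep}$ of $\tilde y$, which is bounded by $\sum_{k<K^\ast}\mathrm{seq}\textrm{-}\mathrm{sep}(y_k)$.

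For each $y_k$, I would adapt the tensor-network bound behind Corollary~\ref{corr:1} (from \cite{levine2020limits}) to the contracted structure of $y_k$: the $k$ gradient insertions can be interpreted as ``consuming'' $k$ units of the Transformer's effective depth budget, since each such insertion already reuses the trilinear self-attention structure of eq.~\ref{eq:our_layer}. This should yield $\log[\mathrm{seq}\textrm{-}\mathrm{sep}(y_k)]=\tilde O((L-\alpha k)\cdot d_x)$ for an explicit $\alpha$ tied to the base-3 polynomial degree per layer. Balancing this $k$-dependent per-term bound against the truncation condition $(\eta C)^{K^\ast}\le\varepsilon$ and optimizing over $K^\ast$ should deliver $\tilde O([L+0.5\log_3\eta]\cdot d_x)$ after absorbing logarithmic factors.

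The central obstacle is the precise combinatorial accounting of step three: proving that each additional gradient insertion really does shrink the per-layer separation-rank bound at the trilinear rate, so that the base-$3$ logarithm survives into the final bound, and showing that this shrinkage aligns exactly with the $\eta^k$-suppression to produce the sharp coefficient $1/2$. A secondary subtlety is confirming that the two-sided bound $\Lambda_{\min}\le|\theta|,|\partial\mathcal{L}/\partial\theta|\le\Lambda_{\max}$ together with $N<d_x$ prevents degenerate gradient configurations that would either inflate the constant $C$ or degrade the per-term separation rank estimate; this is precisely where the finite-precision flavor of the $\varepsilon$-separation rank from eq.~\ref{eq:e_sep} becomes indispensable, since without it the raw separation rank of the full Taylor series would not reflect the smallness of its $\eta^k$-suppressed tail.
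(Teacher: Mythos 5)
The proposal shares the paper's high-level architecture---decompose the sequential representation by powers of $\eta$, invoke the boundedness assumptions, discard $\eta$-suppressed terms to obtain an $\varepsilon$-approximant, and count what remains---but the two crucial quantitative steps go wrong, and in each case in a direction that would break the argument rather than just leave it incomplete.

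First, the truncation strategy in your step two does not survive contact with the size of the Taylor coefficients. The $k$-th coefficient $y_k$ aggregates all monomials in the degree-$(2C(L)+1)\approx 3^L$ polynomial that pick up exactly $k$ gradient factors, so it carries a $\binom{2C(L)+1}{k}$ multiplicity. This means the natural bound is $|y_k|\lesssim\binom{3^L}{k}\Lambda_{\max}^{\mathrm{poly}(L)}$, \emph{not} $C^k$ for a $C=\mathrm{poly}(L,d_x,\ldots)$; effectively $C\approx 3^L$. In the regime of interest (say $\eta\sim 10^{-5}$ and $L\gtrsim 12$, so $\eta\cdot 3^L\gg 1$) the weighted terms $\eta^k\binom{3^L}{k}$ are not monotone decreasing in $k$: they peak at $k\approx\eta\cdot 3^L/(1+\eta)$, which can be large. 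A naive truncation at a small $K^\ast$ solving $(\eta C)^{K^\ast}\le\varepsilon$ therefore discards the dominant part of the sum. The paper's Step~2 and Lemma~7 handle exactly this: they locate the peak of the $\eta^{N_A}$-weighted multinomial coefficient at a nontrivial $N_A\approx\eta(2C(L)+1)/(1+\eta)$ and keep only indices whose weight exceeds a threshold $T$, rather than truncating at low degree. Note also that the paper's appendix version of the theorem adds the assumption $\mathcal{Z}^{+}<M$ (boundedness of the sum of absolute values of summands), which is what licenses passing from ``small relative weight'' to ``can be dropped up to additive $\varepsilon$''; your proposal does not invoke an analogue of this.

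Second, and more fundamentally, your step three asserts $\log[\mathrm{seq}\textrm{-}\mathrm{sep}(y_k)]=\tilde O((L-\alpha k)\,d_x)$, i.e.\ that each additional gradient insertion \emph{reduces} the per-coefficient separation rank. The opposite is true, at least initially: $y_0$ is a pure function of $\bb$ and has sequential separation rank $1$, and the number of distinct $\aaa$-monomials (hence the per-$k$ separable-term count) grows with $k$---roughly like $\multiset{d_x}{k}$. So per-coefficient separation rank increases with $k$ up to the balance point, it does not decrease. Because both this direction and the truncation strategy are wrong, optimizing over $K^\ast$ does not produce the theorem's bound; the argument has to be reorganized around counting non-negligible monomials rather than summing per-order ranks. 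The paper's combinatorics does precisely that: it characterizes the set of non-negligible multi-indices as (roughly) an $\ell^2$-ball of radius $\sqrt{\eta\cdot 3^L\ln(1/s)}$ in dimension $d_x$ (Lemma~7), counts lattice points in that ball (Lemma~5), and the radius-$\sqrt{\eta\,\cdot}$ scaling is the source of the coefficient $\tfrac12$ in $0.5\log_3\eta$. A naive monomial count over all degrees $\le\eta\cdot 3^L$ would only yield $\log_3\eta$, so the square-root (central-limit-type) concentration is essential, and it does not emerge from a per-$k$ accounting of the form you propose. You correctly identify the combinatorial accounting as the obstacle, but the guess for how it resolves points in the wrong direction.
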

	
	Therefore, a gap between upper bounds on the ability to model dependencies between $S_1$ and $S_2$ is indicated. 
	Since the learning rate $\eta$ is a small term, its log is negative and the gap is in favor of the in-context representation. 
	The following theorem guarantees that this gap is meaningful, by showing that the higher upper bound (of the in-context case) is tight in terms of effective rank:   
	
	\begin{theorem}\label{theorem:tight}
		(See proof in appendix \ref{lower_bound}). For $y^{(p,i),L,d_x}_\text{\emph{in-context}}$ as defined in corollary~\ref{corr:1}, there exists an assignment of the network weights for which the following holds:
		\ifdefined\SQUEEZE \vspace{-2mm} \fi
		\begin{equation}\label{eq:lower_bound}
		\log\left[\varepsilon\textrm{-}\mathrm{seq}\textrm{-}\mathrm{sep}
		\left(y^{(p,i),L,d_x}_\text{\emph{in-context}}\right)\right]=
		\tilde{\Omega}\left(L\cdot d_x\right)
		\end{equation}
		
		\ifdefined\SQUEEZE \vspace{-7mm} \fi
		where $\varepsilon=\tilde{O}\left(\binom{3^L+d_x-1}{3^L}^{-1}\right)$.
	\end{theorem}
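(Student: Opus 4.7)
The plan is to exhibit a concrete weight assignment under which $\mathcal{Z}_{y^{(p,i),L,d_x}_{\text{in-context}}}(\aaa,\bb)$ realizes a polynomial in $(\aaa,\bb)$ whose coefficient matrix has rank $R^{\star} := \binom{3^L + d_x - 1}{3^L}$ and is well conditioned, and then convert this rank via Eckart--Young into a lower bound on the $\varepsilon$-sequential separation rank. Since $\log R^{\star} = \tilde{\Omega}(L \cdot d_x)$ by a standard binomial estimate, this matches (up to logarithmic factors) the upper bound of Corollary~\ref{corr:1}.

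\textbf{Steps 1 and 2 (polynomial realization and rank--matrix reduction).} Building on the weight constructions of \citet{levine2020limits} and \citet{wies2021vocabulary} for lower bounding self-attention separation rank, I would choose $W^{\textrm{K},l,h},W^{\textrm{Q},l,h},W^{\textrm{V},l,h},W^{\textrm{O},l,h}$ and $M^{\textrm{V}}$ so that the sentence-association layer (eq.~\ref{eq:sent_ass_layer}) pushes $\aaa$ and $\bb$ into the network multiplicatively, and the $L$ stacked attention blocks realize monomials $\aaa^\alpha \bb^\beta$ of total degree up to $3^L$ (each layer triples the polynomial degree via the product of query, key, and value). The target is a construction whose coefficient matrix $C = (c_{\alpha,\beta})$, indexed by multi-indices $\alpha,\beta \in \mathbb{N}^{d_x}$, behaves like a scaled identity on a carefully chosen index set of size $R^{\star}$. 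Since the monomials $\{\aaa^\alpha\}$ and $\{\bb^\beta\}$ are linearly independent families, for any polynomial $P(\aaa,\bb)=\sum c_{\alpha,\beta}\aaa^\alpha\bb^\beta$ one has $\sep{\aaa,\bb}{P} = \rank{C}$, so the construction gives $\mathrm{seq}\textrm{-}\mathrm{sep}\bigl(y^{(p,i),L,d_x}_{\text{in-context}}\bigr) = R^{\star}$.

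\textbf{Step 3 ($\varepsilon$-rank via Eckart--Young).} If some approximant $\tilde{y}$ has separation rank $R' < R^{\star}$, its coefficient matrix $\tilde{C}$ has rank at most $R'$, so by Eckart--Young $\|C - \tilde{C}\|_F \ge \sigma_{R^{\star}}(C)$. By selecting the construction so that all non-zero entries of $C$ have comparable magnitude (i.e., $C$ is a scaled restriction of a permutation to a block of size $R^{\star}$), the smallest non-zero singular value is explicit and of order $1$. Evaluating $P$ and $\tilde{P}$ on a product grid of $(\aaa,\bb)$ values on which the corresponding monomial Vandermonde-type matrix is well conditioned (for instance Chebyshev-like nodes) translates this spectral gap into a uniform lower bound on $\sup_x |P(x) - \tilde{P}(x)|$ of order $\sigma_{R^{\star}}(C)$ divided by a factor polynomial in $R^{\star}$, yielding the bound with $\varepsilon = \tilde{O}\bigl((R^{\star})^{-1}\bigr)$ as stated.

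\textbf{Main obstacle.} The delicate part is Step~3. One must simultaneously (i) arrange the weight construction so that the realized coefficient matrix $C$ is not merely high-rank but also well conditioned on its support, which forces a structured identity-like realization rather than an arbitrary dense polynomial and so constrains which weight assignments are usable from the prior architectural lower-bound machinery; and (ii) pass from a Frobenius/spectral gap on $C$ to a uniform-norm gap on $P$ over a bounded domain of $(\aaa,\bb)$, which requires a careful choice of evaluation grid making the monomial basis matrix well conditioned. The factor $\binom{3^L+d_x-1}{3^L}^{-1}$ appearing in the statement is precisely the dimensional price paid for this Frobenius-to-uniform conversion, reflecting the size of the ambient polynomial space on which the approximation argument must be carried out.
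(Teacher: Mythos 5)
The central difficulty is in your Step~3, and it is more than a technical obstacle to be polished later -- the argument as sketched does not go through. The definition of the $\varepsilon$-separation rank (eq.~\ref{eq:e_sep}) allows the approximant $\tilde{y}$ to be an \emph{arbitrary} function with the stated separation rank; it need not be a polynomial, and hence need not possess a coefficient matrix $\tilde{C}$ on which Eckart--Young could act. Even if you restricted attention to polynomial approximants, you would still have to transfer a Frobenius gap on \emph{coefficients} to a uniform gap on \emph{values} over the domain of $(\aaa,\bb)$, which for degree-$3^L$ multivariate polynomials is a severe Vandermonde conditioning problem that you explicitly leave open. The paper circumvents both issues at once by lower-bounding the $\varepsilon$-rank of a \emph{grid tensor matricization}, i.e.\ the matrix of function \emph{values} $\bigl[\mathcal{Z}_y(\aaa^{(i)},\bb^{(j)})\bigr]_{i,j}$ at chosen template vectors. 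By claim~1 of \citet{levine2020limits}, extended here in Lemma~\ref{lemma:grid_sep_deep}, the rank of any such evaluation submatrix lower-bounds the separation rank for \emph{any} function, and an $\varepsilon$-uniform perturbation of $y$ perturbs that matrix entrywise by at most $\varepsilon$. Hence $\varepsilon$-seq-sep is bounded below by the $\varepsilon$-rank of this submatrix directly, with no coefficient matrix and no Frobenius-to-uniform conversion.

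Your Steps~1--2 are also more optimistic about the architecture than the available machinery supports. You ask for a weight assignment whose coefficient matrix is a scaled identity on a block of size $\binom{3^L+d_x-1}{3^L}$, but the self-attention constructions of \citet{levine2020limits,wies2021vocabulary} -- instantiated here in Corollary~\ref{corollary:sufficient_assigmnet_layer_1} and Lemma~\ref{lemma:vocab_sequential_assigmnet} -- realize, on an appropriate sub-grid of evaluations, a Hadamard power of a Gram matrix, $M=(BB^{\top})^{\odot\lambda}$ with $\lambda=3^{L-2}$ and $B$ having unit-norm rows, which is far from identity-like. The paper extracts the lower bound from exactly this structure via a trace argument rather than an explicit smallest singular value: since $M$ is symmetric with $1$s on its diagonal, $\trace{M}=n$ forces at least $(n-1)/\norm{M}_F$ eigenvalues to exceed $1/n$ (Lemma~\ref{lemma:number_of_high_eigenvalues_lower_bound}); the perturbation Lemma~\ref{lemma:lambda_k_lower_bound} then keeps the $\varepsilon$-rank that large for $\varepsilon$ inverse-polynomial in $n$; and a probabilistic-method estimate of $\EE\norm{(BB^{\top})^{\odot\lambda}}_F$ over random rows of $B$ on the sphere (Lemma~\ref{lemma:expectations_of_frobenius_norm}) shows a good $B$ exists. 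The binomial $\binom{3^L+d_x-1}{3^L}$ in the theorem is thus the size $n$ of this constructed submatrix, not the price of a Frobenius-to-uniform conversion as you conjecture. Unless you can both restrict $\tilde{y}$ to polynomials and exhibit a well-conditioned identity-like realization within the self-attention architecture, neither of which the cited constructions provide, your route does not close.
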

	\ifdefined\SQUEEZE \vspace{-3mm} \fi
	Notably, corollary~\ref{corr:1} and theorem~\ref{theorem:tight} show that for the in-context case, the sequential separation rank asymptotically equals the regular separation rank, validating the relevance of this measure. 
	
		We now provide a \underline{high level proof sketch} that captures the manner in which the theoretical framework of sections 2.1 and 2.2 is used for establishing the above gap (full proof in the appendix).
	For the \textbf{in-context} case, notice that each self-attention layer, defined in eq.~\ref{eq:our_layer}, is a degree $3$ polynomial over its $2N\cdot d_x$ inputs, rendering the whole network a degree $3^L$ polynomial.
	We write this polynomial as a sum over many monomials, and by definition, the separation rank of any monomial composing the polynomial is $1$. Since the separation rank of a sum of functions is upper bounded by the sum of their separation ranks, we upper bound the separation rank by the number of these monomials, yielding eq.~\ref{eq:upper_bound_in_context}.
		The main difference in the \textbf{sequential representation} case is that the $S_1$ variables affect the computation only via the gradient, so their impact is expected to be limited.
		However, considering that $S_2$ first encounters gradient updated vocabulary matrix entries $\left(m^{\textrm{V}}\right)_{t+1}=\left(m^{\textrm{V}}\right)_{t}-\eta\nicefrac{\partial\mathcal{L}\left(S_1\right)}{\partial \left(m^{\textrm{V}}\right)_{t}}$, it appears that both $S_1$ and $S_2$ variables enter the self-attention stack via its input, similarly to the in-context case. So the integration between $S_1$ and $S_2$ occurs right from the start, and indeed we show that the \textit{separation rank} of both representations is similar. 
		However, since any function of $S_1$ is accompanied by the learning-rate $\eta$, the monomials for which there are many  $S_1$ variables will be multiplied by high powers  of $\eta$. This causes many monomials to be negligibly small, and accordingly not to contribute to the  \textit{$\varepsilon$-separation rank}. 
		By combinatorial considerations we show that the number of monomials that are not attenuated by $\eta$  (have sufficiently large magnitude) yields eq.~\ref{eq:upper_bound_sequential}. \qedsymbol
	
	The above theorems establish that from an expressivity perspective, the small magnitude of commonly employed learning-rates hinders the ability to integrate information across different training examples.
	Specifically, the established gap 
	implies that the power of the joint representation of two sentences shown in different training examples is upper bounded by that of a network shallower by $0.5\log_3(\eta^{-1})$ layers that has seen them in the same context.
	Common learning-rate values are on the order of $\eta\in[10^{-6},10^{-4}]$, implying a deficit of $\sim6$ layers in the sequential case.
	As shown in in~\cite{levine2020limits,tay2021scale}, in many practical regimes of network size depth is crucial for expressivity, reinforcing the implications of this gap. 
	
	The weaker upper bound, of the sequential case, is not guaranteed to be tight. 
	This means that theoretically, the sequential representation may in fact be much weaker than what we have proven, \eg, that showing two sentences in the same context yields a representation that cannot be matched merely by showing them in separate contexts and adding a realistic number of layers. 
	However, \cite{roberts2020much} show evidence supporting our indicated link between architectural parameters and the in-context bias. 
	They show that when performing open domain question answering tasks (their defined ``closed book" setting), a large T5 model that sees only the question performs comparably to smaller models that are allowed to attend to the documents that contain the answer.
	This directly implies a certain strength of the sequential mechanism, namely, that information which was seen during training can be accessed via the weights when the model is realistically stronger, as implied by our bounds. Notably, the large T5 model is 2-4 times the depth of the contrasted smaller models  ($48$ versus $12$-$24$ layers), suggesting that the upper bound can be tightened to a fraction of $L$, or that factors that are beyond expressivity also contribute to the in-context bias (\eg, optimization, generalization). Investigation of these aspects is left for future work.

	\ifdefined\SQUEEZE \vspace{-4mm} \fi
	\section{kNN based pretraining example design}\label{sec:3}
	\ifdefined\SQUEEZE \vspace{-3mm} \fi
	Our theoretical analysis quantifies the relation between the small magnitude of the learning rate, and the deficiency in the ability to model dependencies between different training examples. 
	Clearly, small learning-rates are critical for optimization purposes, so the formalized phenomenon should not be solved via high learning-rates during training. 
	Instead, our analysis makes it clear that if correlations between specific sentences are important for a given task, appending them in-context yields better representations for the task. 
	Below, we describe two controlled experiments that demonstrate the importance of this indicated ``pretraining example design" degree of freedom.
	In both experiments, correlated sentences are identified via kNN search in their RoBERTa-large sentence representation space~\citep{reimers2019sentence}, performed using the FAISS library~\citep{johnson2019billion}.

	\ifdefined\SQUEEZE \vspace{-3mm} \fi
	\subsection{kNN Task Adaptive PreTraining}\label{sec:3:1}
	\ifdefined\SQUEEZE \vspace{-3mm} \fi
	The Task Adaptive PreTraining (TAPT) method, in which an NLM pretrains on the training set of an NLU task, leads to impressive gains~\citep{gururangan2020don}. 
	Notably, TAPT is most effective after the regular pretraining stage on a general corpus. 
	This implies that during TAPT, the model generates improved representations by integrating the task related text with the knowledge stored in its weights from the preceding general pretraining phase.
	Under this premise, we postulated that performance will improve if we make relevant sentences from the general corpus more available to the model during the TAPT phase. 
	According to the above analysis, a simple and effective way to bias the model towards representing desired correlations between sentences is to append them in context.
	
	We thus propose the \textit{kNN-TAPT} phase, in which the training examples are composed of task examples, concatenated with their general corpus neighbors in embedding space. 
	We applied kNN-TAPT on the SentEval sentence similarity tasks. Showing similar sentences from Wikipedia is expected to be particularly useful on these tasks, so this is a good experimentation ground to search for effects of the in-context bias.
	For each SentEval example, we searched over ~$100$M Wikipedia sentences and appended in-context neighbors that have embeddings with over $0.8$ cosine similarity to the SentEval example embedding, with a special token inserted between different sentences.
	We continued until finding no more neighbors or reaching a maximum of $256$ tokens in the RoBERTa vocabulary~\citep{liu2019roberta}. 
	This search yielded $170$K examples, over which we continued training a pretrained RoBERTa-base model for $5$ epochs, using the first epoch for learning-rate warmup and examining peak learning rates of $\{1,3,5,7\}\cdot10^{-5}$. See appendix \ref{exp_details} for implementation details.
	
	Table~\ref{tab:1}, shows zero-shot SentEval sentence similarity scores, attained by using the average word embedding of an inserted sentence as its examined sentence representation (shown by~\cite{reimers2019sentence} to be most meaningful in zero shot).
	All models were trained according to the above prescription, besides the baseline RoBERTa which was simply evaluated.
	kNN-TAPT improves over regular TAPT, by over $1$ point on average, implying that the Wikipedia neighbors are indeed useful to the TAPT stage.
	We compared $4$ kNN-TAPT variants  as an ablations study. Importantly all variants labeled with kNN-TAPT  train on the same training data during the TAPT stage -- the SentEval sentence similarity tasks training sets and their Wikipedia nearest neighbors, and differ only in the arrangement of the data into training examples. 
	The ``neighbors" flag relates a SentEval example to its actual neighbors from the kNN search, while the ``random" flag relates it to random Wikipedia sentences from the overall neighbors pool attained in the search.  
	The ``in batch" flag implies that related sentences were shown in the same batch, where  every training example includes only one sentence from either SentEval or Wikipedia. 
	In contrast, the ``in context" flag implies that related sentences were shown in the same training example. 
	
	The weakness of ``neighbors, in-batch" implies that the a-priori plausible approach of biasing the model to learn from these Wikipedia neighbors via placing them in the same batch is not nearly as effective as the theoretically motivated in-context approach.
	Leading sentence representations employ in-batch techniques (see for example the contrasive setting of ~\cite{gao2021simcse}), and this signal strongly suggests developing in-context parallels.  
	The fact that the original TAPT scheme outperforms the in-batch approaches implies that including the Wikipedia sentences in separate training examples is harmful. We postulate that this is because training examples that have only Wikipedia sentences actually dilute the original TAPT signal. 
	Indeed, by this view, the reason that ``random, in-context" performs comparably to TAPT, is that it does not dilute the original TAPT signal -- every training example includes a SentEval example. 
	Overall, the clear advantage of the ``neighbors, in-context" kNN-TAPT variant encourages leveraging the in-context bias for TAPT in further tasks.

	\begin{table}
		\begin{center}
			\resizebox{\linewidth}{!}{
				\begin{tabular}{lcccccccc}
					\toprule
					& \textbf{STS12} & \textbf{STS13 }& \textbf{STS14} & \textbf{STS15} & \textbf{STS16} & \textbf{STS-B} & \textbf{SICK-R} & \textbf{Avg.}\\
					\toprule
					Basline Roberta Model & 32.1  & 56.3  & 45.2  & 61.3  & 62.0  & 55.4         & 62.0            & 53.5 \\
					TAPT                  & 43.0  & 62.2  & 51.6  &\textbf{ 70.6}  & 64.9  & 63.0         & 63.5            & 59.8 \\
					\midrule
					kNN-TAPT (random, in-batch)      & 40.2  & 62.7  & 51.9  & 64.9  & 62.1  & 61.5         & 65.4            & 58.4 \\
					kNN-TAPT (neighbors, in-batch)  & 40.8  & 62.4  & \textbf{53.1}  & 66.1  & 63.0  & 61.3         & 65.2            & 58.8 \\
					kNN-TAPT (random, in-context)  &44.62 &	62.64 &	51.4	&  65.28	& 64.93 &	64.31 &	66.96 &	60.0\\
					kNN-TAPT (neighbors, in-context)   & \textbf{44.9  }&\textbf{ 63.4}  & 52.1  & {66.2 } & \textbf{65.3 } &\textbf{ 66.5}         & \textbf{68.3}            & \textbf{61.0}	\\
					\bottomrule
				\end{tabular}
			}
			\vspace{-3mm}
			\caption{kNN-TAPT, which augments the Task Adaptive PreTraining (TAPT) setting of~\cite{gururangan2020don}, harnesses the in-context bias and improves SentEval sentence similarity scores.  \label{tab:1} }
			\vspace{-3.5mm}
		\end{center}
		\ifdefined\SQUEEZE \vspace{-5mm} \fi
	\end{table}

	\ifdefined\SQUEEZE \vspace{-5mm} \fi
	\subsection{kNN Pretraining}\label{sec:3:2}
	\ifdefined\SQUEEZE \vspace{-3mm} \fi

	We extended the above to more general \textit{kNN-Pretraining}, designing pretraining examples with related non-neighboring sentences given only the general pretraining corpus.
	kNN-Pretraining is also motivated by the kNN-LM results of
	~\cite{khandelwal2019generalization}, who show significant benefits of using nearest neighbors in representation space at \textit{inference} time. Their results exemplify the potential impact of integrating cross-corpus related examples; our kNN-Pretraining approach provably biases the model to learn these correlations at \textit{pretraining time},
	via the in-context bias.
	
	Specifically, we performed kNN search over Wikipedia sentences for every sentence in Wikipedia, and created each training example similarly to the protocol in the previous subsection.    
	During kNN-Pretraining, half of the batch contained regular pretraining examples and half contained the prepared kNN examples, in order to retain longer ranged LM abilities.
	To examine the effect of kNN-Pretraining, we pretrained GPT-base and GPT-medium ($110$M and $345$M parameters) architectures from scratch over Wikipedia in the regular pretraining scheme, and switched to kNN-Pretraining at two different points during pretraining ($200$K and $400$K). The training examples were of maximal size $256$, and the batch size was $128$ for the GPT-medium models and 256 for the GPT-base models.
	
	In order to directly probe the acquired ability to integrate non-neighboring sentences, we evaluated the resultant models on the very challenging setup of zero-shot closed-book open domain question answering.  In this setup, the unidirectional pretrained model decodes an answer conditioned on the given open ended question. We evaluated the models on questions from the Natural Questions (NQ) benchmark~\citep{kwiatkowski2019natural}, using the same phrasing employed in~\cite{GPT3}, and employing the standard ``open-domain”
	version as used e.g. by \cite{lee2019latent,asai2019learning,roberts2020much}.
	NQ is composed of questions that have answers within Wikipedia, our pretraining corpus. kNN pretraining can imtuitively improve in cases where the passage containing the answer has elucidating nearest neighbors from across wikipedia that would help the model to better internalize the answer, such that it is more accessible to the model in zero shot.
	As figure~\ref{fig:fig1} demonstrates, $3$ baseline models,  pretrained with the regular scheme, achieve very low F1$<10^{-3}$ scores on this task. In contrast, kNN-Pretraining shows a low-scoring but significant improvement.
	
	To increase the credibility of the signal,
	we evaluated our models on the first $20$K examples from the NQ training set (we tested zero-shot performance, so the training set was not used earlier).
	Indeed, the attained F1 scores are low, but they correspond to $100$s of correct answers that the kNN-Pretrained model provide after roughly $10$\% of the overall training time, versus much less in the $3$ randomly initialized baseline models. 
	Finally, we include in appendix \ref{knn_pret_results} NQ scores of models of different sizes when starting kNN-Pretraining at different checkpoints, and in appendix F zero-shot scores on several GLUE tasks, which demonstrate clear gains of kNN-Pretraining over the baselines.  
	
	\ifdefined\SQUEEZE \vspace{-6mm} \fi
	\section{Discussion}\label{sec:6}
	\ifdefined\SQUEEZE \vspace{-4mm} \fi
	Modern NLM pretraining schemes have tremendously advanced the natural language landscape, since they allowed powerful models to train on huge amounts of unlabeled text. 
	But NLMs are now challenged with tasks which require deeper and more nuanced understanding of text, and means of improving the basic pretraining process should be considered.
	For a given architecture, pretraining can be improved by adding more data or finding more sophisticated training objectives to apply over existing data. 
	In this paper we highlight a parallel path for improvement, which employs the same data and objective, but redistributes the available strength of the Transformer architecture such that important connections within the pretraining corpus are learned more effectively.
	Specifically, we highlight the bias of the trained NLM towards modeling dependencies between chunks of text that appeared within the same training examples. In current pretraining schemes, this means that dependencies between non-neighboring chunks of text are under-favored. If such dependencies matter for the task at hand, we suggest rearranging the data into corresponding training examples. 
	
	We formalize the above notion.	
	Our theoretical setup asks expressivity questions that pertain to the training set rather than to a single example. 
	We thus tie the
	construction of the training example with the available expressivity of the architecture: we prove that the connections that can be modeled between different training examples are bounded by the connections that can learned by a shallower and weaker architecture, if these examples were inserted within the same input.
	
	The advantage in including related text in the input of the NLM is noticed and leveraged in the empirical landscape.
	With that, it is clear that showing the model related data is meaningful even if it is in different training examples, and many leading methods elect to do just that.
	Our quantification of this trade-off is intended to aid informed decisions and highlight the expressivity advantage to be gained by smarter training example designs. 
	We follow up on these recommendations and demonstrate the immediately available gains to be achieved by designing training examples that include nearest neighbors in embedding space.
	This method can be enhanced, and other more explicit biases can be introduced. For example, multiple mentions of the same entity, event, or concept can be concatenated within the same training example.
	
	The gains achieved by using similarity in representation space indicate a path for self-improving representations, left for future work. After a first cycle of kNN-Pretraining, the representation is refined and applying a new kNN search over it can lead to more informative next round of kNN-Pretraining. This way, deeper insight can be elicited from a given pretraining corpus. 
	
	Lastly, while this paper focused on leveraging the identified in-context bias for pretraining, it can also be tied to recent successes of in-context inference methods.  
	From the in-context few-shot prompts of \citet{GPT3}, to in context augmentations such as in \cite{gao2020making,schick2020s} and many others, the benefits of biasing the prediction by appending text in-context are now widely established.
	The tools brought forth here can assist in clarifying the theoretical advantages of such practices. 
	Overall, our work aims
	to provide timely theoretical interpretations, to help guide
	the rapid empirical advances of our field.

	\bibliography{refs}
	\bibliographystyle{iclr2022_conference}
	\appendix
	\section{Proof of Corollary \ref{corr:1}}\label{corr_1_proof}
	\begin{proposition}
		Let $S_{1}=\left\{ w_{1}^{j}\right\} _{j=1}^{N}$ and $S_{2}=\left\{ w_{2}^{j}\right\} _{j=1}^{N}$
		be two sentences, $\mathbf{g}{}_{\mathcal{W}}^{i,L,d_{x}}$ the Transformer
		operation of $\mathbf{y}_{\textrm{in-context}}^{i,L,d_{x}}\left(S_{1},S_{2}\right)$
		and $M^{\textrm{V}}$ be a vocabulary embedding matrix. Then:
		\[
		\text{sep-seq}\left(\mathbf{y}_{\textrm{in-context}}^{i,L,d_{x}}\left(S_{1},S_{2}\right)\right)\leq\text{sep}_{\left(\left[N\right],\left[2N\right]\backslash\left[N\right]\right)}\left(\mathbf{g}_{\mathcal{W}}^{i,L,d_{x}}\right)
		\]
	\end{proposition}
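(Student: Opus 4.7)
The plan is to unfold the definition of the sequential separation rank on the in-context representation and then reduce the bound to a straightforward substitution into an optimal separation-rank decomposition of $\mathbf{g}^{i,L,d_x}_{\mathcal{W}}$ with respect to the input partition $([N],[2N]\setminus[N])$.

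First I would rewrite the left-hand side. By the definition of the sentence association operation in eq.~\ref{eq:sent_ass_layer} together with eqs.~\ref{eq:in_context}--\ref{eq:transformer}, applying $\mathcal{Z}$ to $\mathbf{y}^{i,L,d_x}_{\textrm{in-context}}(S_1,S_2)$ yields
\[
\mathcal{Z}_{\mathbf{y}}(\aaa,\bb)=\mathbf{g}^{i,L,d_x}_{\mathcal{W}}\bigl(E_{M^{\textrm{V}}}(w_1^1)\odot\aaa,\ldots,E_{M^{\textrm{V}}}(w_1^N)\odot\aaa,\;E_{M^{\textrm{V}}}(w_2^1)\odot\bb,\ldots,E_{M^{\textrm{V}}}(w_2^N)\odot\bb\bigr),
\]
since in the in-context representation all $2N$ word-embedding lookups occur at the network's input layer, and the association layer simply multiplies each lookup entry-wise by $\aaa$ or $\bb$ depending on which of $S_1, S_2$ invoked it.

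Next, let $R:=\sep{[N],[2N]\setminus[N]}{\mathbf{g}^{i,L,d_x}_{\mathcal{W}}}$ and pick an optimal decomposition: there exist $\phi_1,\ldots,\phi_R,\psi_1,\ldots,\psi_R:(\mathbb{R}^{d_x})^{N}\to\mathbb{R}$ such that for every input tuple $(\mathbf{x}^{1},\ldots,\mathbf{x}^{2N})$,
\[
\mathbf{g}^{i,L,d_x}_{\mathcal{W}}(\mathbf{x}^{1},\ldots,\mathbf{x}^{2N})=\sum_{r=1}^{R}\phi_r(\mathbf{x}^{1},\ldots,\mathbf{x}^{N})\,\psi_r(\mathbf{x}^{N+1},\ldots,\mathbf{x}^{2N}).
\]
I would then substitute $\mathbf{x}^{j}=E_{M^{\textrm{V}}}(w_1^{j})\odot\aaa$ for $j\in[N]$ and $\mathbf{x}^{j}=E_{M^{\textrm{V}}}(w_2^{j-N})\odot\bb$ for $j\in[2N]\setminus[N]$, giving
\[
\mathcal{Z}_{\mathbf{y}}(\aaa,\bb)=\sum_{r=1}^{R}\underbrace{\phi_r\!\left(E_{M^{\textrm{V}}}(w_1^{1})\odot\aaa,\ldots,E_{M^{\textrm{V}}}(w_1^{N})\odot\aaa\right)}_{=:\tilde\phi_r(\aaa)}\cdot\underbrace{\psi_r\!\left(E_{M^{\textrm{V}}}(w_2^{1})\odot\bb,\ldots,E_{M^{\textrm{V}}}(w_2^{N})\odot\bb\right)}_{=:\tilde\psi_r(\bb)}.
\]
Because the word embeddings $E_{M^{\textrm{V}}}(w_s^{j})$ are fixed constants once $\mathcal{W}$ and $M^{\textrm{V}}$ are fixed, $\tilde\phi_r$ is a function of $\aaa$ alone and $\tilde\psi_r$ is a function of $\bb$ alone. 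Hence each summand is multiplicatively separable with respect to $(\aaa,\bb)$, which by the definition in eq.~\ref{eq:sep} exhibits $\mathcal{Z}_{\mathbf{y}}$ as a sum of at most $R$ separable terms, so $\sep{\aaa,\bb}{\mathcal{Z}_{\mathbf{y}}}\le R$. Invoking the definition of $\mathrm{seq\text{-}sep}$ in eq.~\ref{eq:seq_sep} completes the argument.

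There is essentially no hard step: the proof is a composition/substitution argument and the only thing to verify carefully is that the association layer acts only at the embedding lookups, so that after substitution the $\aaa$-variable and $\bb$-variable inputs stay on opposite sides of the partition that witnesses $\sep{[N],[2N]\setminus[N]}{\mathbf{g}^{i,L,d_x}_{\mathcal{W}}}$. The only minor subtlety worth spelling out is that for the in-context representation, unlike the sequential one, $\mathcal{W}$ and $M^{\textrm{V}}$ are not updated between the appearances of $S_1$ and $S_2$, so the embeddings entering $\tilde\phi_r$ and $\tilde\psi_r$ are indeed deterministic constants independent of the separation variables $\aaa,\bb$; this is what allows the $\phi_r/\psi_r$ factorization to survive the substitution intact. \qed
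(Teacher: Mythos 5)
Your proposal is correct and follows essentially the same route as the paper's proof: fix an optimal separating decomposition of $\mathbf{g}^{i,L,d_x}_{\mathcal{W}}$ with respect to $([N],[2N]\setminus[N])$, substitute the association-layer outputs $E_{M^{\textrm{V}}}(w_s^j)\odot\aaa$ or $E_{M^{\textrm{V}}}(w_s^j)\odot\bb$ into that decomposition, and observe that each summand remains multiplicatively separable in $(\aaa,\bb)$. Your remark that the in-context setting leaves $\mathcal{W},M^{\textrm{V}}$ fixed (so the embeddings are deterministic constants and the factorization survives the substitution) is a helpful explicit observation that the paper leaves implicit, but it does not change the substance of the argument.
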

	\begin{proof}
		Assume that $\text{sep}_{\left(\left[N\right],\left[2N\right]\backslash\left[N\right]\right)}\left(\mathbf{g}_{\mathcal{W}}^{i,L,d_{x}}\right)=R$,
		then by definition there exist $g_{1},\ldots,g_{R}:\left(\mathbb{R}^{d_{x}}\right)^{N}\to\mathbb{R}$
		and $g'_{1},\ldots,g'_{R}:\left(\mathbb{R}^{d_{x}}\right)^{N}\to\mathbb{R}$
		such that for any $\left\{ \mathbf{x}^{j}\right\} _{j=1}^{2N}$, 
		\[
		\mathbf{g}_{\mathcal{W}}^{i,L,d_{x}}\left(\mathbf{x}^{1},\ldots,\mathbf{x}^{2N}\right)=\sum_{r=1}^{R}g_{r}\left(\mathbf{x}^{1},\ldots,\mathbf{x}^{N}\right)g'_{r}\left(\mathbf{x}^{N+1},\ldots,\mathbf{x}^{2N}\right)
		\]
		
		Now, given $\mathbf{a},\mathbf{b}\in\mathbb{R}^{d_{x}}$, we can write:
		\begin{align*}
		\mathcal{Z}_{\mathbf{y}_{\textrm{in-context}}^{i,L,d_{x}}\left(S_{1},S_{2}\right)}\left(\mathbf{a},\mathbf{b}\right) & =\sum_{r=1}^{R}g_{r}\left(E_{M^{\text{V}}}\left(w_{1}^{1}\right)\odot\mathbf{a},\ldots,E_{M^{\text{V}}}\left(w_{1}^{N}\right)\odot\mathbf{a}\right)\\
		& \cdot g'_{r}\left(E_{M^{\text{V}}}\left(w_{2}^{1}\right)\odot\mathbf{b},\ldots,E_{M^{\text{V}}}\left(w_{2}^{N}\right)\odot\mathbf{b}\right)
		\end{align*}
		Clearly, this form of presenting $\mathcal{Z}_{\mathbf{y}_{\textrm{in-context}}^{i,L,d_{x}}\left(S_{1},S_{2}\right)}\left(\mathbf{a},\mathbf{b}\right)$
		is separable with respect to $\left(\mathbf{a},\mathbf{b}\right)$,
		and since it has $R$ summands, we can conclude that:
		\[
		\text{sep-seq}\left(\mathbf{y}_{\textrm{in-context}}^{i,L,d_{x}}\left(S_{1},S_{2}\right)\right)=\text{sep}_{\left(\mathbf{a},\mathbf{b}\right)}\left(\mathcal{Z}_{\mathbf{y}_{\textrm{in-context}}^{i,L,d_{x}}\left(S_{1},S_{2}\right)}\right)\leq R
		\]
	\end{proof}
	Corollary 1 now follows from an upper bound on $\text{sep}_{\left(\left[N\right],\left[2N\right]\backslash\left[N\right]\right)}\left(\mathbf{g}_{\mathcal{W}}^{i,L,d_{x}}\right)$ given in \cite{levine2020limits}. 
	\section{Upper bound for the $\varepsilon$-separation rank}\label{upper_bound}
	\begin{definition}
	For an expression that can be represented as a sum of some terms,
	\[
	f=\sum_{n=1}^{N}a_n
	\]
	denote by $f^{+}$ the corresponding sum, but with each term replaced by its absolute value, that is:
	\[
	f^{+}\coloneqq\sum_{n=1}^{N}\left|a_n\right|
	\]
	and note that by the triangle inequality it holds that:
	\[
	\left|f\right| \leq f^{+}
	\]
\end{definition}
\begin{theorem}
	\label{thm:sequential}Let $y_{\text{\emph{sequential}}}^{(p,i),L,d_{x},\eta}$
	be be the $p\in[d_{x}]$ entry of the analyzed sequential representation
	defined in eq.~\ref{eq:seq_1}. Assume that all learned parameters
	and all gradients are bounded: 
	$\forall\theta\in\{\mathcal{W},M^{\textrm{V}}\}:\Lambda_{\min}\le\,\left|\theta\right|,
	\, \left|\nicefrac{\partial\mathcal{L}\left(S_{1}\right)}{\partial\theta}\right|
	\le\Lambda_{\max}\, $
	for some $0<\Lambda_{\min}\leq\Lambda_{\max}$, $N<d_{x}$, $\eta\in\left(0,1\right]$,
	$\frac{2\left(1+\eta\right)d_{x}}{\eta}<3^{L}$, $2\left(1+\eta\right)d_{x}^{2}<3^{L}$, 
	In addition, assume that there exists $M \geq 0$ for which it holds that $\mathcal{Z}_{y_{\text{sequential}}^{p,i,H,L,d_{x},\eta}\left(S_{1},S_{2}\right)}^{+} < M$ 
	on its domain. Then: 
	\[
	\log\left[\varepsilon\textrm{-}\mathrm{seq}\textrm{-}\mathrm{sep}\left(y_{\text{\emph{sequential}}}^{(p,i),L,d_{x},\eta}\right)\right]=\tilde{\mathcal{O}}\left(\left[L+0.5\log_{3}\left(\eta\right)\right]\cdot d_{x}\right)
	\]
\end{theorem}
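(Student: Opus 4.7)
My plan is to build on the polynomial structure exploited in Corollary~\ref{corr:1} but to track, for each monomial, the power of $\eta$ inherited from gradient updates. First, I would substitute the update rule $\theta_{t+1}=\theta_{t}-\eta\cdot\partial\mathcal{L}(S_{1})/\partial\theta_{t}$ into every parameter slot of the depth-$L$ self-attention stack before it is evaluated on $S_{2}$, and then fully expand. Since the network is polynomial of total degree at most $3^{L}$ in its parameters and inputs, and the sentence-association layer of eq.~\ref{eq:sent_ass_layer} tags $S_{1}$-invoked embeddings with $\aaa$ and $S_{2}$-invoked embeddings with $\bb$, this produces a sum of the form $\mathcal{Z}_{y_{\mathrm{sequential}}}(\aaa,\bb)=\sum_{m}c_{m}\,\eta^{k_{m}}\,\aaa^{\alpha_{m}}\bb^{\beta_{m}}$, where $k_{m}$ counts how many of the slots in monomial $m$ were instantiated as gradient factors rather than as frozen copies of $\theta_{t}$.

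The key observation is that $\aaa$ enters only through those gradient factors, so if $D_{g}$ bounds the $\aaa$-degree of a single factor $\partial\mathcal{L}(S_{1})/\partial\theta_{t}$, every monomial obeys $k_{m}\geq\lceil|\alpha_{m}|/D_{g}\rceil$, and thus any $\aaa$-degree-$j$ monomial carries a prefactor of magnitude at most $\eta^{j/D_{g}}$ times parameter-bounded constants. Given $\varepsilon>0$, I would then define the approximation $\tilde{y}$ by retaining only those monomials with $|\alpha_{m}|\leq D^{\star}$ and discarding the rest. Using the hypothesis $\mathcal{Z}_{y_{\mathrm{sequential}}}^{+}\leq M$ to sum the discarded magnitudes, and choosing $D^{\star}$ so that the collective $\eta^{D^{\star}/D_{g}}$ suppression beats $\varepsilon/M$, guarantees $|\tilde{y}-\mathcal{Z}_{y_{\mathrm{sequential}}}|\leq\varepsilon$ on the domain. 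Absorbing $d_{x},L,H,N$-polylog factors into $\tilde{O}(\cdot)$, the required threshold satisfies $\log_{3}D^{\star}=L+\tfrac{1}{2}\log_{3}\eta+\tilde{O}(1)$, which is exactly where the gain over the in-context case enters.

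To bound the separation rank of $\tilde{y}$, I would regroup by $\aaa$-multi-index, writing $\tilde{y}(\aaa,\bb)=\sum_{\alpha:\,|\alpha|\leq D^{\star}}\aaa^{\alpha}\,h_{\alpha}(\bb)$, which is manifestly $(\aaa,\bb)$-separable with at most $\binom{D^{\star}+d_{x}}{d_{x}}$ summands. The standard estimate $\log\binom{D^{\star}+d_{x}}{d_{x}}=\tilde{O}(d_{x}\log D^{\star})$, together with the assumption $\tfrac{2(1+\eta)d_{x}}{\eta}<3^{L}$ (which places us in the regime $D^{\star}\gg d_{x}$), then delivers the claimed $\tilde{O}([L+\tfrac{1}{2}\log_{3}\eta]\,d_{x})$ bound.

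The main obstacle I anticipate is making the link between $\aaa$-degree and $\eta$-power fully quantitative: because the log-softmax appearing in $\mathcal{L}$ is not polynomial, the per-gradient $\aaa$-degree $D_{g}$ is not immediately finite, so I would use $|\theta|\geq\Lambda_{\min}$ to keep the softmax denominator bounded away from zero and then either invoke a controlled polynomial approximation of the log-softmax (with approximation error absorbed into $\varepsilon$), or truncate its power-series expansion at a level where the remainder is covered by the $M$ hypothesis. A secondary bookkeeping challenge is verifying that all $d_{x},L,H,N$-dependent constants arising during the expansion, the $\eta$-powered error control, and the multi-index count are genuinely polylogarithmic, so they can be absorbed into $\tilde{O}$ without spoiling the clean $(L+\tfrac{1}{2}\log_{3}\eta)\,d_{x}$ exponent.
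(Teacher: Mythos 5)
There is a genuine gap, and it is not the ``secondary bookkeeping challenge'' you flag at the end --- it is the load-bearing step. Your plan to regroup monomials by their $\aaa$-polynomial multi-index $\alpha_m$, and to drive the truncation threshold via the inequality $k_m \geq \lceil|\alpha_m|/D_g\rceil$, requires the gradient factor $f(\aaa)^j = -\partial\mathcal{L}(S_1;\aaa)/\partial(M^{\textrm{V}}_t)_{w^j_2}$ to be a low-degree polynomial in $\aaa$. It is not: the loss involves log-softmax, and even if you polynomialize, backpropagation through a depth-$L$ stack whose forward pass already has degree $\sim 3^L$ in the embeddings makes $D_g$ itself of order $3^L$ (or unbounded). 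With $D_g \sim 3^L$, your suppression estimate $\eta^{|\alpha_m|/D_g}$ gives essentially no decay: a monomial with $\aaa$-degree $3^L$ can carry just a single power of $\eta$. Consequently the threshold you derive from ``$\eta^{D^*/D_g}$ beats $\varepsilon/M$'' is $D^* \approx D_g \cdot \log(M/\varepsilon)/\log(\eta^{-1})$, which for $D_g\sim 3^L$ does not reduce to $D^*\approx 3^L\sqrt{\eta}$; the line ``$\log_3 D^* = L + \tfrac{1}{2}\log_3\eta + \tilde O(1)$'' is asserted rather than derived, and it does not follow from your own suppression mechanism.

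The paper takes a different regrouping that sidesteps both problems. It never expands $f(\aaa)$ as a polynomial in $\aaa$. Instead, after writing $\tilde{w}^j = g(\bb)^j + \eta f(\aaa)^j \odot \bb$ and expanding the degree-$(2C(L)+1)$ polynomial, it keeps $f(\aaa)^j_\alpha$ as a bounded black-box scalar (via $\Lambda_{\min},\Lambda_{\max}$) and indexes the separable pieces by tuples $(N_A,\boldsymbol{p},\boldsymbol{n},\boldsymbol{e})$: $N_A$ is literally the number of gradient factors (so the prefactor is exactly $\eta^{N_A}$, not a weak $\eta^{j/D_g}$), $\boldsymbol{p}$ counts how many gradient factors hit each coordinate $\alpha\in[d_x]$, and $\boldsymbol{n}$ does the same for the non-gradient factors. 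Each such tuple indexes a single $(\aaa,\bb)$-separable summand $\lambda_{N_A,\boldsymbol{p},\boldsymbol{n}}\cdot\phi_{A,\boldsymbol{p},\boldsymbol{e}}(\aaa)\cdot\Psi_{B,\boldsymbol{p},\boldsymbol{n},\boldsymbol{e}}(\bb)$ regardless of the internal $\aaa$-degree of $\phi$. The truncation then keeps tuples where $\eta^{N_A}\binom{K}{N_A}\binom{N_A}{p_1,\ldots,p_{d_x}}\binom{K-N_A}{n_1,\ldots,n_{d_x}}$ is large; by the lemmas on non-negligible multinomial coefficients this maximizes at $N_A\approx \eta K/(1+\eta)$, and the count of surviving $\boldsymbol{p}$'s is $\sim(\sqrt{\eta K/d_x})^{d_x-1}$ while the count of surviving $\boldsymbol{n}$'s is $\sim(\sqrt{K/d_x})^{d_x-1}$. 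The $\sqrt{\eta}$ in the final $\tilde{O}([L+0.5\log_3\eta]d_x)$ is exactly this asymmetry between the gradient-side and non-gradient-side concentration radii. A single $\aaa$-multi-index truncation, as in your plan, has no way to see this two-sided structure, and would also require the polynomial expansion of $f(\aaa)$ that the paper deliberately avoids. You correctly identified the high-level ingredients (polynomial expansion, $\eta$-power tracking, regrouping into separable pieces, a multiset count); the mechanism that makes them cohere, however, is different.
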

\begin{proof}
	Denote: 
	\[
	\mathcal{Z}_{\Theta_{t+1}\left(\mathbf{a},S_{1};\eta\right)}^{\left(S_{2}\right)}\left(\mathbf{b}\right)\coloneqq\mathcal{Z}_{y_{\text{sequential}}^{p,i,H,L,d_{x},\eta}\left(S_{1},S_{2}\right)}\left(\mathbf{a},\mathbf{b}\right)
	\]
	The proof outline is as follows:
	
	We start by finding a representation of $\mathcal{Z}_{\Theta_{t+1}\left(\mathbf{a},S_{1};\eta\right)}^{\left(S_{2}\right)}$
	as a sum of terms, where each term is separable with respect to $\left(\mathbf{a},\mathbf{b}\right)$.
	We then turn to finding a subset of these terms, denoted $G$, such
	that the sum of all terms in $G$ is an $\varepsilon$-approximation
	of $\mathcal{Z}_{\Theta_{t+1}\left(\mathbf{a},S_{1};\eta\right)}^{\left(S_{2}\right)}$.
	Lastly, since it follows from the definition of the $\varepsilon$-separation
	rank and the construction of $G$ that $\varepsilon\text{-sep}_{\left(\mathbf{a},\mathbf{b}\right)}\left(\mathcal{Z}_{y_{\text{sequential}}^{p,i,H,L,d_{x},\eta}\left(S_{1},S_{2}\right)}\right)$
	is upper bounded by the cardinality of $G$ (which is the number of
	summands in the approximation), we find an upper bound to $\left|G\right|$,
	which is therefore an upper bound to $\varepsilon\text{-sep}_{\left(\mathbf{a},\mathbf{b}\right)}\left(\mathcal{Z}_{y_{\text{sequential}}^{p,i,H,L,d_{x},\eta}\left(S_{1},S_{2}\right)}\right)$
	as well, which by definition is equal to $\varepsilon\text{-seq-sep}\left(y_{\text{sequential}}^{p,i,H,L,d_{x},\eta}\right)$.
	
	\subsubsection*{Step 1 --- A separable representation of $\mathcal{Z}_{y_{\text{sequential}}^{p,i,H,L,d_{x},\eta}\left(S_{1},S_{2}\right)}$}
	
	Following \cite{levine2020limits,wies2021vocabulary},
	$\mathcal{Z}_{\Theta_{t+1}\left(\mathbf{a},S_{1};\eta\right)}^{\left(S_{2}\right)}\left(\mathbf{b}\right)$
	can be written as:
	\begin{align}
	& \mathcal{Z}_{\Theta_{t+1}\left(\mathbf{a},S_{1};\eta\right)}^{\left(S_{2}\right)}\left(\mathbf{b}\right)\nonumber \\
	& \begin{alignedat}{1} & =\sum_{j_{1},\dots,j_{C\left(L\right)}=1}^{N}\sum_{h\in\left[H\right]^{\left[C\left(L\right)\right]}}\sum_{r_{1},\dots,r_{C\left(L\right)+1}=1}^{d_{a}}Q_{r_{1},p}^{\left(0,h\right)}\\
	& \cdot\left(\prod_{c=1}^{C\left(L\right)+1}\left\langle P_{r_{c}}^{\left(c,h\right)},\tilde{w^{j_{c}}}\right\rangle \right)\left(\prod_{c=1}^{C\left(L\right)}\left\langle Q_{r_{c+1}}^{\left(c,h\right)},\tilde{w^{j_{c}}}\right\rangle \right)
	\end{alignedat}
	\label{eq:sequential}
	\end{align}
	where $\tilde{w^{j}}\coloneqq g\left(\mathbf{b}\right)^{j}+\eta f\left(\mathbf{a}\right)^{j}\odot\mathbf{b}$
	~~(this form follows from $g\left(\mathbf{b}\right)^{j}\coloneqq E_{M_{t}^{\text{V}}}\left(w_{2}^{j}\right)\odot\mathbf{b}$
	is the entry-wise product of $\mathbf{b}$ with the embedding of $w_{2}^{j}$ prior to
	the $t$th training step, and $f\left(\mathbf{a}\right)^{j}\coloneqq-\frac{\partial\mathcal{L}\left(S_{1};\mathbf{a}\right)}{\partial\left(M_{t}^{\text{V}}\right)_{w_{2}^{j}}}$
	is the gradient update performed to $w_{2}^{j}$'s embedding at time
	$t$), the $P_{r_{c}}^{\left(c,h\right)}$ and $Q_{r_{c+1}}^{\left(c,h\right)}$
	terms are sums of products of the networks inner (i.e., non-embedding)
	weights which were also updated with respect to $\mathcal{L}\left(S_{1};\mathbf{a}\right)$,
	and for convenience we denote $j_{C\left(L\right)+1}\coloneqq i$
	and $P^{\left(C\left(L\right)+1,h\right)}\coloneqq P^{\left(0,h\right)}$.
	\begin{align*}
	& =\sum_{j_{1},\dots,j_{C\left(L\right)}=1}^{N}\sum_{h\in\left[H\right]^{\left[C\left(L\right)\right]}}\sum_{r_{1},\dots,r_{C\left(L\right)+1}=1}^{d_{a}}Q_{r_{1},p}^{\left(0,h\right)}\\
	& \cdot\left(\prod_{c=1}^{C\left(L\right)+1}\left\langle P_{r_{c}}^{\left(c,h\right)},g\left(\mathbf{b}\right)^{j_{c}}+\eta f\left(\mathbf{a}\right)^{j_{c}}\odot\mathbf{b}\right\rangle \right)\\
	& \cdot\left(\prod_{c=1}^{C\left(L\right)}\left\langle Q_{r_{c+1}}^{\left(c,h\right)},g\left(\mathbf{b}\right)^{j_{c}}+\eta f\left(\mathbf{a}\right)^{j_{c}}\odot\mathbf{b}\right\rangle \right)
	\end{align*}
	Separating to vocab-gradient terms and vocab terms:
	\begin{align*}
	& =\underbrace{\sum_{\substack{\substack{I_{P}\subseteq\left[C\left(L\right)+1\right]\\
					\substack{I_{Q}\subseteq\left[C\left(L\right)\right]}
				}
			}
	}}_{\text{Indices involving both }\mathbf{a}\text{ and }\mathbf{b}}\sum_{j_{1},\dots,j_{C\left(L\right)}=1}^{N}\sum_{h\in\left[H\right]^{\left[C\left(L\right)\right]}}\sum_{r_{1},\dots,r_{C\left(L\right)+1}=1}^{d_{a}}Q_{r_{1},p}^{\left(0,h\right)}\\
	& \cdot\underbrace{\left(\prod_{c\in I_{P}}\left\langle P_{r_{c}}^{\left(c,h\right)},\eta f\left(\mathbf{a}\right)^{j_{c}}\odot\mathbf{b}\right\rangle \right)\left(\prod_{c\in I_{Q}}\left\langle Q_{r_{c+1}}^{\left(c,h\right)},\eta f\left(\mathbf{a}\right)^{j_{c}}\odot\mathbf{b}\right\rangle \right)}_{\text{Terms involving both }\mathbf{a}\text{ and }\mathbf{b}}\\
	& \underbrace{\left(\prod_{c\in\left[C\left(L\right)+1\right]\setminus I_{P}}\left\langle P_{r_{c}}^{\left(c,h\right)},g\left(\mathbf{b}\right)^{j_{c}}\right\rangle \right)\left(\prod_{c\in\left[C\left(L\right)\right]\setminus I_{Q}}\left\langle Q_{r_{c+1}}^{\left(c,h\right)},g\left(\mathbf{b}\right)^{j_{c}}\right\rangle \right)}_{\text{Terms involving just }\mathbf{b}}
	\end{align*}
	
	Opening to indices:
	\begin{align*}
	& =\sum_{\substack{\substack{I_{P}\subseteq\left[C\left(L\right)+1\right]\\
				\substack{I_{Q}\subseteq\left[C\left(L\right)\right]}
			}
		}
	}\sum_{j_{1},\dots,j_{C\left(L\right)}=1}^{N}\sum_{h\in\left[H\right]^{\left[C\left(L\right)\right]}}\sum_{r_{1},\dots,r_{C\left(L\right)+1}=1}^{d_{a}}Q_{r_{1},p}^{\left(0,h\right)}\\
	& \sum_{\substack{\alpha_{1},\dots,\alpha_{C\left(L\right)+1}\\
			\beta_{1},\dots,\beta_{C\left(L\right)}
		}
		=1}^{d_{x}}\left(\prod_{c\in I_{P}}P_{r_{c},\alpha_{c}}^{\left(c,h\right)}\eta f\left(\mathbf{a}\right)_{\alpha_{c}}^{j_{c}}\mathbf{b}_{\alpha_{c}}\right)\left(\prod_{c\in I_{Q}}Q_{r_{c+1},\beta_{c}}^{\left(c,h\right)}\eta f\left(\mathbf{a}\right)_{\beta_{c}}^{j_{c}}\mathbf{b}_{\beta_{c}}\right)\\
	& \left(\prod_{c\in\left[C\left(L\right)+1\right]\setminus I_{P}}P_{r_{c},\alpha_{c}}^{\left(c,h\right)}g\left(\mathbf{b}\right)_{\alpha_{c}}^{j_{c}}\right)\left(\prod_{c\in\left[C\left(L\right)\right]\setminus I_{Q}}Q_{r_{c+1},\beta_{c}}^{\left(c,h\right)}g\left(\mathbf{b}\right)_{\beta_{c}}^{j_{c}}\right)
	\end{align*}
	Separating to weights and variables:
	\begin{align*}
	& =\sum_{\substack{\alpha_{1},\dots,\alpha_{C\left(L\right)+1}\\
			\beta_{1},\dots,\beta_{C\left(L\right)}
		}
		=1}^{d_{x}}\tau_{\alpha_{1},\dots,\beta_{C\left(L\right)}}\sum_{\substack{\substack{I_{P}\subseteq\left[C\left(L\right)+1\right]\\
				\substack{I_{Q}\subseteq\left[C\left(L\right)\right]}
			}
		}
	}\sum_{j_{1},\dots,j_{C\left(L\right)}=1}^{N}\\
	& \left(\prod_{c\in I_{P}}\eta f\left(\mathbf{a}\right)_{\alpha_{c}}^{j_{c}}\mathbf{b}_{\alpha_{c}}\right)\left(\prod_{c\in I_{Q}}\eta f\left(\mathbf{a}\right)_{\beta_{c}}^{j_{c}}\mathbf{b}_{\beta_{c}}\right)\\
	& \cdot\left(\prod_{c\in\left[C\left(L\right)+1\right]\setminus I_{P}}g\left(\mathbf{b}\right)_{\alpha_{c}}^{j_{c}}\right)\left(\prod_{c\in\left[C\left(L\right)\right]\setminus I_{Q}}g\left(\mathbf{b}\right)_{\beta_{c}}^{j_{c}}\right)
	\end{align*}
	where:
	\begin{align*}
	\tau_{\alpha_{1},\dots,\beta_{C\left(L\right)}} & \coloneqq\sum_{h\in\left[H\right]^{\left[C\left(L\right)\right]}}\sum_{r_{1},\dots,r_{C\left(L\right)+1}=1}^{d_{a}}Q_{r_{1},p}^{\left(0,h\right)}\\
	& \cdot\left[\left(\prod_{c=1}^{C\left(L\right)+1}P_{r_{c},\alpha_{c}}^{\left(c,h\right)}\right)\left(\prod_{c=1}^{C\left(L\right)}Q_{r_{c+1},\beta_{c}}^{\left(c,h\right)}\right)\right]
	\end{align*}
	Compressing summation to count variable powers:
	\begin{align*}
	& =\underbrace{\sum_{N_{\text{A}}=0}^{2C\left(L\right)+1}}_{\text{Total power of }f\left(\mathbf{a}\right)}\underbrace{\sum_{\substack{p_{1}+\dots+p_{d_{x}}=N_{\text{A}}\\
				n_{1}+\dots+n_{d_{x}}=2C\left(L\right)+1-N_{\text{A}}
			}
	}}_{\substack{\text{How many indices}\\
			\text{are equal to each }\alpha\in\left[d_{x}\right]
		}
	}\\
	& \underbrace{\sum_{\substack{z_{1}+\dots+z_{N}=N_{\text{A}}\\
				m_{1}+\dots+m_{N}=2C\left(L\right)+1-N_{\text{A}}\\
				\forall j\in\left[N\right]\backslash\left\{ i\right\} :\hspace{1em}z_{j}+m_{j}\equiv0\mod2\\
				z_{i}+m_{i}\equiv1\mod2
			}
	}}_{\substack{\text{How many indices}\\
			\text{are equal to each }j\in\left[N\right]
		}
	}\underbrace{\sum_{\substack{0\leq p_{1,1},\dots,p_{d_{x},N}\leq N_{\text{A}}\\
				\forall\alpha\in\left[d_{x}\right]\,\sum_{j=1}^{N}p_{\alpha,j}=p_{\alpha}\\
				\forall j\in\left[N\right]\,\sum_{\alpha=1}^{d_{x}}p_{\alpha,j}=z_{j}
			}
	}}_{\text{How to distribute the powers of }f\left(\mathbf{a}\right)}\\
	& \underbrace{\sum_{\substack{0\leq n_{1,1},\dots,n_{d_{x},N}\leq2C\left(L\right)+1-N_{\text{A}}\\
				\forall\alpha\in\left[d_{x}\right]\,\sum_{j=1}^{N}n_{\alpha,j}=n_{\alpha}\\
				\forall j\in\left[N\right]\,\sum_{\alpha=1}^{d_{x}}n_{\alpha,j}=m_{j}
			}
	}}_{\text{How to distribute the powers of }g\left(\mathbf{b}\right)}\lambda_{N_{\text{A}},\boldsymbol{p},\boldsymbol{n}}\\
	& \cdot\left(\prod_{j=1}^{N}\prod_{\alpha=1}^{d_{x}}\left(\eta f\left(\mathbf{a}\right)_{\alpha}^{j}\mathbf{b}_{\alpha}\right)^{p_{\alpha,j}}\right)\left(\prod_{j=1}^{N}\prod_{\alpha=1}^{d_{x}}\left(g\left(\mathbf{b}\right)_{\alpha}^{j}\right)^{n_{\alpha,j}}\right)
	\end{align*}
	where: 
	\begin{align*}
	\lambda_{N_{\text{A}},\boldsymbol{p},\boldsymbol{n}} & \coloneqq\sum_{\substack{\substack{I_{P}\subseteq\left[C\left(L\right)+1\right]\\
				\substack{I_{Q}\subseteq\left[C\left(L\right)\right]}
				\\
				\left|I_{P}\right|+\left|I_{Q}\right|=N_{\text{A}}
			}
		}
	}\\
	& \sum_{\substack{\substack{\alpha_{1},\dots,\alpha_{C\left(L\right)+1}\\
				\beta_{1},\dots,\beta_{C\left(L\right)}
			}
			=1\\
			\forall\kappa\in\left[d_{x}\right]\,\left|\left\{ c\in I_{P}\left|\alpha_{c}=\kappa\right.\right\} \right|+\left|\left\{ c\in I_{Q}\left|\beta_{c}=\kappa\right.\right\} \right|=p_{\kappa}\\
			\kappa\in\left[d_{x}\right]\,\left|\left\{ c\in\left[C\left(L\right)+1\right]\setminus I_{P}\left|\alpha_{c}=\kappa\right.\right\} \right|+\left|\left\{ c\in\left[C\left(L\right)\right]\setminus I_{Q}\left|\beta_{c}=\kappa\right.\right\} \right|=n_{\kappa}
		}
	}^{d_{x}}\tau_{\alpha_{1},\dots,\beta_{C\left(L\right)}}
	\end{align*}
	Pushing in summations on $N$, only the parity matters:
	\begin{align*}
	& =\sum_{N_{\text{A}}=0}^{2C\left(L\right)+1}\eta^{N_{\text{A}}}\sum_{\substack{p_{1}+\dots+p_{d_{x}}=N_{\text{A}}\\
			n_{1}+\dots+n_{d_{x}}=2C\left(L\right)+1-N_{\text{A}}
		}
	}\sum_{\substack{\boldsymbol{e}\in\left\{ 0,1\right\} ^{N}\\
			\sum_{j=1}^{N}e_{j}\equiv N_{\text{A}}\mod2\\
			\sum_{j=1}^{N}e_{j}\leq\min\left\{ N_{\text{A}},2C\left(L\right)-N_{\text{A}}+2e_{i}\right\} 
		}
	}\\
	& \underbrace{\lambda_{N_{\text{A}},\boldsymbol{p},\boldsymbol{n}}}_{\text{Network's weights, function of }\mathbf{a}}\\
	& \cdot\underbrace{\left(\sum_{\substack{0\leq p_{1,1},\dots,p_{d_{x},N}\leq N_{\text{A}}\\
				z_{1}+\dots+z_{N}=N_{\text{A}}\\
				\forall\alpha\in\left[d_{x}\right]\,\sum_{j=1}^{N}p_{\alpha,j}=p_{\alpha}\\
				\forall j\in\left[N\right]\,\sum_{\alpha=1}^{d_{x}}p_{\alpha,j}=z_{j}\\
				\forall j\in\left[N\right]\,z_{j}\equiv e_{j}\mod2
			}
		}\prod_{j=1}^{N}\prod_{\alpha=1}^{d_{x}}\left(f\left(\mathbf{a}\right)_{\alpha}^{j}\right)^{p_{\alpha,j}}\right)}_{=:\phi_{A,\boldsymbol{p},\boldsymbol{e}}\text{, function of }\mathbf{a}}\\
	& \cdot\underbrace{\left(\prod_{\alpha=1}^{d_{x}}\left(\mathbf{b}_{\alpha}\right)^{p_{\alpha}}\right)\left(\sum_{\substack{0\leq n_{1,1},\dots,n_{d_{x},N}\leq2C\left(L\right)+1-N_{\text{A}}\\
				m_{1}+\dots+m_{N}=2C\left(L\right)+1-N_{\text{A}}\\
				\forall\alpha\in\left[d_{x}\right]\,\sum_{j=1}^{N}n_{\alpha,j}=n_{\alpha}\\
				\forall j\in\left[N\right]\,\sum_{\alpha=1}^{d_{x}}n_{\alpha,j}=m_{j}\\
				\forall j\in\left[N\right]\backslash\left\{ i\right\} \,m_{j}\equiv e_{j}\mod2\\
				m_{i}\equiv\left(1-e_{i}\right)\mod2
			}
		}\prod_{j=1}^{N}\prod_{\alpha=1}^{d_{x}}\left(g\left(\mathbf{b}\right)_{\alpha}^{j}\right)^{n_{\alpha,j}}\right)}_{=:\Psi_{B,\boldsymbol{p},\boldsymbol{n},\boldsymbol{e}}\text{, function of }\mathbf{b}}\\
	& =\sum_{N_{\text{A}}=0}^{2C\left(L\right)+1}\eta^{N_{\text{A}}}\sum_{\substack{p_{1}+\dots+p_{d_{x}}=N_{\text{A}}\\
			n_{1}+\dots+n_{d_{x}}=2C\left(L\right)+1-N_{\text{A}}
		}
	}\\
	& \sum_{\substack{\boldsymbol{e}\in\left\{ 0,1\right\} ^{N}\\
			\sum_{j=1}^{N}e_{j}\equiv N_{\text{A}}\mod2\\
			\sum_{j=1}^{N}e_{j}\leq\min\left\{ N_{\text{A}},2C\left(L\right)-N_{\text{A}}+2e_{i}\right\} 
		}
	}\lambda_{N_{\text{A}},\boldsymbol{p},\boldsymbol{n}}\cdot\phi_{A,\boldsymbol{p},\boldsymbol{e}}\cdot\Psi_{B,\boldsymbol{p},\boldsymbol{n},\boldsymbol{e}}
	\end{align*}
	where each summand is separable with respect to $\left(\mathbf{a},\mathbf{b}\right)$.
	
	\subsubsection*{Step 2 --- An $\varepsilon$-approximation of $\mathcal{Z}_{y_{\text{sequential}}^{p,i,H,L,d_{x},\eta}\left(S_{1},S_{2}\right)}$}
	
	Now that we have a representation of $\mathcal{Z}_{y_{\text{sequential}}^{p,i,H,L,d_{x},\eta}\left(S_{1},S_{2}\right)}$
	as a sum of $\left(\mathbf{a},\mathbf{b}\right)$-separable terms,
	we turn to finding a subsum that can approximate $\mathcal{Z}_{y_{\text{sequential}}^{p,i,H,L,d_{x},\eta}\left(S_{1},S_{2}\right)}$
	up to an $\varepsilon$-precision.
	
	First, let us define the set of all legal indices in the last sum: 
	\[
	D=D_{2C\left(L\right)+1,d_{x},N}\coloneqq\left\{ {\scriptstyle \left(N_{\text{A}},\boldsymbol{p},\boldsymbol{n},\boldsymbol{e}\right)\in\mathbb{N}^{2d_{x}+N+1}}\left|\substack{0\leq N_{\text{A}}\leq2C\left(L\right)+1\\
		\sum_{\alpha=1}^{d_{x}}p_{\alpha}=N_{\text{A}}\\
		\sum_{\alpha=1}^{d_{x}}n_{\alpha}=2C\left(L\right)+1-N_{\text{A}}\\
		\forall j\in\left[N\right],\,\,e_{j}\in\left\{ 0,1\right\} \\
		\sum_{j=1}^{N}e_{j}\equiv N_{\text{A}}\mod2\\
		\sum_{j=1}^{N}e_{j}\leq\min\left\{ N_{\text{A}},2C\left(L\right)-N_{\text{A}}+2e_{i}\right\} 
	}
	\right.\right\} 
	\]
	
	And for $G\subseteq D$, denote:
	\[
	\mathcal{Z}_{G}\left(\mathbf{a},\mathbf{b}\right)\coloneqq\sum_{\left(N_{\text{A}},\boldsymbol{p},\boldsymbol{n},\boldsymbol{e}\right)\in G}\eta^{N_{\text{A}}}\lambda_{N_{\text{A}},\boldsymbol{p},\boldsymbol{n}}\cdot\phi_{A,\boldsymbol{p},\boldsymbol{e}}\cdot\Psi_{B,\boldsymbol{p},\boldsymbol{n},\boldsymbol{e}}
	\]
	which is the sum of all terms with indices in $G$. Clearly, summing
	over all possible indices gives us the original expression:
	\[
	\mathcal{Z}_{D}\left(\mathbf{a},\mathbf{b}\right)=\mathcal{Z}_{y_{\text{sequential}}^{p,i,H,L,d_{x},\eta}\left(S_{1},S_{2}\right)}\left(\mathbf{a},\mathbf{b}\right)
	\]
	Given $\varepsilon>0$, we wish to find a subset of the indices, $G\subseteq D$,
	such that the sum of all terms whose indices are in $G$ is an $\varepsilon$-approximation
	of $\mathcal{Z}_{y_{\text{sequential}}^{p,i,H,L,d_{x},\eta}\left(S_{1},S_{2}\right)}\left(\mathbf{a},\mathbf{b}\right)$.
	That is, we are looking for $G\subseteq D$ such that for all $\mathbf{a},\mathbf{b}$:
	\[
	\left|\mathcal{Z}_{D}\left(\mathbf{a},\mathbf{b}\right)-\mathcal{Z}_{G}\left(\mathbf{a},\mathbf{b}\right)\right| =\left|\mathcal{Z}_{D\backslash G}\left(\mathbf{a},\mathbf{b}\right)\right| \leq \varepsilon
	\]
	
	Note that since we assume that $\mathcal{Z}_{D}^{+}\left(\mathbf{a},\mathbf{b}\right)=\mathcal{Z}_{y_{\text{sequential}}^{p,i,H,L,d_{x},\eta}\left(S_{1},S_{2}\right)}^{+} < M$, we can get: 
	\[
	\left|\mathcal{Z}_{D\backslash G}\left(\mathbf{a},\mathbf{b}\right)\right|
	= \frac{\left|\mathcal{Z}_{D\backslash G}\left(\mathbf{a},\mathbf{b}\right)\right|}
	{\mathcal{Z}_{G}^{+}\left(\mathbf{a},\mathbf{b}\right)}\, 
	\mathcal{Z}_{G}^{+}\left(\mathbf{a},\mathbf{b}\right) 
	\leq 
	\frac{\mathcal{Z}_{D\backslash G}^{+}\left(\mathbf{a},\mathbf{b}\right)}
	{\mathcal{Z}_{G}^{+}\left(\mathbf{a},\mathbf{b}\right)}\, M
	\]
	and it follows that it is enough for us to show that:
	\[
	\frac{\mathcal{Z}_{D\backslash G}^{+}\left(\mathbf{a},\mathbf{b}\right)}
	{\mathcal{Z}_{G}^{+}\left(\mathbf{a},\mathbf{b}\right)}\, 
	\leq \frac{\varepsilon}{M}
	\]
	which is equivalent showing that:
	\begin{align}
	\frac{\mathcal{Z}_{D\backslash G}\left(\mathbf{a},\mathbf{b}\right)}
	{\mathcal{Z}_{G}\left(\mathbf{a},\mathbf{b}\right)}\, 
	\leq \frac{\varepsilon}{M}
	\label{eq:eps-approx alternative}
	\end{align}
	under the assumption:
	\begin{align}
	\forall\theta\in\Theta && \theta,-\frac{\partial\mathcal{L}\left(S_1\right)}{\partial\theta},f\left(\mathbf{a}\right)_{\alpha}^{j},\mathbf{b}_{\alpha},g\left(\mathbf{b}\right)_{\alpha}^{j}\in\left[\Lambda_{\min},\Lambda_{\max}\right]
	\label{eq:eps-approx assumption}
	\end{align}
	which we make going forward. This will ensure that $\mathcal{Z}_G$ is an $\varepsilon$-approximation
	of $\mathcal{Z}_{y_{\text{sequential}}^{p,i,H,L,d_{x},\eta}\left(S_{1},S_{2}\right)}\left(\mathbf{a},\mathbf{b}\right)$. 
	
	Now, we assume that $\forall\theta\in\Theta:\theta,-\frac{\partial\mathcal{L}\left(S_1\right)}{\partial\theta}\in\left[\Lambda_{\min},\Lambda_{\max}\right]$,
	and by \cite{levine2020limits}, the $P$s and $Q$s in eq. \ref{eq:sequential}
	are products of up to $L$ matrices, so each of their coordinates
	is bounded in $\left[\Lambda_{\min}^{L},\Lambda_{\max}^{L}\right]$
	and we assume without loss of generality that $\Lambda_{\min}\leq1\leq\Lambda_{\max}$
	(otherwise we could have picked a smaller $\Lambda_{\min}$ and a
	larger $\Lambda_{\max}$). Then for each $\left(N_{\text{A}},\boldsymbol{p},\boldsymbol{n},\boldsymbol{e}\right)\in D$
	the following inequalities hold:
	\begin{align*}
	\tau_{\alpha_{1},\dots,\beta_{C\left(L\right)}} & \leq\sum_{h\in\left[H\right]^{\left[C\left(L\right)\right]}}\sum_{r_{1},\dots,r_{C\left(L\right)+1}=1}^{d_{a}}\Lambda_{\max}\left[\left(\prod_{c=1}^{C\left(L\right)+1}\Lambda_{\max}\right)\left(\prod_{c=1}^{C\left(L\right)}\Lambda_{\max}\right)\right]\\
	& =d_{x}^{C\left(L\right)}d_{a}\Lambda_{\max}^{2C\left(L\right)+2}
	\end{align*}
	\begin{align*}
	& \lambda_{N_{\text{A}},\boldsymbol{p},\boldsymbol{n}}\\
	& \leq\sum_{\substack{\substack{I_{P}\subseteq\left[C\left(L\right)+1\right]\\
				\substack{I_{Q}\subseteq\left[C\left(L\right)\right]}
				\\
				\left|I_{P}\right|+\left|I_{Q}\right|=N_{\text{A}}
			}
		}
	}\sum_{\substack{\substack{\alpha_{1},\dots,\alpha_{C\left(L\right)+1}\\
				\beta_{1},\dots,\beta_{C\left(L\right)}
			}
			=1\\
			\forall\kappa\in\left[d_{x}\right]\,\left|\left\{ c\in I_{P}\left|\alpha_{c}=\kappa\right.\right\} \right|+\left|\left\{ c\in I_{Q}\left|\beta_{c}=\kappa\right.\right\} \right|=p_{\kappa}\\
			\kappa\in\left[d_{x}\right]\,\left|\left\{ c\in\left[C\left(L\right)+1\right]\setminus I_{P}\left|\alpha_{c}=\kappa\right.\right\} \right|+\left|\left\{ c\in\left[C\left(L\right)\right]\setminus I_{Q}\left|\beta_{c}=\kappa\right.\right\} \right|=n_{\kappa}
		}
	}^{d_{x}}d_{x}^{C\left(L\right)}d_{a}\Lambda_{\max}^{2C\left(L\right)+2}\\
	& =\binom{2C\left(L\right)+1}{N_{\text{A}}}\binom{N_{\text{A}}}{p_{1},\ldots,p_{d_{x}}}\binom{2C\left(L\right)+1-N_{\text{A}}}{n_{1},\ldots,n_{d_{x}}}d_{x}^{C\left(L\right)}d_{a}\Lambda_{\max}^{2C\left(L\right)+2}
	\end{align*}
	\begin{align*}
	\phi_{A,\boldsymbol{p},\boldsymbol{e}} & \leq\sum_{\substack{0\leq p_{1,1},\dots,p_{d_{x},N}\leq N_{\text{A}}\\
			z_{1}+\dots+z_{N}=N_{\text{A}}\\
			\forall\alpha\in\left[d_{x}\right]\,\sum_{j=1}^{N}p_{\alpha,j}=p_{\alpha}\\
			\forall j\in\left[N\right]\,\sum_{\alpha=1}^{d_{x}}p_{\alpha,j}=z_{j}\\
			\forall j\in\left[N\right]\,z_{j}\equiv e_{j}\mod2
		}
	}\prod_{j=1}^{N}\prod_{\alpha=1}^{d_{x}}\Lambda_{\max}^{p_{\alpha,j}}\\
	& =\left(\sum_{\substack{0\leq p_{1,1},\dots,p_{d_{x},N}\leq N_{\text{A}}\\
			z_{1}+\dots+z_{N}=N_{\text{A}}\\
			\forall\alpha\in\left[d_{x}\right]\,\sum_{j=1}^{N}p_{\alpha,j}=p_{\alpha}\\
			\forall j\in\left[N\right]\,\sum_{\alpha=1}^{d_{x}}p_{\alpha,j}=z_{j}\\
			\forall j\in\left[N\right]\,z_{j}\equiv e_{j}\mod2
		}
	}1\right)\Lambda_{\max}^{N_{\text{A}}}
	\end{align*}
	\begin{align*}
	\Psi_{B,\boldsymbol{p},\boldsymbol{n},\boldsymbol{e}} & \leq\left(\prod_{\alpha=1}^{d_{x}}\Lambda_{\max}^{p_{\alpha}}\right)\sum_{\substack{0\leq n_{1,1},\dots,n_{d_{x},N}\leq2C\left(L\right)+1-N_{\text{A}}\\
			m_{1}+\dots+m_{N}=2C\left(L\right)+1-N_{\text{A}}\\
			\forall\alpha\in\left[d_{x}\right]\,\sum_{j=1}^{N}n_{\alpha,j}=n_{\alpha}\\
			\forall j\in\left[N\right]\,\sum_{\alpha=1}^{d_{x}}n_{\alpha,j}=m_{j}\\
			\forall j\in\left[N\right]\backslash\left\{ i\right\} \,m_{j}\equiv e_{j}\mod2\\
			m_{i}\equiv\left(1-e_{i}\right)\mod2
		}
	}\prod_{j=1}^{N}\prod_{\alpha=1}^{d_{x}}\Lambda_{\max}^{n_{\alpha,j}}\\
	& =\left(\sum_{\substack{0\leq n_{1,1},\dots,n_{d_{x},N}\leq2C\left(L\right)+1-N_{\text{A}}\\
			m_{1}+\dots+m_{N}=2C\left(L\right)+1-N_{\text{A}}\\
			\forall\alpha\in\left[d_{x}\right]\,\sum_{j=1}^{N}n_{\alpha,j}=n_{\alpha}\\
			\forall j\in\left[N\right]\,\sum_{\alpha=1}^{d_{x}}n_{\alpha,j}=m_{j}\\
			\forall j\in\left[N\right]\backslash\left\{ i\right\} \,m_{j}\equiv e_{j}\mod2\\
			m_{i}\equiv\left(1-e_{i}\right)\mod2
		}
	}1\right)\Lambda_{\max}^{2C\left(L\right)+1}
	\end{align*}
	and therefore:
	\begin{align*}
	& \lambda_{N_{\text{A}},\boldsymbol{p},\boldsymbol{n}}\cdot\phi_{A,\boldsymbol{p},\boldsymbol{e}}\cdot\Psi_{B,\boldsymbol{p},\boldsymbol{n},\boldsymbol{e}}\\
	& \leq\left(\sum_{\substack{0\leq p_{1,1},\dots,p_{d_{x},N}\leq N_{\text{A}}\\
			z_{1}+\dots+z_{N}=N_{\text{A}}\\
			\forall\alpha\in\left[d_{x}\right]\,\sum_{j=1}^{N}p_{\alpha,j}=p_{\alpha}\\
			\forall j\in\left[N\right]\,\sum_{\alpha=1}^{d_{x}}p_{\alpha,j}=z_{j}\\
			\forall j\in\left[N\right]\,z_{j}\equiv e_{j}\mod2
		}
	}1\right)\left(\sum_{\substack{0\leq n_{1,1},\dots,n_{d_{x},N}\leq2C\left(L\right)+1-N_{\text{A}}\\
			m_{1}+\dots+m_{N}=2C\left(L\right)+1-N_{\text{A}}\\
			\forall\alpha\in\left[d_{x}\right]\,\sum_{j=1}^{N}n_{\alpha,j}=n_{\alpha}\\
			\forall j\in\left[N\right]\,\sum_{\alpha=1}^{d_{x}}n_{\alpha,j}=m_{j}\\
			\forall j\in\left[N\right]\backslash\left\{ i\right\} \,m_{j}\equiv e_{j}\mod2\\
			m_{i}\equiv\left(1-e_{i}\right)\mod2
		}
	}1\right)\\
	& \cdot\binom{2C\left(L\right)+1}{N_{\text{A}}}\binom{N_{\text{A}}}{p_{1},\ldots,p_{d_{x}}}\binom{2C\left(L\right)+1-N_{\text{A}}}{n_{1},\ldots,n_{d_{x}}}d_{x}^{C\left(L\right)}d_{a}\Lambda_{\max}^{L\left(2C\left(L\right)+2\right)+2C\left(L\right)+1+N_{\text{A}}}
	\end{align*}
	Relaxing the parity constraints inside the brackets and recalling
	that $\Lambda_{\max}\geq1$ gives us an upper bound:
	\begin{align*}
	& \leq\left(\prod_{\alpha=1}^{d_{x}}\left(\!\!\binom{N}{p_{\alpha}}\!\!\right)\right)\cdot\left(\prod_{\alpha=1}^{d_{x}}\left(\!\!\binom{N}{n_{\alpha}}\!\!\right)\right)\\
	& \cdot\binom{2C\left(L\right)+1}{N_{\text{A}}}\binom{N_{\text{A}}}{p_{1},\ldots,p_{d_{x}}}\binom{2C\left(L\right)+1-N_{\text{A}}}{n_{1},\ldots,n_{d_{x}}}d_{x}^{C\left(L\right)}d_{a}\Lambda_{\max}^{\left(L+2\right)\left(2C\left(L\right)+2\right)}
	\end{align*}
	which we can further bound using lemmas 3 and 4 in \cite{levine2020limits}
	until we are left with:
	\begin{align*}
	& \leq\left(\frac{e\left(2d_{x}N+2C\left(L\right)+1\right)}{d_{x}N}\right)^{2d_{x}N}\\
	& \cdot\binom{2C\left(L\right)+1}{N_{\text{A}}}\binom{N_{\text{A}}}{p_{1},\ldots,p_{d_{x}}}\binom{2C\left(L\right)+1-N_{\text{A}}}{n_{1},\ldots,n_{d_{x}}}d_{x}^{C\left(L\right)}d_{a}\Lambda_{\max}^{\left(L+2\right)\left(2C\left(L\right)+2\right)}
	\end{align*}
	
	On the other hand:
	\begin{align*}
	& \lambda_{N_{\text{A}},\boldsymbol{p},\boldsymbol{n}}\cdot\phi_{A,\boldsymbol{p},\boldsymbol{e}}\cdot\Psi_{B,\boldsymbol{p},\boldsymbol{n},\boldsymbol{e}}\\
	& \geq\binom{2C\left(L\right)+1}{N_{\text{A}}}\binom{N_{\text{A}}}{p_{1},\ldots,p_{d_{x}}}\binom{2C\left(L\right)+1-N_{\text{A}}}{n_{1},\ldots,n_{d_{x}}}\\
	& \cdot d_{x}^{C\left(L\right)}d_{a}\Lambda_{\min}^{L\left(2C\left(L\right)+2\right)}\\
	& \cdot\left(\sum_{\substack{0\leq p_{1,1},\dots,p_{d_{x},N}\leq N_{\text{A}}\\
			z_{1}+\dots+z_{N}=N_{\text{A}}\\
			\forall\alpha\in\left[d_{x}\right]\,\sum_{j=1}^{N}p_{\alpha,j}=p_{\alpha}\\
			\forall j\in\left[N\right]\,\sum_{\alpha=1}^{d_{x}}p_{\alpha,j}=z_{j}\\
			\forall j\in\left[N\right]\,z_{j}\equiv e_{j}\mod2
		}
	}1\right)\Lambda_{\min}^{N_{\text{A}}}\\
	& \cdot\left(\sum_{\substack{0\leq n_{1,1},\dots,n_{d_{x},N}\leq2C\left(L\right)+1-N_{\text{A}}\\
			m_{1}+\dots+m_{N}=2C\left(L\right)+1-N_{\text{A}}\\
			\forall\alpha\in\left[d_{x}\right]\,\sum_{j=1}^{N}n_{\alpha,j}=n_{\alpha}\\
			\forall j\in\left[N\right]\,\sum_{\alpha=1}^{d_{x}}n_{\alpha,j}=m_{j}\\
			\forall j\in\left[N\right]\backslash\left\{ i\right\} \,m_{j}\equiv e_{j}\mod2\\
			m_{i}\equiv\left(1-e_{i}\right)\mod2
		}
	}1\right)\Lambda_{\min}^{2C\left(L\right)+1}\\
	& \geq\binom{2C\left(L\right)+1}{N_{\text{A}}}\binom{N_{\text{A}}}{p_{1},\ldots,p_{d_{x}}}\binom{2C\left(L\right)+1-N_{\text{A}}}{n_{1},\ldots,n_{d_{x}}}d_{x}^{C\left(L\right)}d_{a}\Lambda_{\min}^{L\left(2C\left(L\right)+2\right)+2C\left(L\right)+1+N_{\text{A}}}\\
	& \geq\binom{2C\left(L\right)+1}{N_{\text{A}}}\binom{N_{\text{A}}}{p_{1},\ldots,p_{d_{x}}}\binom{2C\left(L\right)+1-N_{\text{A}}}{n_{1},\ldots,n_{d_{x}}}d_{x}^{C\left(L\right)}d_{a}\Lambda_{\min}^{\left(L+2\right)\left(2C\left(L\right)+2\right)}
	\end{align*}
	Combining the upper and lower bound for $\lambda_{N_{\text{A}},\boldsymbol{p},\boldsymbol{n}}\cdot\phi_{A,\boldsymbol{p},\boldsymbol{e}}\cdot\Psi_{B,\boldsymbol{p},\boldsymbol{n},\boldsymbol{e}}$,
	we get that for all $G\subseteq D$:
	\begin{align*}
	& \frac{\mathcal{Z}_{D\backslash G}\left(\mathbf{a},\mathbf{b}\right)}{\mathcal{Z}_{G}\left(\mathbf{a},\mathbf{b}\right)}\\
	& \leq\left(\frac{e\left(2d_{x}N+2C\left(L\right)+1\right)}{d_{x}N}\right)^{2d_{x}N}\cdot\left(\frac{\Lambda_{\max}}{\Lambda_{\min}}\right)^{\left(L+2\right)\left(2C\left(L\right)+2\right)}\\
	& \cdot\frac{\sum_{\left(N_{\text{A}},\boldsymbol{p},\boldsymbol{n},\boldsymbol{e}\right)\in D\backslash G}\eta^{N_{\text{A}}}\binom{2C\left(L\right)+1}{N_{\text{A}}}\binom{N_{\text{A}}}{p_{1},\ldots,p_{d_{x}}}\binom{2C\left(L\right)+1-N_{\text{A}}}{n_{1},\ldots,n_{d_{x}}}}{\sum_{\left(N_{\text{A}},\boldsymbol{p},\boldsymbol{n},\boldsymbol{e}\right)\in G}\eta^{N_{\text{A}}}\binom{2C\left(L\right)+1}{N_{\text{A}}}\binom{N_{\text{A}}}{p_{1},\ldots,p_{d_{x}}}\binom{2C\left(L\right)+1-N_{\text{A}}}{n_{1},\ldots,n_{d_{x}}}}
	\end{align*}
	so in order to show that (\ref{eq:eps-approx alternative}) holds,
	it suffices to show that:
	\begin{align*}
	& \frac{\sum_{\left(N_{\text{A}},\boldsymbol{p},\boldsymbol{n},\boldsymbol{e}\right)\in D\backslash G}\eta^{N_{\text{A}}}\binom{2C\left(L\right)+1}{N_{\text{A}}}\binom{N_{\text{A}}}{p_{1},\ldots,p_{d_{x}}}\binom{2C\left(L\right)+1-N_{\text{A}}}{n_{1},\ldots,n_{d_{x}}}}{\sum_{\left(N_{\text{A}},\boldsymbol{p},\boldsymbol{n},\boldsymbol{e}\right)\in G}\eta^{N_{\text{A}}}\binom{2C\left(L\right)+1}{N_{\text{A}}}\binom{N_{\text{A}}}{p_{1},\ldots,p_{d_{x}}}\binom{2C\left(L\right)+1-N_{\text{A}}}{n_{1},\ldots,n_{d_{x}}}}\\
	& \leq\left(\frac{d_{x}N}{e\left(2d_{x}N+2C\left(L\right)+1\right)}\right)^{2d_{x}N}\cdot\left(\frac{\Lambda_{\min}}{\Lambda_{\max}}\right)^{\left(L+2\right)\left(2C\left(L\right)+2\right)}\cdot \frac{\varepsilon}{M}
	\end{align*}
	Now, we can limit ourselves to subsets $G\subseteq D$ of the form:
	\[
	G\left(T\right)=\left\{ {\scriptstyle \left(N_{\text{A}},\boldsymbol{p},\boldsymbol{n},\boldsymbol{e}\right)\in D}\left|{\scriptstyle \eta^{N_{\text{A}}}\binom{2C\left(L\right)+1}{N_{\text{A}}}\binom{N_{\text{A}}}{p_{1},\ldots,p_{d_{x}}}\binom{2C\left(L\right)+1-N_{\text{A}}}{n_{1},\ldots,n_{d_{x}}}\geq T}\right.\right\} 
	\]
	and in this case we get that:
	\begin{align*}
	& \frac{\sum_{\left(N_{\text{A}},\boldsymbol{p},\boldsymbol{n},\boldsymbol{e}\right)\in D\backslash G\left(T\right)}\eta^{N_{\text{A}}}\binom{2C\left(L\right)+1}{N_{\text{A}}}\binom{N_{\text{A}}}{p_{1},\ldots,p_{d_{x}}}\binom{2C\left(L\right)+1-N_{\text{A}}}{n_{1},\ldots,n_{d_{x}}}}{\sum_{\left(N_{\text{A}},\boldsymbol{p},\boldsymbol{n},\boldsymbol{e}\right)\in G\left(T\right)}\eta^{N_{\text{A}}}\binom{2C\left(L\right)+1}{N_{\text{A}}}\binom{N_{\text{A}}}{p_{1},\ldots,p_{d_{x}}}\binom{2C\left(L\right)+1-N_{\text{A}}}{n_{1},\ldots,n_{d_{x}}}}\\
	& \leq\frac{\sum_{\left(N_{\text{A}},\boldsymbol{p},\boldsymbol{n},\boldsymbol{e}\right)\in D\backslash G\left(T\right)}T}{\sum_{\left(N_{\text{A}},\boldsymbol{p},\boldsymbol{n},\boldsymbol{e}\right)\in G\left(T\right)}T}\\
	& =\frac{\left|D\right|-\left|G\left(T\right)\right|}{\left|G\left(T\right)\right|}
	\end{align*}
	Let us define:
	\begin{align*}
	\tilde{D}=\tilde{D}_{2C\left(L\right)+1,d_{x}} & \coloneqq\left\{ {\scriptstyle \left(N_{\text{A}},\boldsymbol{p},\boldsymbol{n}\right)\in\mathbb{N}^{2d_{x}+1}}\left|\substack{0\leq N_{\text{A}}\leq2C\left(L\right)+1\\
		\sum_{\alpha=1}^{d_{x}}p_{\alpha}=N_{\text{A}}\\
		\sum_{\alpha=1}^{d_{x}}n_{\alpha}=2C\left(L\right)+1-N_{\text{A}}
	}
	\right.\right\} \\
	\tilde{G}\left(T\right) & \coloneqq\left\{ {\scriptstyle \left(N_{\text{A}},\boldsymbol{p},\boldsymbol{n}\right)\in\tilde{D}}\left|{\scriptstyle \eta^{N_{\text{A}}}\binom{2C\left(L\right)+1}{N_{\text{A}}}\binom{N_{\text{A}}}{p_{1},\ldots,p_{d_{x}}}\binom{2C\left(L\right)+1-N_{\text{A}}}{n_{1},\ldots,n_{d_{x}}}\geq T}\right.\right\} 
	\end{align*}
	and note that:
	\begin{align*}
	\left|\tilde{G}\left(T\right)\right| & \leq\left|G\left(T\right)\right|\leq2^{N}\cdot\left|\tilde{G}\left(T\right)\right|\\
	\left|\tilde{D}\right| & \leq\left|D\right|\leq2^{N}\cdot\left|\tilde{D}\right|
	\end{align*}
	
	and also:
	\begin{align*}
	\left|\tilde{D}\right| & =\left(\!\!\binom{2d_{x}}{2C\left(L\right)+1}\!\!\right)\\
	& \leq\left(\frac{e\left(2d_{x}+2C\left(L\right)+1\right)}{d_{x}}\right)^{2d_{x}}
	\end{align*}
	where the inequality is due to lemma 3 in \cite{levine2020limits}.
	
	Hence:
	\[
	\frac{\left|D\right|-\left|G\left(T\right)\right|}{\left|G\left(T\right)\right|}\leq\frac{2^{N}\cdot\left(\frac{e\left(2d_{x}+2C\left(L\right)+1\right)}{d_{x}}\right)^{2d_{x}}-\left|\tilde{G}\left(T\right)\right|}{\left|\tilde{G}\left(T\right)\right|}
	\]
	
	And after rearranging we get that for each $T\geq0$ such that: 
	\begin{equation}
	\left|\tilde{G}\left(T\right)\right|\geq\frac{2^{N}\cdot\left(e\left(2d_{x}+2C\left(L\right)+1\right)\right)^{2d_{x}}}{\left(1+\left(\frac{d_{x}N}{e\left(2d_{x}N+2C\left(L\right)+1\right)}\right)^{2d_{x}N}\cdot\left(\frac{\Lambda_{\min}}{\Lambda_{\max}}\right)^{\left(L+2\right)\left(2C\left(L\right)+2\right)}\cdot
		\frac{\varepsilon}{M}\right)d_{x}^{2d_{x}}}\label{eq:|G_tilde(T)| lower bound}
	\end{equation}
	In order for $\mathcal{Z}_{G\left(T\right)}\left(\mathbf{a},\mathbf{b}\right)$
	to be an $\varepsilon$-approximation of $\mathcal{Z}_{y_{\text{sequential}}^{p,i,H,L,d_{x},\eta}\left(S_{1},S_{2}\right)}\left(\mathbf{a},\mathbf{b}\right)$.
	
	\subsubsection*{Step 3 --- An upper bound to $\varepsilon\text{-sep}_{\left(\mathbf{a},\mathbf{b}\right)}\left(\mathcal{Z}_{y_{\text{sequential}}^{p,i,H,L,d_{x},\eta}\left(S_{1},S_{2}\right)}\right)$}
	
	In the last step we have found a condition on subsets of indices,
	such that summing over any subset who meets this condition will yield
	an $\varepsilon$-approximation of $\mathcal{Z}_{y_{\text{sequential}}^{p,i,H,L,d_{x},\eta}\left(S_{1},S_{2}\right)}$.
	We will now find a specific subset who meets this condition, and use
	it in order to bound $\varepsilon\text{-sep}_{\left(\mathbf{a},\mathbf{b}\right)}\left(\mathcal{Z}_{y_{\text{sequential}}^{p,i,H,L,d_{x},\eta}\left(S_{1},S_{2}\right)}\right)$
	from above.
	
	We will focus our attention on $T$s of the form: 
	\begin{align*}
	T\left(s\right) & =s\cdot\eta^{\frac{\eta\left(2C\left(L\right)+1\right)}{1+\eta}}\binom{2C\left(L\right)+1}{\frac{\eta\left(2C\left(L\right)+1\right)}{1+\eta}}\\
	& \cdot\binom{\frac{\eta\left(2C\left(L\right)+1\right)}{1+\eta}}{\frac{\eta\left(2C\left(L\right)+1\right)}{\left(1+\eta\right)d_{x}},\ldots,\frac{\eta\left(2C\left(L\right)+1\right)}{\left(1+\eta\right)d_{x}}}\binom{\frac{2C\left(L\right)+1}{1+\eta}}{\frac{2C\left(L\right)+1}{\left(1+\eta\right)d_{x}},\ldots,\frac{2C\left(L\right)+1}{\left(1+\eta\right)d_{x}}}
	\end{align*}
	for some $s\in\left(0,e^{-1.5}\right]$ which we'll determine later.
	
	By lemma \ref{lem:A lower bound on the number of non-negligible summands}
	we have that:
	\[
	\left|\tilde{G}\left(T\left(s\right)\right)\right|\geq\frac{1}{d_{x}\sqrt{\pi}}\left(\frac{\pi e\left(2C\left(L\right)+1\right)}{2d_{x}^{2}\left(1+\eta\right)}\right)^{\frac{d_{x}-1}{2}}\cdot\left(\ln\left(s^{-1}\right)\right)^{\frac{d_{x}-1}{2}}
	\]
	so we can choose: 
	\begin{align*}
	& {\scriptstyle s^{*}=\exp\left(-\left(\frac{2^{2N}\left(e\left(2d_{x}+2C\left(L\right)+1\right)\right)^{4d_{x}}\pi\left(2\left(1+\eta\right)\right)^{d_{x}-1}}{\left(1+\left(\frac{d_{x}N}{e\left(2d_{x}N+2C\left(L\right)+1\right)}\right)^{2d_{x}N}\left(\frac{\Lambda_{\min}}{\Lambda_{\max}}\right)^{\left(L+2\right)\left(2C\left(L\right)+2\right)}\frac{\varepsilon}{M}\right)^{2}d_{x}^{2d_{x}}\left(\pi e\left(2C\left(L\right)+1\right)\right)^{d_{x}-1}}\right)^{\frac{1}{d_{x}-1}}\right)}
	\end{align*}
	and since this upholds $s^{*}<e^{-1.5}$, we get that (\ref{eq:|G_tilde(T)| lower bound})
	indeed holds for $\tilde{G}\left(T\left(s^{*}\right)\right)$, and
	therefore that $\mathcal{Z}_{G\left(T\left(s^{*}\right)\right)}$
	is an $\varepsilon$-approximation of $\mathcal{Z}_{y_{\text{sequential}}^{p,i,H,L,d_{x},\eta}\left(S_{1},S_{2}\right)}$.
	
	Now, from lemma \ref{lem:An upper bound on the number of non-negligible summands}
	we get: 
	\begin{align*}
	\left|\tilde{G}\left(T\left(s^{*}\right)\right)\right| & \leq\frac{2\left(2C\left(L\right)+1\right)}{\left(d_{x}-1\right)^{2}}\cdot\left(\frac{25\sqrt{\eta}}{\left(d_{x}-1\right)}\right)^{d_{x}-1}\\
	& \cdot\frac{2^{2N}\cdot\left(e\left(2d_{x}+2C\left(L\right)+1\right)\right)^{4d_{x}}}{\left(1+\left(\frac{d_{x}N}{e\left(2d_{x}N+2C\left(L\right)+1\right)}\right)^{2d_{x}N}\cdot\left(\frac{\Lambda_{\min}}{\Lambda_{\max}}\right)^{\left(L+2\right)\left(2C\left(L\right)+2\right)}\cdot
		\frac{\varepsilon}{M}
		\right)^{2}\cdot d_{x}^{2d_{x}}}
	\end{align*}
	and therefore:
	\begin{align*}
	& \varepsilon\text{-seq-sep}\left(y_{\text{sequential}}^{p,i,H,L,d_{x},\eta}\right)\\
	& =\varepsilon\text{-sep}_{\left(\mathbf{a},\mathbf{b}\right)}\left(\mathcal{Z}_{y_{\text{sequential}}^{p,i,H,L,d_{x},\eta}\left(S_{1},S_{2}\right)}\right)\\
	& \leq\left|G\left(T\left(s^{*}\right)\right)\right|\\
	& \leq2^{N}\cdot\left|\tilde{G}\left(T\left(s^{*}\right)\right)\right|\\
	& \leq\frac{2\left(2C\left(L\right)+1\right)}{\left(d_{x}-1\right)^{2}}\cdot\left(\frac{25\sqrt{\eta}}{\left(d_{x}-1\right)}\right)^{d_{x}-1}\\
	& \cdot\frac{2^{3N}\cdot\left(e\left(2d_{x}+2C\left(L\right)+1\right)\right)^{4d_{x}}}{\left(1+\left(\frac{d_{x}N}{e\left(2d_{x}N+2C\left(L\right)+1\right)}\right)^{2d_{x}N}\cdot\left(\frac{\Lambda_{\min}}{\Lambda_{\max}}\right)^{\left(L+2\right)\left(2C\left(L\right)+2\right)}\cdot
		\frac{\varepsilon}{M}
		\right)^{2}\cdot d_{x}^{2d_{x}}}
	\end{align*}
\end{proof}

\subsection{Lemmas for estimating the number of coefficients}
\begin{remark}
	For brevity and clarity we will use expressions of the form $\binom{K}{\frac{K}{M},\ldots,\frac{K}{M}}$,
	regardless of whether $K$ is divisible by $M$ or not. For the latter
	case, this expression should actually be:
	\[
	\binom{K}{\left\lfloor \frac{K}{M}\right\rfloor ,\ldots,\left\lfloor \frac{K}{M}\right\rfloor ,\underbrace{{\textstyle \left\lfloor \frac{K}{M}\right\rfloor +1,\ldots,\left\lfloor \frac{K}{M}\right\rfloor +1}}_{\left(K\mod M\right)\text{ times}}}
	\]
	expressions of the form $\binom{K}{\frac{K}{M},\ldots,\frac{K}{M}+1}$
	should be read as:
	\[
	\binom{K}{\left\lfloor \frac{K}{M}\right\rfloor ,\ldots,\left\lfloor \frac{K}{M}\right\rfloor ,\underbrace{{\textstyle \left\lfloor \frac{K}{M}\right\rfloor +1,\ldots,\left\lfloor \frac{K}{M}\right\rfloor +1}}_{\left(K\mod M\right)+1\text{ times}}}
	\]
	and expressions of the form $\binom{K}{\frac{K}{M},\ldots,\frac{K}{M}-1}$
	should be read as:
	\[
	\begin{cases}
	\binom{K}{\frac{K}{M},\ldots,\frac{K}{M},\underbrace{{\scriptstyle \frac{K}{M}-1}}_{\text{just once}}} & \text{if }K\mod M\equiv0\\
	\binom{K}{\left\lfloor \frac{K}{M}\right\rfloor ,\ldots,\left\lfloor \frac{K}{M}\right\rfloor ,\underbrace{{\scriptstyle \left\lfloor \frac{K}{M}\right\rfloor -1,\ldots,\left\lfloor \frac{K}{M}\right\rfloor -1}}_{\left(K\mod M\right)-1\text{ times}}} & \text{otherwise}
	\end{cases}
	\]
\end{remark}
\begin{lemma}
	\label{lem:Maximum of a multinomial coefficient}For all $K,M\in\mathbb{N}$,
	the maximal value of $\binom{K}{a_{1},\ldots,a_{M}}$ is achieved
	when $\forall j_{1},j_{2}\in\left[M\right]$ it holds that $\left|a_{j_{1}}-a_{j_{2}}\right|\leq1$.
\end{lemma}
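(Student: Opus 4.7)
The plan is to prove this standard combinatorial fact by a swap/exchange argument. Since the set of admissible tuples $(a_1,\ldots,a_M) \in \mathbb{N}^M$ with $\sum_{j=1}^M a_j = K$ is finite, the maximum of $\binom{K}{a_1,\ldots,a_M}$ is attained at some tuple; I would argue by contrapositive that any maximizing tuple must be ``balanced'' in the sense of the lemma statement.

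Concretely, suppose a maximizer $(a_1,\ldots,a_M)$ contained two coordinates $j_1,j_2 \in [M]$ with $a_{j_1}-a_{j_2} \geq 2$. I would consider the perturbed tuple $(a'_1,\ldots,a'_M)$ obtained by setting $a'_{j_1} = a_{j_1}-1$ and $a'_{j_2} = a_{j_2}+1$, leaving all other entries unchanged. This perturbed tuple still satisfies $\sum_j a'_j = K$ and lies in $\mathbb{N}^M$ because $a'_{j_1} \geq a_{j_2}+1 \geq 0$. A direct computation gives
\[
\frac{\binom{K}{a'_1,\ldots,a'_M}}{\binom{K}{a_1,\ldots,a_M}} \;=\; \frac{a_{j_1}! \, a_{j_2}!}{(a_{j_1}-1)! \, (a_{j_2}+1)!} \;=\; \frac{a_{j_1}}{a_{j_2}+1},
\]
and the assumption $a_{j_1} \geq a_{j_2}+2$ yields $a_{j_1} > a_{j_2}+1$, so the ratio exceeds $1$. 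Hence the perturbed tuple gives a strictly larger multinomial coefficient, contradicting maximality.

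Therefore, at any maximizer, every pair of coordinates differs by at most $1$, which is exactly the claim. The argument is elementary and I do not anticipate any real obstacle; the only point requiring a word of care is that the ``balanced'' tuple is in fact unique up to permutation (given $K = qM + r$ with $0 \leq r < M$, it consists of $r$ entries equal to $q+1$ and $M-r$ entries equal to $q$), and that the exchange argument can always be iterated until reaching such a tuple, since each swap strictly increases the multinomial coefficient and the admissible set is finite.
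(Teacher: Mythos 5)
Your proof is correct and follows essentially the same exchange argument as the paper: both assume a maximizer with two coordinates differing by more than one, move a unit from the larger to the smaller, and compare the multinomial coefficients via the ratio $a_{j_1}/(a_{j_2}+1)>1$ to derive a contradiction. The extra remarks about the perturbed tuple staying in $\mathbb{N}^M$ and the uniqueness of the balanced configuration are sound but not needed beyond what the paper's argument already establishes.
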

\begin{proof}
	\noindent Let $a_{1},\ldots,a_{M}$ be a sequence of non-negative
	integers such that $a_{1}+\ldots+a_{M}=K$ and $\binom{K}{a_{1},\ldots,a_{M}}$
	is maximal. Assume towards a contradiction there exist $j_{1},j_{2}\in\left[M\right]$
	such that $a_{j_{1}}-a_{j_{2}}>1\iff\frac{a_{j_{2}}+1}{a_{j_{1}}}<1$,
	then:
	\begin{align*}
	\binom{K}{a_{1},\ldots,a_{M}} & =\frac{K!}{\prod_{i=1}^{M}a_{i}!}\\
	& =\frac{K!}{\left(\prod_{i\in\left[M\right]\backslash\left\{ j_{1},j_{2}\right\} }a_{i}!\right)\cdot\left(a_{j_{1}}-1\right)!\cdot\left(a_{j_{2}}+1\right)!}\cdot\frac{a_{j_{2}}+1}{a_{j_{1}}}\\
	& <\frac{K!}{\left(\prod_{i\in\left[M\right]\backslash\left\{ j_{1},j_{2}\right\} }a_{i}!\right)\cdot\left(a_{j_{1}}-1\right)!\cdot\left(a_{j_{2}}+1\right)!}\\
	& =\binom{K}{a_{1},\ldots,a_{j_{1}}-1,\ldots,a_{j_{2}}+1,\ldots,a_{M}}
	\end{align*}
	in contrary to the maximality of $\binom{K}{a_{1},\ldots,a_{M}}$.
	Therefore, $\forall j_{1},j_{2}\in\left[M\right]$, $\left|a_{j_{1}}-a_{j_{2}}\right|\leq1$.
\end{proof}
\begin{lemma}
	\label{lem:Recurrence relation of =00007BK =00005Cchoose n=00007D=00005Ceta^=00007Bn=00007D=00005Cbinom=00007Bn=00007D=00007B=00005Cfrac=00007Bn=00007D=00007BM=00007D,=00005Cldots,=00005Cfrac=00007Bn=00007D=00007BM=00007D=00007D=00005Cbinom=00007BK-n=00007D=00007B=00005Cfrac=00007BK-n=00007D=00007BM=00007D,=00005Cldots,=00005Cfrac=00007BK-n=00007D=00007BM=00007D=00007D}Let
	$K,M$ be two fixed natural numbers, $\eta\in\left(0,1\right]$ and
	denote 
	\[
	S\left(n\right)\coloneqq{K \choose n}\eta^{n}\binom{n}{\frac{n}{M},\ldots,\frac{n}{M}}\binom{K-n}{\frac{K-n}{M},\ldots,\frac{K-n}{M}}
	\]
	Then for all $n\in\left[K\right]\cup\left\{ 0\right\} $: 
	\begin{align}
	& =\begin{alignedat}{1}S\left(n\right) & =\left(\prod_{j=0}^{\left\lfloor \frac{n}{M}\right\rfloor -1}\left(\frac{\eta\left(K-jM\right)}{\left(j+1\right)M}\right)^{M}\right)\\
	& \cdot\left(\frac{\eta\left(K-\left\lfloor \frac{n}{M}\right\rfloor M\right)}{\left(\left\lfloor \frac{n}{M}\right\rfloor +1\right)M}\right)^{n\mod M}\binom{K}{\frac{K}{M},\ldots,\frac{K}{M}}
	\end{alignedat}
	\label{eq:recurrence}
	\end{align}
\end{lemma}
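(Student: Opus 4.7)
The plan is to prove the identity by induction on $n$, verifying that both sides satisfy the same multiplicative recurrence
\[
S(n+1) = S(n) \cdot \frac{\eta(K - \lfloor n/M \rfloor M)}{(\lfloor n/M \rfloor + 1) M}.
\]
The base case $n=0$ is immediate: the outer product is empty (since the upper index $\lfloor 0/M \rfloor - 1 = -1$) and the trailing exponent $0 \bmod M$ vanishes, so the right-hand side reduces to $\binom{K}{K/M, \ldots, K/M}$, which is exactly $S(0)$.

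For the inductive step I first read off the step-ratio from the proposed right-hand side, by a short case split on whether $M \mid (n+1)$. Write $q = \lfloor n/M \rfloor$. When $(n+1) \bmod M \neq 0$, the index $\lfloor \cdot /M \rfloor$ is unchanged and only the trailing exponent advances by $1$, contributing a factor $\frac{\eta(K-qM)}{(q+1)M}$. When $(n+1) = (q+1)M$, the outer product gains a new full factor $\bigl(\tfrac{\eta(K-qM)}{(q+1)M}\bigr)^{M}$, while the previous trailing factor of exponent $M-1$ is consumed; the new trailing factor has exponent $0$ and the net ratio is once again $\frac{\eta(K-qM)}{(q+1)M}$.

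Next I compute the same ratio directly from the definition,
\[
\frac{S(n+1)}{S(n)} \;=\; \eta \cdot \frac{K-n}{n+1} \cdot \frac{\binom{n+1}{(n+1)/M,\ldots}}{\binom{n}{n/M,\ldots}} \cdot \frac{\binom{K-n-1}{(K-n-1)/M,\ldots}}{\binom{K-n}{(K-n)/M,\ldots}}.
\]
Under the floor convention of the Remark, advancing the total mass from $n$ to $n+1$ promotes exactly one slot from $q$ to $q+1$, so the first multinomial ratio equals $(n+1)/(q+1)$, regardless of whether $n+1$ crosses a multiple of $M$. Symmetrically, decrementing the mass $K-n$ by $1$ demotes exactly one slot: setting $p = \lfloor (K-n)/M \rfloor$, the demoted slot starts at $p+1$ when $M \nmid (K-n)$ and at $p$ when $M \mid (K-n)$, giving a ratio of $(p+1)/(K-n)$ in the first sub-case and $p/(K-n)$ in the second. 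Telescoping the $(K-n)$ and $(n+1)$ factors leaves $\eta(p+1)/(q+1)$ or $\eta p/(q+1)$ respectively. The divisibility identities tying $p,q,K,M$ (namely $M(p+1) = K-qM$ when $M \nmid (K-n)$ and $Mp = K-qM$ when $M \mid (K-n)$) then convert both sub-cases into $\frac{\eta(K-qM)}{(q+1)M}$, matching the ratio obtained from the right-hand side.

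The main obstacle is the combinatorial bookkeeping of which slot of each multinomial shifts, and by how much, under the floor convention of the Remark. The two boundary points $M \mid (n+1)$ and $M \mid (K-n)$ are where this is delicate: at those transitions the changing slot is a ``short'' slot rather than a ``tall'' one, flipping the relevant factorial ratio between $p$ and $p+1$ (respectively $q$ and $q+1$). Once this accounting is done carefully, the remaining arithmetic is a one-line telescoping, and the induction goes through.
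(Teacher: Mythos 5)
Your overall strategy --- induction on $n$ by matching the step ratio $S(n{+}1)/S(n)$ computed from the definition against the step ratio read off the proposed closed form --- is the same strategy the paper uses, and you execute the multinomial bookkeeping more carefully than the paper does (the paper just asserts the ratio $\tfrac{\eta(K-n)}{(\lfloor n/M\rfloor+1)M}$ without the case split on how the floor convention allocates the "short" and "tall" slots; that asserted ratio is in fact incorrect even when $M\mid K$, as a check with $K=4,M=2,n=1$ shows). Your base case is fine, your derivation of $\tfrac{S(n+1)}{S(n)}=\tfrac{\eta(p+1)}{q+1}$ when $M\nmid(K-n)$ and $\tfrac{\eta p}{q+1}$ when $M\mid(K-n)$ is correct, and your reading of the right-hand-side's step ratio as $\tfrac{\eta(K-qM)}{(q+1)M}$ is also correct.

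The gap is the pair of "divisibility identities" you invoke to reconcile the two: $M(p+1)=K-qM$ when $M\nmid(K-n)$, and $Mp=K-qM$ when $M\mid(K-n)$. These are stated without justification, and they are false in general. Writing $n=qM+r$ and $K-n=pM+s$ with $0\le r,s<M$, one has $K-qM=pM+r+s$, so the first identity is equivalent to $r+s=M$ and the second to $r=0$. Since $r+s\equiv K\pmod M$ and $0\le r+s\le 2M-2$, the required conditions hold for every $n$ if and only if $M\mid K$; whenever $K\bmod M\neq 0$ there are values of $n$ (e.g.\ $n=0$) for which $r+s=K\bmod M\in\{1,\dots,M-1\}$ and both identities fail. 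This is not just a hole in the proof --- the lemma itself is false without the hypothesis $M\mid K$: for $K=5$, $M=2$, $n=1$ one computes $S(1)=\binom{5}{1}\eta\binom{1}{0,1}\binom{4}{2,2}=30\eta$, while the right-hand side gives $\left(\tfrac{5\eta}{2}\right)\binom{5}{2,3}=25\eta$. So to close your proof you must either add the hypothesis $M\mid K$ (in which case $r+s\in\{0,M\}$ and your identities follow immediately, completing the argument), or note that the lemma as written needs that hypothesis. Since the paper's own proof tacitly relies on exactly the same divisibility (and is sloppier about the step ratio on top of that), your proposal is actually closer to a correct treatment than the original, but the unproved divisibility claim must be flagged and resolved.
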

\begin{proof}
	We will prove by induction.
	\begin{itemize}
		\item \textbf{\textit{Base case:}} $n=0$.
		
		\begin{align*}
		S\left(0\right) & ={K \choose 0}\eta^{0}\binom{n}{\frac{0}{M},\ldots,\frac{0}{M}}\binom{K}{\frac{K-0}{M},\ldots,\frac{K-0}{M}}\\
		& =\left(\prod_{j=0}^{\left\lfloor \frac{0}{M}\right\rfloor -1}\left(\frac{\eta\left(K-jM\right)}{\left(j+1\right)M}\right)^{M}\right)\\
		& \cdot\left(\frac{\eta\left(K-\left\lfloor \frac{0}{M}\right\rfloor M\right)}{\left(\left\lfloor \frac{0}{M}\right\rfloor +1\right)M}\right)^{0\mod M}\cdot\binom{K}{\frac{K}{M},\ldots,\frac{K}{M}}
		\end{align*}
		where the second equality is due to the fact that:
		\[
		\left(\prod_{j=0}^{\left\lfloor \frac{0}{M}\right\rfloor -1}\left(\frac{\eta\left(K-jM\right)}{\left(j+1\right)M}\right)^{M}\right)=1
		\]
		as an empty product, and:
		\[
		\left(\frac{\eta\left(K-0\cdot M\right)}{1\cdot M}\right)^{0\mod M}=1
		\]
		
		Therefore, (\ref{eq:recurrence}) is true for $n=0$.
		\item \textbf{\textit{Induction step:}} Let $n\geq0$ such that (\ref{eq:recurrence})
		holds for $n$.
		
		\begin{align*}
		S\left(n+1\right) & ={K \choose n+1}\eta^{n+1}\binom{n+1}{\frac{n}{M},\ldots,\frac{n}{M}+1}\binom{K-n-1}{\frac{K-n}{M},\ldots,\frac{K-n}{M}-1}\\
		& =\frac{\eta\left(K-n\right)}{\left(\frac{n}{M}+1\right)M}\cdot S\left(n\right)\\
		& =\frac{\eta\left(K-n\right)}{\left(\frac{n}{M}+1\right)M}\cdot\left(\prod_{j=0}^{\left\lfloor \frac{n}{M}\right\rfloor -1}\left(\frac{\eta\left(K-jM\right)}{\left(j+1\right)M}\right)^{M}\right)\\
		& \cdot\left(\frac{\eta\left(K-\left\lfloor \frac{n}{M}\right\rfloor M\right)}{\left(\left\lfloor \frac{n}{M}\right\rfloor +1\right)M}\right)^{n\mod M}\binom{K}{\frac{K}{M},\ldots,\frac{K}{M}}\\
		& =\left(\prod_{j=0}^{\left\lfloor \frac{n+1}{M}\right\rfloor -1}\left(\frac{\eta\left(K-jM\right)}{\left(j+1\right)M}\right)^{M}\right)\\
		& \cdot\left(\frac{\eta\left(K-\left\lfloor \frac{n+1}{M}\right\rfloor M\right)}{\left(\left\lfloor \frac{n+1}{M}\right\rfloor +1\right)M}\right)^{\left(n+1\right)\mod M}\binom{K}{\frac{K}{M},\ldots,\frac{K}{M}}
		\end{align*}
		Thus, (\ref{eq:recurrence}) holds for $n+1$, and the proof of the
		induction step is complete. Hence, by induction, (\ref{eq:recurrence})
		is correct for all $n\in\left[K\right]\cup\left\{ 0\right\} $.
		
	\end{itemize}
\end{proof}
\begin{lemma}
	\label{lem:Maximum of =00007BK =00005Cchoose n=00007D=00005Ceta^=00007Bn=00007D=00005Cbinom=00007Bn=00007D=00007Ba_=00007B1=00007D,=00005Cldots,a_=00007BM=00007D=00007D=00005Cbinom=00007BK-n=00007D=00007Bb_=00007B1=00007D,=00005Cldots,b_=00007BM=00007D=00007D}Let
	$K,M$ be two fixed natural numbers, and $\eta\in\left(0,1\right]$.
	Then the maximum of:
	\[
	{K \choose n}\eta^{n}\binom{n}{a_{1},\ldots,a_{M}}\binom{K-n}{b_{1},\ldots,b_{M}}
	\]
	is achieved when: 
	\[
	n=\left(\left\lfloor \frac{\eta K-M}{M\left(1+\eta\right)}\right\rfloor +1\right)M\simeq\frac{\eta K}{1+\eta}
	\]
	and for all $i\in\left[M\right]$, $\left|a_{i}-\frac{n}{M}\right|\leq1$
	and $\left|b_{i}-\frac{K-n}{M}\right|\leq1$.
\end{lemma}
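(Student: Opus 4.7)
The plan is to decouple the optimization into two parts: first fix $n$ and optimize over the multinomial coordinates $a_{1},\ldots,a_{M}$ and $b_{1},\ldots,b_{M}$, then optimize the resulting one-dimensional function of $n$. Since $\binom{K}{n}$ and $\eta^{n}$ do not depend on the $a_{i}$'s or $b_{i}$'s, for any fixed $n$ the problem reduces to maximizing $\binom{n}{a_{1},\ldots,a_{M}}$ subject to $\sum a_{i}=n$, and independently $\binom{K-n}{b_{1},\ldots,b_{M}}$ subject to $\sum b_{i}=K-n$. By Lemma~\ref{lem:Maximum of a multinomial coefficient} (applied twice), both maxima are attained exactly when the coordinates are as balanced as possible, i.e.~$|a_{i}-n/M|\le 1$ and $|b_{i}-(K-n)/M|\le 1$, which already establishes the second claim.

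Substituting these balanced choices gives the one-dimensional objective
\[
S(n)\;=\;\binom{K}{n}\,\eta^{n}\binom{n}{\tfrac{n}{M},\ldots,\tfrac{n}{M}}\binom{K-n}{\tfrac{K-n}{M},\ldots,\tfrac{K-n}{M}},
\]
which is exactly the quantity controlled by Lemma~\ref{lem:Recurrence relation of =00007BK =00005Cchoose n=00007D=00005Ceta^=00007Bn=00007D=00005Cbinom=00007Bn=00007D=00007B=00005Cfrac=00007Bn=00007D=00007BM=00007D,=00005Cldots,=00005Cfrac=00007Bn=00007D=00007BM=00007D=00007D=00005Cbinom=00007BK-n=00007D=00007B=00005Cfrac=00007BK-n=00007D=00007BM=00007D,=00005Cldots,=00005Cfrac=00007BK-n=00007D=00007BM=00007D=00007D}. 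From the induction step in that lemma we read off the one-step ratio
\[
\frac{S(n+1)}{S(n)}\;=\;\frac{\eta\,(K-n)}{(\lfloor n/M\rfloor+1)\,M}.
\]
Thus $S$ is unimodal in $n$: it is non-decreasing as long as this ratio is $\ge 1$ and non-increasing thereafter. The optimal $n$ is therefore the largest integer for which $\eta(K-n)\ge (\lfloor n/M\rfloor+1)M$, and the maximum is located at the integer where this inequality flips.

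To pin down the stated expression for $n^{*}$, I would restrict attention to multiples of $M$ (so that $\lfloor n/M\rfloor+1=n/M+1$), since within any block $[kM,(k+1)M)$ the denominator of the ratio is constant while the numerator is strictly decreasing in $n$, so $S$ is maximized at the right endpoint of the last block on which the ratio still exceeds $1$. Setting $n=kM$ and solving $\eta(K-kM)\ge (k+1)M$ yields $k\le (\eta K-M)/(M(1+\eta))$, so the optimum block index is $k^{*}=\lfloor(\eta K-M)/(M(1+\eta))\rfloor$ and $n^{*}=(k^{*}+1)M$, matching the statement; a short check that $S$ on the next block no longer exceeds $S(n^{*})$ completes the argument, and dropping the floor gives the asymptotic $n^{*}\simeq \eta K/(1+\eta)$. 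The only delicate step I foresee is the bookkeeping at the block boundaries when $M\nmid n$, which requires combining the general recurrence of Lemma~\ref{lem:Recurrence relation of =00007BK =00005Cchoose n=00007D=00005Ceta^=00007Bn=00007D=00005Cbinom=00007Bn=00007D=00007B=00005Cfrac=00007Bn=00007D=00007BM=00007D,=00005Cldots,=00005Cfrac=00007Bn=00007D=00007BM=00007D=00007D=00005Cbinom=00007BK-n=00007D=00007B=00005Cfrac=00007BK-n=00007D=00007BM=00007D,=00005Cldots,=00005Cfrac=00007BK-n=00007D=00007BM=00007D=00007D} with the floor conventions spelled out in the preceding remark; everything else is a monotonicity argument on a unimodal sequence.
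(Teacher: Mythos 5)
Your decomposition is the same as the paper's: for fixed $n$, balance the $a_{i}$'s and $b_{i}$'s independently via Lemma~\ref{lem:Maximum of a multinomial coefficient}, reducing to the one-dimensional profile $S(n)$; then locate the peak of $S$ from the one-step ratio supplied by Lemma~\ref{lem:Recurrence relation of =00007BK =00005Cchoose n=00007D=00005Ceta^=00007Bn=00007D=00005Cbinom=00007Bn=00007D=00007B=00005Cfrac=00007Bn=00007D=00007BM=00007D,=00005Cldots,=00005Cfrac=00007Bn=00007D=00007BM=00007D=00007D=00005Cbinom=00007BK-n=00007D=00007B=00005Cfrac=00007BK-n=00007D=00007BM=00007D,=00005Cldots,=00005Cfrac=00007BK-n=00007D=00007BM=00007D=00007D} and a unimodality argument, and your final expression for $n^{*}$ agrees with the statement.

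One thing to tighten, which also dissolves the boundary bookkeeping you flag at the end: the ratio you read off, $S(n+1)/S(n)=\eta(K-n)/\bigl((\lfloor n/M\rfloor+1)M\bigr)$, is not quite what the closed form~(\ref{eq:recurrence}) actually gives. Unfolding that closed form yields
$S(n+1)/S(n)=\eta\bigl(K-\lfloor n/M\rfloor M\bigr)/\bigl((\lfloor n/M\rfloor+1)M\bigr)$,
and the paper's intermediate display $\eta(K-n)/\bigl((\tfrac{n}{M}+1)M\bigr)$ in the induction step of that lemma is similarly loose shorthand. The corrected ratio is \emph{constant} over each block $n\in\{kM,\ldots,(k+1)M-1\}$ and strictly decreasing across blocks, so unimodality is immediate and there is no ambiguity at block boundaries: whenever the block-constant ratio $\eta(K-kM)/\bigl((k+1)M\bigr)$ is $\ge 1$, $S$ is non-decreasing across the entire block, hence the peak sits exactly at the right endpoint of the last such block, giving $n^{*}=\bigl(\lfloor(\eta K-M)/(M(1+\eta))\rfloor+1\bigr)M$ directly, without separately restricting to multiples of $M$. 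Since your formula and the corrected one coincide at block starts $n=kM$, the value of $k^{*}$ you compute, and hence $n^{*}$, is still right.
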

\begin{proof}
	From lemma \ref{lem:Maximum of a multinomial coefficient} we know
	that a multinumial coefficient reaches its maximum when the sum is
	evenly distributed between all indices. Since the $a_{i}$s and $b_{i}$
	can be chosen independently of each other given $K$ and $n$, we
	may assume without loss of generality that no matter the value of
	$n$, for all $i\in\left[M\right]$, it holds that $\left|a_{i}-\frac{n}{M}\right|\leq1$
	and $\left|b_{i}-\frac{K-n}{M}\right|\leq1$.
	
	Define $S$ as in lemma \ref{lem:Recurrence relation of =00007BK =00005Cchoose n=00007D=00005Ceta^=00007Bn=00007D=00005Cbinom=00007Bn=00007D=00007B=00005Cfrac=00007Bn=00007D=00007BM=00007D,=00005Cldots,=00005Cfrac=00007Bn=00007D=00007BM=00007D=00007D=00005Cbinom=00007BK-n=00007D=00007B=00005Cfrac=00007BK-n=00007D=00007BM=00007D,=00005Cldots,=00005Cfrac=00007BK-n=00007D=00007BM=00007D=00007D}.
	
	Note that the function:
	\[
	T\left(x\right)=\binom{K}{x}\binom{x}{\frac{x}{M},\ldots,\frac{x}{M}}\binom{K-x}{\frac{K-x}{M},\ldots,\frac{K-x}{M}}
	\]
	is Gaussian-shaped and therefore unimodal and has a unique maximum.
	$\eta^{x}$ for $\eta\in\left(0,1\right]$ is monotonically decreasing,
	and therefore their product is also unimodal.
	
	Since we are only interested in solutions for $n\in\mathbb{N}$, we
	can reduce our problem to finding the maximal $n$ such that: 
	\[
	S\left(n-1\right)\leq S\left(n\right)\iff\frac{S\left(n\right)}{S\left(n-1\right)}\geq1
	\]
	and from lemma \ref{lem:Recurrence relation of =00007BK =00005Cchoose n=00007D=00005Ceta^=00007Bn=00007D=00005Cbinom=00007Bn=00007D=00007B=00005Cfrac=00007Bn=00007D=00007BM=00007D,=00005Cldots,=00005Cfrac=00007Bn=00007D=00007BM=00007D=00007D=00005Cbinom=00007BK-n=00007D=00007B=00005Cfrac=00007BK-n=00007D=00007BM=00007D,=00005Cldots,=00005Cfrac=00007BK-n=00007D=00007BM=00007D=00007D}
	we have:
	\begin{align*}
	\frac{S\left(n\right)}{S\left(n-1\right)} & =\frac{\eta\left(K-\left\lfloor \frac{n-1}{M}\right\rfloor M\right)}{\left(\left\lfloor \frac{n-1}{M}\right\rfloor +1\right)M}\geq1\\
	\iff & \left\lfloor \frac{n-1}{M}\right\rfloor M\leq\frac{\eta K-M}{1+\eta}
	\end{align*}
	So we get that $S\left(n\right)$ is monotonically increasing as long
	as $\left\lfloor \frac{n-1}{M}\right\rfloor M\leq\frac{\eta K-M}{1+\eta}$,
	and the largest integer for which this condition holds is: 
	\[
	n=\left(\left\lfloor \frac{\eta K-M}{M\left(1+\eta\right)}\right\rfloor +1\right)M\simeq\frac{\eta K}{1+\eta}
	\]
\end{proof}
\begin{lemma}
	\label{lem:Number of integer lattice points in a high-dimensional ball}Denote
	by $\mathcal{N}\left(\mathcal{B}_{R}^{d}\right)$ the number of integer
	lattice points in $\mathcal{B}_{R}^{d}$ (the $d$-dimensional zero-centered
	ball of radius $R$). Then: 
	\[
	\frac{\left(\frac{\pi e}{2}\right)^{\frac{d}{2}}}{\sqrt{\pi d}}\left(\frac{2R}{\sqrt{d}}-1\right)^{d}\leq\mathcal{N}\left(\mathcal{B}_{R}^{d}\right)\leq\frac{\left(\frac{\pi e}{2}\right)^{\frac{d}{2}}}{\sqrt{\pi d}}\left(\frac{2R}{\sqrt{d}}+1\right)^{d}
	\]
\end{lemma}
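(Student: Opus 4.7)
The plan is the classical lattice cube packing/covering argument combined with Stirling's formula for the volume of the $d$-dimensional ball. For the upper bound, I would associate to each lattice point $x \in \mathbb{Z}^d \cap \mathcal{B}_R^d$ the unit cube $Q_x = x + [-\tfrac12,\tfrac12]^d$; these cubes have disjoint interiors, their total volume equals $\mathcal{N}(\mathcal{B}_R^d)$, and since each cube has diameter $\sqrt{d}$, they all lie inside $\mathcal{B}_{R+\sqrt{d}/2}^d$. This yields $\mathcal{N}(\mathcal{B}_R^d) \leq V_d(R+\sqrt{d}/2)$, where $V_d(r)=\pi^{d/2}r^d/\Gamma(d/2+1)$ denotes the volume of the $d$-ball of radius $r$. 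For the lower bound the dual argument works: any $y \in \mathcal{B}_{R-\sqrt{d}/2}^d$ lies within distance $\sqrt{d}/2$ of some lattice point $x$, which therefore lies in $\mathcal{B}_R^d$, so the cubes around lattice points in $\mathcal{B}_R^d$ cover $\mathcal{B}_{R-\sqrt{d}/2}^d$, giving $\mathcal{N}(\mathcal{B}_R^d) \ge V_d(R-\sqrt{d}/2)$.

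The remaining work is essentially algebraic. I would rewrite
\[
\bigl(R \pm \tfrac{\sqrt{d}}{2}\bigr)^d \;=\; \Bigl(\tfrac{\sqrt{d}}{2}\Bigr)^d \Bigl(\tfrac{2R}{\sqrt{d}} \pm 1\Bigr)^d,
\]
so that the $\pm 1$ factor demanded by the statement appears naturally, and then apply Stirling's approximation $\Gamma(d/2+1)\approx\sqrt{\pi d}\,(d/(2e))^{d/2}$ to the constant prefactor. A short calculation gives
\[
\frac{\pi^{d/2}}{\Gamma(d/2+1)}\Bigl(\tfrac{\sqrt{d}}{2}\Bigr)^d \;=\; \frac{(\pi e/2)^{d/2}}{\sqrt{\pi d}},
\]
which is exactly the constant prefactor that appears on both sides of the claimed inequality. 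Substituting this back into the cube packing/covering bounds produces the two-sided estimate in the statement.

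The main obstacle I expect is choosing the correct version of Stirling so that both sides go through with the same clean constant. The standard inequality $\Gamma(d/2+1)\ge\sqrt{\pi d}(d/(2e))^{d/2}$ is directly usable for the upper bound since it only decreases $V_d(r)$. For the lower bound one needs the reverse inequality on $\Gamma$, whose sharp form carries a multiplicative correction of the form $e^{c/d}$; this either has to be absorbed into an asymptotic reading of the stated inequality, or handled via a slightly more careful (but entirely standard) Stirling-type estimate such as $\Gamma(d/2+1)\le\sqrt{\pi d}(d/(2e))^{d/2}e^{1/(6d)}$, at the cost of an asymptotically negligible correction. Once Stirling is invoked in both directions, the cube packing/covering step yields the two-sided bound, and no further structural argument is required.
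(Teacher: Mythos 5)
Your approach matches the paper's: both use the cube packing/covering argument (unit cubes around lattice points have disjoint interiors, are contained in $\mathcal{B}_{R+\sqrt{d}/2}^{d}$, and cover $\mathcal{B}_{R-\sqrt{d}/2}^{d}$), followed by Stirling's approximation applied to $V_d(r)=\pi^{d/2}r^d/\Gamma(d/2+1)$ to extract the stated prefactor. You also correctly observe that the Stirling step yields a genuine inequality only in the upper-bound direction, while the lower bound requires either an asymptotic reading or a controlled $e^{c/d}$ correction --- a subtlety the paper's own proof glosses over by writing $\simeq$ and assuming $d$ even for convenience.
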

\begin{proof}
	\noindent Let $\mathcal{I}\left(\mathcal{B}_{R}^{d}\right)$ be the
	set of all integer lattice points in $\mathcal{B}_{R}^{d}$. For $\boldsymbol{x}\in\mathcal{I}\left(\mathcal{B}_{R}^{d}\right)$,
	define: 
	\[
	C_{\boldsymbol{x}}\coloneqq\times_{i=1}^{d}\left[x_{i}-\frac{1}{2},x_{i}+\frac{1}{2}\right]=\left\{ \boldsymbol{y}\in\mathbb{R}^{d}\left|\forall i\in\left[d\right],x_{i}-\frac{1}{2}\leq y_{i}\leq x_{i}+\frac{1}{2}\right.\right\} 
	\]
	Let $\boldsymbol{y}\in\bigcup_{\boldsymbol{x}\in\mathcal{I}\left(\mathcal{B}_{R}^{d}\right)}C_{\boldsymbol{x}}$,
	so there exists $\boldsymbol{x}\in\mathcal{I}\left(\mathcal{B}_{R}^{d}\right)$
	such that $\boldsymbol{y}\in C_{\boldsymbol{x}}$, and from the triangle
	inequality we get: 
	\[
	\left\Vert \boldsymbol{y}\right\Vert \leq\left\Vert \boldsymbol{x}\right\Vert +\left\Vert \boldsymbol{y}-\boldsymbol{x}\right\Vert \leq R+\sqrt{\left(\frac{1}{2}\right)^{2}+\ldots+\left(\frac{1}{2}\right)^{2}}=R+\frac{\sqrt{d}}{2}
	\]
	and therefore $\boldsymbol{y}\in\mathcal{B}_{R+\frac{\sqrt{d}}{2}}^{d}$.
	Since $\boldsymbol{y}$ was chosen arbitrarily, we get that:
	\[
	\bigcup_{\boldsymbol{x}\in\mathcal{I}\left(\mathcal{B}_{R}^{d}\right)}C_{\boldsymbol{x}}\subseteq\mathcal{B}_{R+\frac{\sqrt{d}}{2}}^{d}
	\]
	Note that $Vol\left(C_{\boldsymbol{x}}\right)=1$ for all $\boldsymbol{x}\in\mathcal{I}\left(\mathcal{B}_{R}^{d}\right)$
	and that for $\boldsymbol{x},\boldsymbol{x}'\in\mathcal{I}\left(\mathcal{B}_{R}^{d}\right)$
	such that $\boldsymbol{x}\neq\boldsymbol{x}'$, $C_{\boldsymbol{x}}\cap C_{\boldsymbol{x}'}$
	is a set of measure zero, hence: 
	\begin{align*}
	\mathcal{N}\left(\mathcal{B}_{R}^{d}\right) & =\left|\mathcal{I}\left(\mathcal{B}_{R}^{d}\right)\right|\\
	& =Vol\left(\bigcup_{\boldsymbol{x}\in\mathcal{I}\left(\mathcal{B}_{R}^{d}\right)}C_{\boldsymbol{x}}\right)\\
	& \leq Vol\left(\mathcal{B}_{R+\frac{\sqrt{d}}{2}}^{d}\right)\\
	& =\frac{\pi^{\frac{d}{2}}}{\Gamma\left(\frac{d}{2}+1\right)}\left(R+\frac{\sqrt{d}}{2}\right)^{d}
	\end{align*}
	Assume for convenience that $d\mod2\equiv0$, so $\Gamma\left(\frac{d}{2}+1\right)=\left(\frac{d}{2}\right)!$,
	and Stirling's approximation yields:
	\[
	\simeq\frac{\pi^{\frac{d}{2}}}{\sqrt{\pi d}\cdot\left(\frac{d}{2e}\right)^{\frac{d}{2}}}\left(R+\frac{\sqrt{d}}{2}\right)^{d}=\frac{\left(\frac{\pi e}{2}\right)^{\frac{d}{2}}}{\sqrt{\pi d}}\left(\frac{2R}{\sqrt{d}}+1\right)^{d}
	\]
	On the other hand, note that $\mathcal{B}_{R-\frac{\sqrt{d}}{2}}^{d}\subseteq\bigcup_{\boldsymbol{x}\in\mathcal{I}\left(\mathcal{B}_{R}^{d}\right)}C_{\boldsymbol{x}}$,
	and therefore:
	\begin{align*}
	\mathcal{N}\left(\mathcal{B}_{R}^{d}\right) & =Vol\left(\bigcup_{\boldsymbol{x}\in\mathcal{I}\left(\mathcal{B}_{R}^{d}\right)}C_{\boldsymbol{x}}\right)\\
	& \geq Vol\left(\mathcal{B}_{R-\frac{\sqrt{d}}{2}}^{d}\right)\\
	& =\frac{\pi^{\frac{d}{2}}}{\Gamma\left(\frac{d}{2}+1\right)}\left(R-\frac{\sqrt{d}}{2}\right)^{d}\\
	& \simeq\frac{\left(\frac{\pi e}{2}\right)^{\frac{d}{2}}}{\sqrt{\pi d}}\left(\frac{2R}{\sqrt{d}}-1\right)^{d}
	\end{align*}
\end{proof}
\begin{lemma}
	\label{lem:Characterizaion of the set of non-negligible multinomial coefficients}Let
	$K,M$ be two fixed natural numbers, $s\in\left(0,1\right)$ a constant
	sensitivity parameter, and let''
	\[
	T_{K,M}\coloneqq\left\{ \boldsymbol{a}\in\mathbb{N}^{M}\left|\binom{K}{a_{1},\ldots,a_{M}}\geq s\cdot\binom{K}{\frac{K}{M},\ldots,\frac{K}{M}}\right.\right\} 
	\]
	Then:
	\begin{align*}
	& \left\{ \boldsymbol{a}\in\mathbb{N}^{M}\left|\boldsymbol{a}^{T}\vec{\boldsymbol{1}}=K,\left\Vert \boldsymbol{a}-\frac{K}{M}\cdot\vec{\boldsymbol{1}}\right\Vert ^{2}\leq\frac{K}{M}\ln\left(s^{-1}\right)\right.\right\} \\
	& \subseteq T_{K,M}\\
	& \subseteq\left\{ \boldsymbol{a}\in\mathbb{N}^{M}\left|\boldsymbol{a}^{T}\vec{\boldsymbol{1}}=K,\left\Vert \boldsymbol{a}-\frac{K}{M}\cdot\vec{\boldsymbol{1}}\right\Vert ^{2}\leq4K\ln\left(s^{-1}\right)\right.\right\} 
	\end{align*}
\end{lemma}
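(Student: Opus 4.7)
The plan is to sandwich $-\log R(\boldsymbol{a}) := \log[\binom{K}{K/M,\ldots,K/M}/\binom{K}{a_1,\ldots,a_M}]$ between two positive multiples of $\|\boldsymbol{\delta}\|^2$, where $\boldsymbol{\delta} := \boldsymbol{a} - (K/M)\vec{\boldsymbol{1}}$ satisfies $\sum_i \delta_i = 0$. Once such upper and lower quadratic bounds are available, the definition of $T_{K,M}$ (namely $R(\boldsymbol{a}) \geq s$) translates directly into the two ball-shaped inclusions stated in the lemma.

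First I would expand $R(\boldsymbol{a}) = \prod_i (K/M)!/a_i!$ by writing each factor explicitly as a product of $|\delta_i|$ consecutive integers near $K/M$. Taking logarithms, factoring out $\log(K/M)$ from each factor, and using $\sum_i \delta_i = 0$ to cancel the linear terms yields the representation
\[
-\log R(\boldsymbol{a}) = \sum_{i:\delta_i>0}\sum_{j=1}^{\delta_i}\log\!\Bigl(1+\tfrac{jM}{K}\Bigr) + \sum_{i:\delta_i<0}\sum_{j=0}^{|\delta_i|-1}\log\!\Bigl(\tfrac{1}{1-jM/K}\Bigr),
\]
which is manifestly non-negative, reconfirming that $(K/M,\ldots,K/M)$ is the mode.

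For the inner inclusion I would apply the elementary upper bounds $\log(1+t) \leq t$ and $-\log(1-t) \leq t/(1-t)$ termwise; summing the resulting arithmetic progressions and using the inner-ball hypothesis to control each $|\delta_i|$ yields $-\log R(\boldsymbol{a}) \leq (M/K)\|\boldsymbol{\delta}\|^2$. The assumption $\|\boldsymbol{\delta}\|^2 \leq (K/M)\ln(s^{-1})$ then gives $-\log R(\boldsymbol{a}) \leq \ln(s^{-1})$, i.e.\ $\boldsymbol{a} \in T_{K,M}$. For the outer inclusion I would apply the reverse bounds $\log(1+t) \geq t/(1+t)$ and $-\log(1-t) \geq t$; keeping only the leading quadratic contributions yields $-\log R(\boldsymbol{a}) \geq \|\boldsymbol{\delta}\|^2/(4K)$. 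Membership in $T_{K,M}$ means $-\log R(\boldsymbol{a}) \leq \ln(s^{-1})$, which rearranges to $\|\boldsymbol{\delta}\|^2 \leq 4K\ln(s^{-1})$.

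The main obstacle is the regime where some $|\delta_i|$ is comparable to $K/M$, in which the Taylor-type bounds on $-\log(1-jM/K)$ degenerate as $j \to K/M$. The cleanest workaround is to exploit the generous constants in the stated lemma: the factor-$2$ gap between the sharp inner radius (roughly $(2K/M)\ln(s^{-1})$ from a $\chi^2$-type argument) and the stated $(K/M)\ln(s^{-1})$, together with the factor-$2M$ gap on the outer side, absorb the error contributions from the degenerate regime. A conceptually cleaner alternative is to invoke Stirling's formula, write $-\log R(\boldsymbol{a}) = K\, D_{KL}(\boldsymbol{a}/K\,\|\,\vec{\boldsymbol{1}}/M) + \mathcal{O}(\log K)$, and appeal to the standard chain $\tfrac{1}{2}\|\boldsymbol{p}-\boldsymbol{q}\|_2^2 \leq D_{KL}(\boldsymbol{p}\|\boldsymbol{q}) \leq \chi^2(\boldsymbol{p}\|\boldsymbol{q})$ (Pinsker plus $\log(1+t)\leq t$) applied to $\boldsymbol{p}=\boldsymbol{a}/K$ and $\boldsymbol{q}=\vec{\boldsymbol{1}}/M$; the Stirling log-corrections are again crudely bounded by the same slack.
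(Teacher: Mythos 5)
Your starting point is a genuinely different decomposition from the paper's: you expand $-\log R(\boldsymbol{a})$ \emph{exactly} as a telescoping sum of $\log(1 \pm jM/K)$ terms, whereas the paper goes through the Gosper/Stirling approximation $x! \simeq \sqrt{2\pi(x+\tfrac16)}(x/e)^x$, reduces to the cross-entropy-like quantity $\sum_i a_i\ln(Ma_i/K)$, and handles the $\sqrt{a_i+\tfrac16}$ prefactors explicitly — by AM-GM for the inner inclusion, and by folding them into a function $f$ of $h_i=(M/K)\delta_i$ whose second derivative is then lower-bounded uniformly by $K/(4M^2)$ via a Lagrange-remainder Taylor expansion. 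Your exact identity is in principle cleaner (it avoids the unquantified $\simeq$ that the paper's own proof rests on), and in both approaches the heavy lifting is done by elementary bounds on $\log(1+t)$.

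The termwise estimates you sketch do not, however, close either inclusion as stated, and these gaps are not automatically absorbed by the constant slack you count. For the inner inclusion, after applying $-\log(1-t)\le t/(1-t)$ the negative-index side is $\sum_j \frac{jM/K}{1-jM/K}$, which is not an arithmetic progression, and the denominators degenerate as $a_i\to 0$; the ball hypothesis $\|\boldsymbol{\delta}\|^2\le (K/M)\ln(s^{-1})$ does \emph{not} keep $|\delta_i|$ bounded away from $K/M$ unless one also assumes $\ln(s^{-1})\lesssim K/M$, which is not part of the statement. The bound you want, $-\log R\le (M/K)\|\boldsymbol{\delta}\|^2$, is nonetheless true, but via an integral rather than termwise comparison: $\sum_{j=0}^{n-1}\bigl(-\log(1-jc)\bigr)\le\tfrac1c\bigl[nc+(1-nc)\log(1-nc)\bigr]\le n^2c$, using $x+(1-x)\log(1-x)\le x^2$ on $[0,1]$. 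For the outer inclusion, the per-index lower bound obtained from $-\log(1-t)\ge t$ is $\tfrac{M}{K}\binom{|\delta_i|}{2}$, which \emph{vanishes} when $\delta_i=-1$ while the target $\delta_i^2/(4K)$ is positive — a termwise argument therefore fails outright, and you must pair positive with negative displacements globally, using $\sum_i\delta_i=0$ together with integrality of $\delta_i$ (the surplus from $\delta_i>0$ covers the deficit from the $\delta_i=-1$ indices). The paper sidesteps this by Taylor-expanding a single function of $h_i$ whose remainder coefficient is bounded below uniformly. Finally, in your Stirling-plus-Pinsker alternative, the claim that ``the Stirling log-corrections are crudely bounded by the same slack'' is not automatic: the correction is an \emph{additive} $\mathcal{O}(M\log(K/M))$ error in $-\log R$, while the lemma's constants only give \emph{multiplicative} headroom (a factor $2$ on the inner radius, $2M$ on the outer), and these do not match in general; the paper instead tracks the corrections explicitly through the $\tfrac12\ln\frac{6Ma_i+M}{6K+M}$ terms.
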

\begin{proof}
	By Stirling's approximation, it holds that:
	\[
	x!\simeq\sqrt{2\pi x}\cdot\left(\frac{x}{e}\right)^{x}
	\]
	and a slightly more accurate version is:
	\[
	x!\simeq\sqrt{2\pi\left(x+\frac{1}{6}\right)}\cdot\left(\frac{x}{e}\right)^{x}
	\]
	and by plugging this approximation to the definition of a multinomial
	coefficient we get after some rearranging:
	\[
	\binom{K}{a_{1},\ldots,a_{M}}\simeq\left(2\pi\right)^{\frac{1-M}{2}}\cdot\frac{\sqrt{K+\frac{1}{6}}}{\prod_{i=1}^{M}\sqrt{a_{i}+\frac{1}{6}}}\cdot\frac{M^{K}}{\exp\left(\sum_{i=1}^{M}a_{i}\ln\left(\frac{M}{K}a_{i}\right)\right)}
	\]
	Note that using this approximation and some rearrangements, we get
	that the following are equivalent:
	\begin{align}
	& \binom{K}{a_{1},\ldots,a_{M}}\geq s\cdot\binom{K}{\frac{K}{M},\ldots,\frac{K}{M}}\nonumber \\
	\iff & \left(2\pi\right)^{\frac{1-M}{2}}\cdot\frac{\sqrt{K+\frac{1}{6}}}{\prod_{i=1}^{M}\sqrt{a_{i}+\frac{1}{6}}}\cdot\frac{M^{K}}{\exp\left(\sum_{i=1}^{M}a_{i}\ln\left(\frac{M}{K}a_{i}\right)\right)}\nonumber \\
	& \geq s\cdot\left(2\pi\right)^{\frac{1-M}{2}}\cdot\frac{\sqrt{K+\frac{1}{6}}}{\prod_{i=1}^{M}\sqrt{\frac{K}{M}+\frac{1}{6}}}\cdot\frac{M^{K}}{\exp\left(\sum_{i=1}^{M}\frac{K}{M}\ln\left(\frac{M}{K}\cdot\frac{K}{M}\right)\right)}\\
	\iff & \left(\prod_{i=1}^{M}\sqrt{a_{i}+\frac{1}{6}}\right)\cdot\exp\left(\sum_{i=1}^{M}a_{i}\ln\left(\frac{M}{K}a_{i}\right)\right)\leq s^{-1}\cdot\left(\frac{K}{M}+\frac{1}{6}\right)^{\frac{M}{2}}\label{eq:T_=00007BK,M=00007D inequality}
	\end{align}
	so we can characterize the set: 
	\[
	\tilde{T}_{K,M}\coloneqq\left\{ \boldsymbol{a}\in\mathbb{N}^{M}\left|{\scriptstyle \boldsymbol{a}^{T}\vec{\boldsymbol{1}}=K,\left(\prod_{i=1}^{M}\sqrt{a_{i}+\frac{1}{6}}\right)\cdot\exp\left(\sum_{i=1}^{M}a_{i}\ln\left(\frac{M}{K}a_{i}\right)\right)\leq s^{-1}\cdot\left(\frac{K}{M}+\frac{1}{6}\right)^{\frac{M}{2}}}\right.\right\} 
	\]
	as $\tilde{T}_{K,M}\simeq T_{K,M}$.
	
	We'll start by finding a condition that will assure us that $\left(a_{1},\ldots,a_{M}\right)\in T_{K,M}$
	(i.e., we will characterize a subset of $T_{K,M}$).
	
	First, note that by the AM-GM inequality, 
	\[
	\sqrt[M]{\prod_{i=1}^{M}\left(a_{i}+\frac{1}{6}\right)}\leq\frac{\sum_{i=1}^{M}\left(a_{i}+\frac{1}{6}\right)}{M}=\frac{K+\frac{1}{6}\cdot M}{M}=\frac{K}{M}+\frac{1}{6}
	\]
	and therefore:
	\[
	\prod_{i=1}^{M}\sqrt{a_{i}+\frac{1}{6}}=\left(\sqrt[M]{\prod_{i=1}^{M}\left(a_{i}+\frac{1}{6}\right)}\right)^{\frac{M}{2}}\leq\left(\frac{K}{M}+\frac{1}{6}\right)^{\frac{M}{2}}
	\]
	Since we're interested in a subset of $T_{K,M}$, we can show that
	the last inequality holds when we replace the first term in the left-hand
	side ($\prod_{i=1}^{M}\sqrt{a_{i}+\frac{1}{6}}$) with $\left(\frac{K}{M}+\frac{1}{6}\right)^{\frac{M}{2}}$
	(if the new inequality holds, (\ref{eq:T_=00007BK,M=00007D inequality})
	must hold as well), and we're left with:
	\begin{align}
	& \left(\frac{K}{M}+\frac{1}{6}\right)^{\frac{M}{2}}\cdot\exp\left(\sum_{i=1}^{M}a_{i}\ln\left(\frac{M}{K}a_{i}\right)\right)\leq s^{-1}\cdot\left(\frac{K}{M}+\frac{1}{6}\right)^{\frac{M}{2}}\nonumber \\
	\iff & \sum_{i=1}^{M}a_{i}\ln\left(\frac{M}{K}a_{i}\right)\leq\ln\left(s^{-1}\right)\label{eq:T_=00007BK,M=00007D subset with a's}
	\end{align}
	Now, for $i\in\left[M\right]$, let $h_{i}\coloneqq\frac{M}{K}a_{i}-1$,
	so:
	\[
	\sum_{i=1}^{M}a_{i}\ln\left(\frac{M}{K}a_{i}\right)=\frac{K}{M}\sum_{i=1}^{M}\left(1+h_{i}\right)\ln\left(1+h_{i}\right)
	\]
	and the last inequality becomes:
	\begin{align*}
	& \frac{K}{M}\sum_{i=1}^{M}\left(1+h_{i}\right)\ln\left(1+h_{i}\right)\leq\ln\left(s^{-1}\right)\\
	\iff & \sum_{i=1}^{M}\left(1+h_{i}\right)\ln\left(1+h_{i}\right)\leq\frac{M}{K}\ln\left(s^{-1}\right)
	\end{align*}
	Observe that for all $x\geq-1$, it holds that:
	\[
	\ln\left(1+x\right)\leq x
	\]
	and therefore:
	\[
	\left(1+x\right)\ln\left(1+x\right)\leq x^{2}+x
	\]
	so it suffices (again, we're only interested in a subset of $T_{K,M}$)
	to show that:
	\begin{equation}
	\sum_{i=1}^{M}\left(h_{i}^{2}+h_{i}\right)\leq\frac{M}{K}\ln\left(s^{-1}\right)\label{eq:T_=00007BK,M=00007D subset with h's}
	\end{equation}
	Now, observe that:
	\begin{align*}
	\sum_{i=1}^{M}h_{i} & =\sum_{i=1}^{M}\left(\frac{M}{K}a_{i}-1\right)\\
	& =\frac{M}{K}\overbrace{\sum_{i=1}^{M}a_{i}}^{=K}-M\\
	& =0
	\end{align*}
	so (\ref{eq:T_=00007BK,M=00007D subset with h's}) becomes:
	\begin{align*}
	& \sum_{i=1}^{M}h_{i}^{2}\leq\frac{M}{K}\ln\left(s^{-1}\right)\\
	\iff & \sum_{i=1}^{M}\left(a_{i}-\frac{K}{M}\right)^{2}\leq\frac{K}{M}\ln\left(s^{-1}\right)
	\end{align*}
	and we get that:
	\[
	\left\{ \boldsymbol{a}\in\mathbb{N}^{M}\left|\boldsymbol{a}^{T}\vec{\boldsymbol{1}}=K,\left\Vert \boldsymbol{a}-\frac{K}{M}\cdot\vec{\boldsymbol{1}}\right\Vert ^{2}\leq\frac{K}{M}\ln\left(s^{-1}\right)\right.\right\} \subseteq T_{K,M}
	\]
	
	Let us now turn to finding a condition that will assure us that $\left(a_{1},\ldots,a_{M}\right)\notin T_{K,M}$
	(i.e., we will characterize a subset of the complement of $T_{K,M}$).
	Note that:
	\begin{align*}
	& \left(\prod_{i=1}^{M}\sqrt{a_{i}+\frac{1}{6}}\right)\cdot\exp\left(\sum_{i=1}^{M}a_{i}\ln\left(\frac{M}{K}a_{i}\right)\right)\\
	& =\left(\frac{K}{M}+\frac{1}{6}\right)^{\frac{M}{2}}\cdot\exp\left(\sum_{i=1}^{M}\left[a_{i}\ln\left(\frac{M}{K}a_{i}\right)+\frac{1}{2}\ln\left(\frac{6Ma_{i}+M}{6K+M}\right)\right]\right)
	\end{align*}
	so if $\left(a_{1},\ldots,a_{M}\right)\notin T_{K,M}$, we must have
	that:
	\begin{align*}
	& \left(\frac{K}{M}+\frac{1}{6}\right)^{\frac{M}{2}}\cdot\exp\left(\sum_{i=1}^{M}\left[a_{i}\ln\left(\frac{M}{K}a_{i}\right)+\frac{1}{2}\ln\left(\frac{6Ma_{i}+M}{6K+M}\right)\right]\right)\\
	& >s^{-1}\cdot\left(\frac{K}{M}+\frac{1}{6}\right)^{\frac{M}{2}}\\
	\iff & \sum_{i=1}^{M}\left[a_{i}\ln\left(\frac{M}{K}a_{i}\right)+\frac{1}{2}\ln\left(\frac{6Ma_{i}+M}{6K+M}\right)\right]>\ln\left(s^{-1}\right)
	\end{align*}
	and using the same definition of $h_{i}$ as before, we get:
	\begin{align*}
	& \sum_{i=1}^{M}\left[a_{i}\ln\left(\frac{M}{K}a_{i}\right)+\frac{1}{2}\ln\left(\frac{6Ma_{i}+M}{6K+M}\right)\right]\\
	& =\sum_{i=1}^{M}\left[\frac{K}{M}\left(1+h_{i}\right)\ln\left(1+h_{i}\right)+\frac{1}{2}\ln\left(\frac{6K\left(1+h_{i}\right)+M}{6K+M}\right)\right]\\
	& =\sum_{i=1}^{M}\left[\frac{K}{M}\left(1+h_{i}\right)\ln\left(1+h_{i}\right)\right.\\
	& \left.+\frac{1}{2}\ln\left(\frac{6K\left(1+h_{i}\right)+M}{6K+M}\right)-\left(\frac{K}{M}+\frac{3K}{6K+M}\right)h_{i}\right]
	\end{align*}
	where the last equality is due to the fact that $\sum_{i=1}^{M}h_{i}=0$.
	
	Observing the first order Taylor polynomial of the function:
	\[
	f\left(x\right)=\frac{K}{M}\left(1+x\right)\ln\left(1+x\right)+\frac{1}{2}\ln\left(\frac{6K\left(1+x\right)+M}{6K+M}\right)-\left(\frac{K}{M}+\frac{3K}{6K+M}\right)x
	\]
	at $x=0$ with the remainder in the Lagrange form, we get:
	\begin{align}
	f\left(h_{i}\right) & =\left(\frac{K}{M}+\frac{3K}{6K+M}\right)h_{i}+f''\left(\xi_{i}\right)\cdot\frac{h_{i}^{2}}{2}-\left(\frac{K}{M}+\frac{3K}{6K+M}\right)h_{i}\nonumber \\
	& =f''\left(\xi_{i}\right)\cdot\frac{h_{i}^{2}}{2}\nonumber \\
	& =\left(\frac{K}{M\left(1+\xi_{i}\right)}-\frac{18K^{2}}{\left(6K\left(1+\xi_{i}\right)+M\right)^{2}}\right)\cdot\frac{h_{i}^{2}}{2}\label{eq:f(h_i)}
	\end{align}
	for some $\xi_{i}$ between $0$ and $h_{i}$.
	
	Note that $f''\left(\xi_{i}\right)$ is monotonically decreasing with
	$\xi_{i}$ for $\xi_{i}\in\left(-1,M-1\right]$, and using this fact
	and the fact that $K\geq1$, (\ref{eq:f(h_i)}) is lower bounded by:
	\[
	\frac{K}{4M^{2}}\cdot h_{i}^{2}
	\]
	So we can limit ourselves to looking at the cases where:
	\begin{align*}
	& \sum_{i=1}^{M}\frac{K}{4M^{2}}\cdot h_{i}^{2}>\ln\left(s^{-1}\right)\\
	\iff & \sum_{i=1}^{M}\left(a_{i}-\frac{K}{M}\right)^{2}>4K\ln\left(s^{-1}\right)
	\end{align*}
	and we get that:
	\begin{align*}
	& \left\{ \boldsymbol{a}\in\mathbb{N}^{M}\left|\boldsymbol{a}^{T}\vec{\boldsymbol{1}}=K,\left\Vert \boldsymbol{a}-\frac{K}{M}\cdot\vec{\boldsymbol{1}}\right\Vert ^{2}>4K\ln\left(s^{-1}\right)\right.\right\} \\
	& \subseteq\left\{ \boldsymbol{a}\in\mathbb{N}^{M}\left|\mathbf{a}^{T}\vec{\boldsymbol{1}}=K\right.\right\} \backslash T_{K,M}\\
	\iff & T_{K,M}\subseteq\left\{ \boldsymbol{a}\in\mathbb{N}^{M}\left|\boldsymbol{a}^{T}\vec{\boldsymbol{1}}=K,\left\Vert \boldsymbol{a}-\frac{K}{M}\cdot\vec{\boldsymbol{1}}\right\Vert ^{2}\leq4K\ln\left(s^{-1}\right)\right.\right\} 
	\end{align*}
	
	Combining the two results together we get:
	\begin{align*}
	& \left\{ \boldsymbol{a}\in\mathbb{N}^{M}\left|\boldsymbol{a}^{T}\vec{\boldsymbol{1}}=K,\left\Vert \boldsymbol{a}-\frac{K}{M}\cdot\vec{\boldsymbol{1}}\right\Vert ^{2}\leq\frac{K}{M}\ln\left(s^{-1}\right)\right.\right\} \\
	& \subseteq T_{K,M}\\
	& \subseteq\left\{ \boldsymbol{a}\in\mathbb{N}^{M}\left|\boldsymbol{a}^{T}\vec{\boldsymbol{1}}=K,\left\Vert \boldsymbol{a}-\frac{K}{M}\cdot\vec{\boldsymbol{1}}\right\Vert ^{2}\leq4K\ln\left(s^{-1}\right)\right.\right\} 
	\end{align*}
\end{proof}
\begin{lemma}
	\label{lem:Number of non-negligible multinomial coefficients}Let
	$K,M$ be two fixed natural numbers, and $s\in\left(0,1\right)$ a
	constant sensitivity parameter. Then the number of multinomial coefficients,
	$\binom{K}{a_{1},\ldots,a_{M}}$, which uphold: 
	\[
	\binom{K}{a_{1},\ldots,a_{M}}\geq s\cdot\binom{K}{\frac{K}{M},\ldots,\frac{K}{M}}
	\]
	is upper bounded by: 
	\[
	\frac{\left(\frac{\pi e}{2}\right)^{\frac{M-1}{2}}}{\left(M-1\right)\sqrt{\pi}}\left(4\sqrt{\frac{K\ln\left(s^{-1}\right)}{M-1}}+1\right)^{M-1}
	\]
	and lower bounded by: 
	\[
	\frac{\left(\frac{\pi e}{2}\right)^{\frac{M-1}{2}}}{M\sqrt{\pi}}\left(2\frac{\sqrt{K\ln\left(s^{-1}\right)}}{M}-1\right)^{M-1}
	\]
\end{lemma}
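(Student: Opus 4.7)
The plan is to combine the two preceding lemmas: Lemma~\ref{lem:Characterizaion of the set of non-negligible multinomial coefficients}, which geometrically describes the set of non-negligible coefficient indices as an integer lattice section of a Euclidean ball, with Lemma~\ref{lem:Number of integer lattice points in a high-dimensional ball}, which counts integer lattice points in balls. First, I would use Lemma~\ref{lem:Characterizaion of the set of non-negligible multinomial coefficients} to sandwich the set $T_{K,M}$ between two sets of the form $\{\boldsymbol{a}\in\mathbb{N}^M : \boldsymbol{a}^T\vec{\boldsymbol{1}}=K,\ \|\boldsymbol{a}-\tfrac{K}{M}\vec{\boldsymbol{1}}\|^2 \leq R^2\}$, with inner radius $R_{\mathrm{in}}=\sqrt{K\ln(s^{-1})/M}$ and outer radius $R_{\mathrm{out}}=2\sqrt{K\ln(s^{-1})}$. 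Both inclusions reduce bounding $|T_{K,M}|$ to counting integer lattice points inside the intersection of an $M$-dimensional ball centered on the affine hyperplane $\sum a_i=K$ with that hyperplane.

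Next, I would reduce each of these to a count over an $(M-1)$-dimensional lattice. Since the hyperplane has codimension one and the ball is centered on it, this intersection is an $(M-1)$-dimensional Euclidean ball of the same radius. Parameterizing the free coordinates by $(a_1,\ldots,a_{M-1})\in\mathbb{Z}^{M-1}$ (with $a_M=K-\sum_{i<M}a_i$), the counting problem becomes one of counting standard integer lattice points inside a suitably scaled $(M-1)$-dimensional ball. Substituting $d=M-1$ together with $R=R_{\mathrm{out}}$ into the upper bound of Lemma~\ref{lem:Number of integer lattice points in a high-dimensional ball}, and $d=M-1$, $R=R_{\mathrm{in}}$ into its lower bound, yields the two claimed expressions after simplification.

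The main subtlety to resolve is that the integer lattice $\{\boldsymbol{a}\in\mathbb{Z}^M:\sum a_i=K\}$ is a translated copy of the $A_{M-1}$ root lattice, whose fundamental cell has covolume $\sqrt{M}$ in the induced Euclidean metric on the hyperplane rather than $1$. This Jacobian factor is precisely what accounts for the replacement of the $\tfrac{1}{\sqrt{\pi(M-1)}}$ prefactor that a direct application of Lemma~\ref{lem:Number of integer lattice points in a high-dimensional ball} would produce, with the $\tfrac{1}{(M-1)\sqrt{\pi}}$ (for the upper bound) and $\tfrac{1}{M\sqrt{\pi}}$ (for the lower bound) prefactors stated in the lemma. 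I would handle this either by tracking the covolume change when projecting to the standard metric on $\mathbb{Z}^{M-1}$ via the coordinate projection, or by reworking the volume-comparison argument of Lemma~\ref{lem:Number of integer lattice points in a high-dimensional ball} directly for this sublattice. Once the lattice normalization is handled, the rest is routine substitution into the already-established inclusions.
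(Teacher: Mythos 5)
Your proposal matches the paper's proof in essentially every respect: sandwich $T_{K,M}$ between ball-slices via Lemma~\ref{lem:Characterizaion of the set of non-negligible multinomial coefficients}, identify the slice as an $(M-1)$-dimensional ball inside the hyperplane $\sum x_i=0$, count lattice points there via Lemma~\ref{lem:Number of integer lattice points in a high-dimensional ball}, and correct by a factor of $\tfrac{1}{\sqrt{M}}$ for the density of the lattice in the hyperplane (the paper phrases this as the cosine of the angle between $\mathcal{H}$ and an axis-aligned hyperplane; your covolume-of-$A_{M-1}$ phrasing is the same quantity). The final simplifications $\sqrt{M(M-1)}\ge M-1$ (upper) and $\sqrt{M(M-1)}\le M$ (lower) give the stated prefactors exactly as in the paper.

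One small caution about your first suggested route: projecting onto $(a_1,\dots,a_{M-1})$ sends the Euclidean ball in $\mathcal{H}$ to an ellipsoid in $\R^{M-1}$ (the constraint becomes $\sum_{i<M}x_i^2+\bigl(\sum_{i<M}x_i\bigr)^2\le R^2$), not to a scaled ball, so you cannot plug directly into Lemma~\ref{lem:Number of integer lattice points in a high-dimensional ball} after that change of variables. Your second option, reworking the volume comparison for the sublattice of $\mathcal{H}$ directly (volume of the $(M-1)$-ball divided by the lattice covolume $\sqrt{M}$), is the route the paper takes and is the one that goes through cleanly.
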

\begin{proof}
	Let $\binom{K}{a_{1},\ldots,a_{M}}$ be a multinomial coefficient
	for which it holds that:
	\[
	\binom{K}{a_{1},\ldots,a_{M}}\geq s\cdot\binom{K}{\frac{K}{M},\ldots,\frac{K}{M}}
	\]
	and denote: 
	\begin{align*}
	T_{K,M} & \coloneqq\left\{ \boldsymbol{a}\in\mathbb{N}^{M}\left|\binom{K}{a_{1},\ldots,a_{M}}\geq s\cdot\binom{K}{\frac{K}{M},\ldots,\frac{K}{M}}\right.\right\} \\
	T_{K,M}^{L} & \coloneqq\left\{ \boldsymbol{a}\in\mathbb{N}^{M}\left|\boldsymbol{a}^{T}\vec{\boldsymbol{1}}=K,\left\Vert \boldsymbol{a}-\frac{K}{M}\cdot\vec{\boldsymbol{1}}\right\Vert ^{2}\leq\frac{K}{M}\ln\left(s^{-1}\right)\right.\right\} \\
	T_{K,M}^{U} & \coloneqq\left\{ \boldsymbol{a}\in\mathbb{N}^{M}\left|\boldsymbol{a}^{T}\vec{\boldsymbol{1}}=K,\left\Vert \boldsymbol{a}-\frac{K}{M}\cdot\vec{\boldsymbol{1}}\right\Vert ^{2}\leq4K\ln\left(s^{-1}\right)\right.\right\} 
	\end{align*}
	then by lemma \ref{lem:Characterizaion of the set of non-negligible multinomial coefficients},
	\[
	T_{K,M}^{L}\subseteq T_{K,M}\subseteq T_{K,M}^{U}
	\]
	and therefore:
	\[
	\left|T_{K,M}^{L}\right|\leq\left|T_{K,M}\right|\leq\left|T_{K,M}^{U}\right|
	\]
	
	so in order to bound the cardinality of $T_{K,M}$ (which is the quantity
	we are interested in), we can find an upper bound on the cardinality
	of $T_{K,M}^{U}$ and a lower bound on the cardinality of $T_{K,M}^{L}$.
	
	Let $B\in\left\{ L,U\right\} $ and $\boldsymbol{a}\in T_{K,M}^{B}$,
	and denote: 
	\[
	R\left(B\right)\coloneqq\begin{cases}
	\sqrt{\frac{K}{M}\ln\left(s^{-1}\right)} & \text{if }B=L\\
	\sqrt{4K\ln\left(s^{-1}\right)} & \text{if }B=U
	\end{cases}
	\]
	For $i\in\left[M\right]$, denote $x_{i}\coloneqq a_{i}-\frac{K}{M}$.
	So the problem has changed to finding the number of integer $M$-tuples
	$x_{1},\ldots,x_{M}$ such that $\sum_{i=1}^{M}x_{i}=0$ and $\left\Vert \boldsymbol{x}\right\Vert \leq R\left(B\right)$,
	which is the number of integer lattice points $\boldsymbol{x}$ in
	the zero-centered $M$-dimensional ball of radius $R\left(B\right)$
	that uphold $\sum_{i=1}^{M}x_{i}=0$.
	
	Note that the intersection of a $d$-dimensional ball of radius $R$
	with the hyperplane $\mathcal{H}=\left\{ \boldsymbol{y}\in\mathbb{R}^{d}\left|\left\langle \boldsymbol{y},\vec{\boldsymbol{1}}\right\rangle =0\right.\right\} $
	is a $\left(d-1\right)$-dimensional ball, so we can assume that the
	number of integer lattice points in the zero-centered $d$-dimensional
	ball of radius $R$ whose coordinates add-up to zero is $\sim\frac{1}{\sqrt{d}}\cdot\mathcal{N}\left(\mathcal{B}_{R}^{d-1}\right)$,
	where $\frac{1}{\sqrt{d}}$ is the cosine of the angle between $\mathcal{H}$
	and one of the axis-aligned $\left(d-1\right)$-dimensional hyperplanes
	in $\mathbb{R}^{d}$. In our case, $R=R\left(B\right)$ and $d=M$,
	so by lemma \ref{lem:Number of integer lattice points in a high-dimensional ball},
	the cardinality of $T_{K,M}^{U}$ is upper bounded by:
	\begin{align*}
	\frac{1}{\sqrt{M}}\cdot\mathcal{N}\left(\mathcal{B}_{\sqrt{4K\ln\left(s^{-1}\right)}}^{M-1}\right) & \leq\frac{1}{\sqrt{M}}\cdot\frac{\left(\frac{\pi e}{2}\right)^{\frac{M-1}{2}}}{\sqrt{\pi\left(M-1\right)}}\left(\frac{2\sqrt{4K\ln\left(s^{-1}\right)}}{\sqrt{M-1}}+1\right)^{M-1}\\
	& \leq\frac{\left(\frac{\pi e}{2}\right)^{\frac{M-1}{2}}}{\left(M-1\right)\sqrt{\pi}}\left(4\sqrt{\frac{K\ln\left(s^{-1}\right)}{M-1}}+1\right)^{M-1}
	\end{align*}
	and the cardinality of $T_{K,M}^{L}$ is lower bounded by:
	\begin{align*}
	\frac{1}{\sqrt{M}}\cdot\mathcal{N}\left(\mathcal{B}_{\sqrt{\frac{K}{M}\ln\left(s^{-1}\right)}}^{M-1}\right) & \geq\frac{1}{\sqrt{M}}\cdot\frac{\left(\frac{\pi e}{2}\right)^{\frac{M-1}{2}}}{\sqrt{\pi\left(M-1\right)}}\left(\frac{2\sqrt{\frac{K}{M}\ln\left(s^{-1}\right)}}{\sqrt{M-1}}-1\right)^{M-1}\\
	& \geq\frac{\left(\frac{\pi e}{2}\right)^{\frac{M-1}{2}}}{M\sqrt{\pi}}\left(2\frac{\sqrt{K\ln\left(s^{-1}\right)}}{M}-1\right)^{M-1}
	\end{align*}
\end{proof}
\begin{lemma}
	\label{lem:Number of non-negligible =00005Cbinom=00007BK=00007D=00007Bn=00007D=00005Ceta^=00007Bn=00007D terms}Let
	$K$ be a fixed natural number, $\eta\in\left(0,1\right]$, and $s\in\left(0,1\right)$
	a constant sensitivity parameter.Then number of integer $n$s such
	that $\binom{K}{n}\eta^{n}\geq s\cdot\binom{K}{\frac{\eta K}{1+\eta}}\eta^{\frac{\eta K}{1+\eta}}$
	is upper bounded by:
	\[
	\frac{K\sqrt{\left(1+2\ln\left(s\right)\right)^{2}\left(1+\eta\right)^{2}-4\eta}}{2\left(1+\eta\right)\left(\ln\left(s^{-1}\right)-1\right)}
	\]
\end{lemma}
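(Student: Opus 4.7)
The plan is to exploit the fact that $f(n) = \binom{K}{n}\eta^n$ is log-concave in $n$ with a unique maximum at $n^* = K\eta/(1+\eta)$, already identified in the preceding maximization lemma via the ratio test. By log-concavity of $f$, the non-negligible set $\{n : f(n) \geq s \cdot f(n^*)\}$ is a contiguous integer interval $[n_-, n_+]$ containing $n^*$, so it suffices to upper bound the width $n_+ - n_-$.

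The approach is to translate the log-ratio condition into a quadratic inequality in $m = n - n^*$ and then apply the quadratic formula. Via the telescoping identity $f(n^*+m)/f(n^*) = \prod_{j=0}^{m-1}\left[(K-n^*-j)\eta/(n^*+j+1)\right]$, taking logarithms and exploiting the identity $(K - n^*)\eta = n^*$ (which follows from $n^* = K\eta/(1+\eta)$) yields
\[
\log\frac{f(n^*)}{f(n^*+m)} \;=\; \sum_{j=0}^{m-1}\log\frac{n^*+j+1}{n^* - j\eta}.
\]
I would then apply an elementary inequality such as $\log(y/x) \geq 2(y-x)/(y+x)$ (Napier's inequality) or $\log(1+x) \geq x - x^2/2$ termwise, producing a sum that is explicitly quadratic in $m$ with coefficients involving $K$ and $\eta$. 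The symmetric computation for $m < 0$ gives an analogous quadratic lower bound in $|m|$.

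Imposing the resulting lower bound $\leq \ln(s^{-1})$ yields a quadratic inequality of the form $A m^2 + B m + C \leq 0$, whose coefficients I would arrange so that (up to a common rescaling) $A = 2(1+\eta)(\ln(s^{-1})-1)$, $B$ is proportional to $K(1+\eta)(2\ln(s^{-1})-1)$, and $C$ is proportional to $K^2\eta$, so that the discriminant $B^2 - 4AC$ collapses to $K^2\bigl[(1+2\ln s)^2(1+\eta)^2 - 4\eta\bigr]$. The width $n_+ - n_-$ is then $\sqrt{B^2 - 4AC}/A$, which matches the stated bound.

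The main obstacle will be choosing the inequalities in step two precisely enough that the resulting quadratic takes this exact algebraic form. In particular, the $-1$ appearing in the denominator's factor $(\ln(s^{-1})-1)$ suggests that a constant correction must be sacrificed via an inequality like $\log(1+x) \leq x$, while the $-4\eta$ under the square root must originate from the asymmetry between the numerator (containing $+j+1$) and denominator (containing $-j\eta$) of the log-ratio — that is, from the difference in how rapidly $f$ decays on either side of $n^*$. Once the quadratic is correctly assembled, the remaining manipulation is routine application of the quadratic formula.
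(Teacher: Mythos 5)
Your overall plan (reduce the threshold condition to a quadratic inequality in the offset $m$ and count integer solutions via the discriminant) is the right flavor, and the log-concavity observation that the non-negligible set is a contiguous interval is correct. But there is a genuine gap in the middle step: the telescoped sum $\sum_{j=0}^{m-1}\log\frac{n^*+j+1}{n^*-j\eta}$ does not become quadratic in $m$ under any of the termwise inequalities you suggest. Applying Napier's inequality or $\log(1+x)\geq x-x^2/2$ termwise leaves you with a sum whose summands have $j$-dependent denominators $n^*\pm O(j)$; if you instead expand $\log(n^*+j+1)-\log(n^*-j\eta)$ around $\log n^*$, the leading correction after summing produces an irreducible cubic term $\sim -m^3/(6(n^*)^2)$, not a quadratic. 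Either way, the quadratic coefficients you assert (with $A\propto(\ln(s^{-1})-1)$, $C\propto K^2\eta$, discriminant collapsing to $K^2[(1+2\ln s)^2(1+\eta)^2-4\eta]$) do not drop out of a termwise argument.

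The paper avoids this by never working termwise: it applies Stirling to $\binom{K}{n}$ directly, so that the log-ratio becomes a closed-form expression $\bigl(\tfrac{\eta K}{1+\eta}+x+\tfrac12\bigr)\ln\bigl(1+\tfrac{(1+\eta)x}{\eta K}\bigr)+\bigl(\tfrac{K}{1+\eta}-x+\tfrac12\bigr)\ln\bigl(1-\tfrac{(1+\eta)x}{K}\bigr)$ with two $\ln$ terms of \emph{fixed} argument. It then applies $\frac{t}{1+t}\leq\ln(1+t)$ exactly twice — once per term — giving a single rational function of $x$; clearing the $x$-dependent denominators is what imports the $\ln(s^{-1})$ factor into the leading quadratic coefficient and gives $a=2(1+\eta)^2(\ln(s^{-1})-1)$. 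Crucially, the $+\tfrac12$ corrections from Stirling's $\sqrt{\cdot}$ prefactor produce a cancellation (the two outer factors become $1+\tfrac{(1+\eta)/2}{\eta K+(1+\eta)x}$ and $1+\tfrac{(1+\eta)/2}{K-(1+\eta)x}$, whose leading $1$'s cancel) that is essential for obtaining the specific discriminant $K^2[(1+2\ln s)^2(1+\eta)^2-4\eta]$. Your exact-telescoping route discards precisely this Stirling correction, so even after fixing the cubic issue it would land on a different bound. To salvage the proposal you would need to either reintroduce Stirling on the aggregate ratio (at which point you recover the paper's proof) or accept a structurally different, and likely weaker, quadratic bound.
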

\begin{proof}
	Denote $n\coloneqq\frac{\eta K}{1+\eta}+x$, so:
	\begin{align*}
	& \binom{K}{n}\eta^{n}\geq s\cdot\binom{K}{\frac{\eta K}{1+\eta}}\eta^{\frac{\eta K}{1+\eta}}\\
	\iff & \binom{K}{\frac{\eta K}{1+\eta}+x}\eta^{x}\geq s\cdot\binom{K}{\frac{\eta K}{1+\eta}}
	\end{align*}
	Recall that by Stirling's approximation we know that:
	\[
	\binom{K}{n}\simeq\sqrt{\frac{K}{2\pi n\left(K-n\right)}}\cdot\frac{K^{K}}{n^{n}\left(K-n\right)^{K-n}}
	\]
	So the number $x$s which uphold:
	\begin{align}
	& \sqrt{\frac{K}{2\pi\left(\frac{\eta K}{1+\eta}+x\right)\left(\frac{K}{1+\eta}-x\right)}}\cdot\frac{K^{K}}{\left(\frac{\eta K}{1+\eta}+x\right)^{\frac{\eta K}{1+\eta}+x}\left(\frac{K}{1+\eta}-x\right)^{\frac{K}{1+\eta}-x}}\eta^{x}\nonumber \\
	& \geq s\cdot\sqrt{\frac{K}{2\pi\left(\frac{\eta K}{1+\eta}\right)\left(\frac{K}{1+\eta}\right)}}\cdot\frac{K^{K}}{\left(\frac{\eta K}{1+\eta}\right)^{\frac{\eta K}{1+\eta}}\left(\frac{K}{1+\eta}\right)^{\frac{K}{1+\eta}}}\nonumber \\
	\iff & \begin{alignedat}{1} & \ln\left(s^{-1}\cdot\left(\frac{\eta K}{1+\eta}\right)^{\frac{\eta K}{1+\eta}+\frac{1}{2}}\left(\frac{K}{1+\eta}\right)^{\frac{K}{1+\eta}+\frac{1}{2}}\right)\\
	& \geq\ln\left(\left(\frac{\eta K}{1+\eta}+x\right)^{\frac{\eta K}{1+\eta}+x+\frac{1}{2}}\left(\frac{K}{1+\eta}-x\right)^{\frac{K}{1+\eta}-x+\frac{1}{2}}\cdot\eta^{-x}\right)
	\end{alignedat}
	\label{eq:binomial eta ineq}
	\end{align}
	approximates the number we are trying to quantify.
	
	After some rearranging, one can observe that (\ref{eq:binomial eta ineq})
	is equivalent to:
	\begin{equation}
	\begin{alignedat}{1}\ln\left(s^{-1}\right) & \geq\left(\frac{\eta K}{1+\eta}+x+\frac{1}{2}\right)\ln\left(1+\frac{\left(1+\eta\right)x}{\eta K}\right)\\
	& +\left(\frac{K}{1+\eta}-x+\frac{1}{2}\right)\ln\left(1-\frac{\left(1+\eta\right)x}{K}\right)
	\end{alignedat}
	\label{eq:binomial eta ineq alt}
	\end{equation}
	
	and since for all $t>-1$, $\frac{t}{1+t}\leq\ln\left(1+t\right)$,
	the number of $x$s which uphold (\ref{eq:binomial eta ineq alt})
	is upper bounded by the number of integer $x$s for which it holds
	that:
	\begin{align}
	& \ln\left(s^{-1}\right)\geq\frac{\left(1-\eta^{2}\right)Kx-2\left(1+\eta\right)^{2}x^{2}}{2\eta K^{2}+2\left(1-\eta^{2}\right)Kx-2\left(1+\eta\right)^{2}x^{2}}\nonumber \\
	& \begin{alignedat}{1}\iff & 2\left(1+\eta\right)^{2}\left(\ln\left(s^{-1}\right)-1\right)x^{2}\\
	& +\left(1-2\ln\left(s^{-1}\right)\right)\left(1-\eta^{2}\right)Kx-2\ln\left(s^{-1}\right)\eta K^{2}\leq0
	\end{alignedat}
	\label{eq:x's upper bound ineq}
	\end{align}
	
	Recall that for inequalities of the form $ax^{2}+bx+c\leq0$ where
	$a>0$, the set of all values of $x$ which satisfy this inequality
	is $\left(\frac{-b-\sqrt{b^{2}-4ac}}{2a},\frac{-b+\sqrt{b^{2}-4ac}}{2a}\right)$
	and the number of integer values of $x$ which satisfy this condition
	is approximately $\frac{\sqrt{b^{2}-4ac}}{a}$ (the interval's length).
	
	In our case, (\ref{eq:x's upper bound ineq}) gives us: 
	\begin{align*}
	a & =2\left(1+\eta\right)^{2}\left(\ln\left(s^{-1}\right)-1\right)\\
	b & =\left(1-2\ln\left(s^{-1}\right)\right)\left(1-\eta^{2}\right)K\\
	c & =-2\ln\left(s^{-1}\right)\eta K^{2}
	\end{align*}
	
	and therefore:
	\[
	\frac{\sqrt{b^{2}-4ac}}{a}=\frac{K\sqrt{\left(2\ln\left(s^{-1}\right)-1\right)^{2}\left(1+\eta\right)^{2}-4\eta}}{2\left(1+\eta\right)\left(\ln\left(s^{-1}\right)-1\right)}
	\]
\end{proof}
\begin{lemma}
	\label{lem:An upper bound on the number of non-negligible summands}Let
	$K,M$ be two fixed natural numbers, $\eta\in\left(0,1\right]$, and
	$s\in\left(0,e^{-1.5}\right]$ a constant sensitivity parameter. Denote:
	\[
	D_{K,M}\coloneqq\left\{ \left(n,\boldsymbol{a},\boldsymbol{b}\right)\in\mathbb{N}^{2M+1}\left|\substack{0\leq n\leq K\\
		a_{1}+\ldots+a_{M}=n\\
		b_{1}+\ldots+b_{M}=K-n
	}
	\right.\right\} 
	\]
	define $F:D_{K,M}\longrightarrow\mathbb{R}$ as:
	\[
	F\left(n,\boldsymbol{a},\boldsymbol{b}\right)=\binom{K}{n}\eta^{n}\binom{n}{a_{1},\ldots,a_{M}}\binom{K-n}{b_{1},\ldots,b_{M}}
	\]
	and let $\mathbf{x^{\star}}\coloneqq\underset{\mathbf{x}\in D_{K,M}}{\arg\max}F\left(\mathbf{x}\right)$.
	If $\frac{\eta K}{1+\eta}\geq\left(M-1\right)$, then the number of
	$\mathbf{x}\in D_{K,M}$ which uphold $F\left(\mathbf{x}\right)\geq s\cdot F\left(\mathbf{x^{\star}}\right)$
	is bounded from above by:
	\[
	\frac{2K}{\left(M-1\right)^{2}\pi}\cdot\left(\frac{25\pi eK\ln\left(s^{-1}\right)\sqrt{\eta}}{2\left(M-1\right)\left(1+\eta\right)}\right)^{M-1}
	\]
\end{lemma}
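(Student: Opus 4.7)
My plan is to reduce the counting question to three nearly-independent one-factor counts by using the factorization $F(n,\boldsymbol{a},\boldsymbol{b})=A(n)\,B(\boldsymbol{a};n)\,C(\boldsymbol{b};K-n)$ with $A(n):=\binom{K}{n}\eta^{n}$, $B(\boldsymbol{a};n):=\binom{n}{a_{1},\ldots,a_{M}}$ and $C(\boldsymbol{b};K-n):=\binom{K-n}{b_{1},\ldots,b_{M}}$. By Lemma~\ref{lem:Maximum of =00007BK =00005Cchoose n=00007D=00005Ceta^=00007Bn=00007D=00005Cbinom=00007Bn=00007D=00007Ba_=00007B1=00007D,=00005Cldots,a_=00007BM=00007D=00007D=00005Cbinom=00007BK-n=00007D=00007Bb_=00007B1=00007D,=00005Cldots,b_=00007BM=00007D=00007D} the maximizer $\mathbf{x}^{\star}$ has $n^{\star}\simeq\eta K/(1+\eta)$ and uniformly distributed $\boldsymbol{a}^{\star},\boldsymbol{b}^{\star}$, and I denote $B^{\star}(n):=\binom{n}{n/M,\ldots,n/M}$, $C^{\star}(K-n):=\binom{K-n}{(K-n)/M,\ldots,(K-n)/M}$ and $G(n):=A(n)B^{\star}(n)C^{\star}(K-n)$, so that $F(\mathbf{x}^{\star})=G(n^{\star})=\max_{n}G(n)$.

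The central observation is that
\[
\frac{F(\mathbf{x})}{F(\mathbf{x}^{\star})}=\frac{G(n)}{G(n^{\star})}\cdot\frac{B(\boldsymbol{a};n)}{B^{\star}(n)}\cdot\frac{C(\boldsymbol{b};K-n)}{C^{\star}(K-n)},
\]
where each of the three factors lies in $(0,1]$, so the condition $F(\mathbf{x})\ge s\,F(\mathbf{x}^{\star})$ forces every factor to be $\ge s$ individually. This reduces the count to $\sum_{n\in\mathcal{N}_{s}}N_{a}(n)\,N_{b}(K-n)$, where $\mathcal{N}_{s}:=\{n:G(n)/G(n^{\star})\ge s\}$ is controlled by Lemma~\ref{lem:Number of non-negligible =00005Cbinom=00007BK=00007D=00007Bn=00007D=00005Ceta^=00007Bn=00007D terms} (after absorbing the slowly-varying Stirling ratio $B^{\star}(n)C^{\star}(K-n)/[B^{\star}(n^{\star})C^{\star}(K-n^{\star})]$ into the sensitivity parameter), and $N_{a}(n),N_{b}(K-n)$ are the multinomial counts furnished by Lemma~\ref{lem:Number of non-negligible multinomial coefficients}. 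Using $s\le e^{-1.5}$ to simplify $4\sqrt{x\ln(s^{-1})/(M-1)}+1\le 5\sqrt{x\ln(s^{-1})/(M-1)}$ (justified by the hypothesis $\eta K/(1+\eta)\ge M-1$), multiplying the two multinomial bounds with $x$ replaced by $n$ and $K-n$, and bounding $\sqrt{n(K-n)}\le K\sqrt{\eta}/(1+\eta)$ on the relevant $n$-range, the base of the $(M-1)$-th power becomes exactly $\tfrac{25\pi eK\ln(s^{-1})\sqrt{\eta}}{2(M-1)(1+\eta)}$, while $|\mathcal{N}_{s}|\le O(K)$ from Lemma~\ref{lem:Number of non-negligible =00005Cbinom=00007BK=00007D=00007Bn=00007D=00005Ceta^=00007Bn=00007D terms} produces the linear prefactor $2K/((M-1)^{2}\pi)$.

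I expect the main obstacle to be justifying the uniform bound $\sqrt{n(K-n)}\le K\sqrt{\eta}/(1+\eta)$ across all of $\mathcal{N}_{s}$: for very small $s$ the set $\mathcal{N}_{s}$ can in principle spread toward $[0,K]$, where $\sqrt{n(K-n)}$ reaches $K/2$, much larger than $K\sqrt{\eta}/(1+\eta)$ when $\eta$ is small. My planned resolution is to tighten the per-$n$ condition to $B(\boldsymbol{a};n)C(\boldsymbol{b};K-n)/[B^{\star}(n)C^{\star}(K-n)]\ge s/r_{1}(n)$, with $r_{1}(n):=G(n)/G(n^{\star})\le 1$, which forces each of $B/B^{\star}$ and $C/C^{\star}$ to exceed $s/r_{1}(n)$ and therefore shrinks the effective sensitivity entering Lemma~\ref{lem:Number of non-negligible multinomial coefficients} to $s/r_{1}(n)$. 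A Stirling expansion of $A(n)$ around $n^{\star}$ (together with the polynomial correction of the $B^{\star}C^{\star}$ prefactor) gives a Gaussian profile $\log r_{1}(n)\simeq -(1+\eta)^{2}(n-n^{\star})^{2}/(2K\eta)$, so the replacement of $\ln(s^{-1})$ by $\ln(r_{1}(n)/s)$ in the multinomial count cancels, to leading order, the growth of $\sqrt{n(K-n)}$ away from $n^{\star}$. The remaining work is careful bookkeeping of constants so that the factor $(\pi e/2)^{M-1}\cdot 5^{2(M-1)}$ combines with $\sqrt{n^{\star}(K-n^{\star})}=K\sqrt{\eta}/(1+\eta)$ into the advertised constant $25\pi e/2$ inside the $(M-1)$-th power, completing the proof.
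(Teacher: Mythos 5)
Your proposal follows exactly the route the paper uses: the same three auxiliary counting lemmas (Lemma~\ref{lem:Maximum of =00007BK =00005Cchoose n=00007D=00005Ceta^=00007Bn=00007D=00005Cbinom=00007Bn=00007D=00007Ba_=00007B1=00007D,=00005Cldots,a_=00007BM=00007D=00007D=00005Cbinom=00007BK-n=00007D=00007Bb_=00007B1=00007D,=00005Cldots,b_=00007BM=00007D=00007D} to locate the maximizer, Lemma~\ref{lem:Number of non-negligible =00005Cbinom=00007BK=00007D=00007Bn=00007D=00005Ceta^=00007Bn=00007D terms} for the binomial-$\eta$ factor, and two applications of Lemma~\ref{lem:Number of non-negligible multinomial coefficients} for the $\boldsymbol{a}$- and $\boldsymbol{b}$-multinomials), followed by a product of the three counts and the same algebraic simplification using $\eta K/((M-1)(1+\eta))\ge 1$ and $s\le e^{-1.5}$ to absorb the ``$+1$'' and the $\frac{2\ln(s^{-1})-1}{2(\ln(s^{-1})-1)}\le 2$ constant. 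Your ``each factor must individually be $\ge s$'' observation is the implicit logic behind the paper's Cartesian-product count, merely made explicit.

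Where you depart from the paper is in your treatment of the $n$-dependence of the two multinomial counts. The paper simply evaluates both multinomial bounds at the single point $n=n^\star=\eta K/(1+\eta)$ (which gives exactly $\sqrt{n^\star(K-n^\star)}=K\sqrt{\eta}/(1+\eta)$) and multiplies by the number of admissible $n$s, candidly flagging that this is ``without taking into consideration the interactions between the three multiplicands (so our bound is not tight).'' You correctly notice that $\sqrt{n(K-n)}$ exceeds $K\sqrt{\eta}/(1+\eta)$ for $n\in(n^\star,K/2)$, and propose a per-$n$ sensitivity $s/r_1(n)$ to compensate. As sketched, though, this repair does not close the gap: what you need is a uniform bound on $n(K-n)\bigl(\ln(s^{-1})-\ln(r_1(n)^{-1})\bigr)$ over $\mathcal{N}_s$, and near $n^\star$ the excess of $n(K-n)$ is first order in $(n-n^\star)$ (with slope $(K-2n^\star)>0$), while $\ln(r_1(n)^{-1})$ is only second order, so the product is locally \emph{increasing} past $n^\star$ and ``cancels, to leading order'' is not an inequality. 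To actually land on the advertised base $25\pi eK\ln(s^{-1})\sqrt{\eta}/(2(M-1)(1+\eta))$ some slack would have to absorb this first-order excess. So your attempt is a more careful version of the same argument, it correctly locates the soft spot the paper acknowledges but does not address, yet the decisive step of the repair is left unproven.
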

\begin{proof}
	By lemma \ref{lem:Maximum of =00007BK =00005Cchoose n=00007D=00005Ceta^=00007Bn=00007D=00005Cbinom=00007Bn=00007D=00007Ba_=00007B1=00007D,=00005Cldots,a_=00007BM=00007D=00007D=00005Cbinom=00007BK-n=00007D=00007Bb_=00007B1=00007D,=00005Cldots,b_=00007BM=00007D=00007D},
	$\mathbf{x^{\star}}\simeq\left(\frac{\eta K}{M\left(1+\eta\right)},\frac{\eta K}{M\left(1+\eta\right)},\ldots,\frac{\eta K}{M\left(1+\eta\right)},\frac{K}{M\left(1+\eta\right)},\ldots,\frac{K}{M\left(1+\eta\right)}\right)$.
	By lemma \ref{lem:Number of non-negligible =00005Cbinom=00007BK=00007D=00007Bn=00007D=00005Ceta^=00007Bn=00007D terms},
	the number of $n$s between $0\leq n\leq K$ such that $\binom{K}{n}\eta^{n}\geq s\cdot\binom{K}{\frac{\eta K}{1+\eta}}\eta^{\frac{\eta K}{1+\eta}}$
	is upper bounded by $\frac{K\sqrt{\left(2\ln\left(s^{-1}\right)-1\right)^{2}\left(1+\eta\right)^{2}-4\eta}}{2\left(1+\eta\right)\left(\ln\left(s^{-1}\right)-1\right)}$,
	by lemma \ref{lem:Number of non-negligible multinomial coefficients}
	the number of non-negative integer $M$-tuples $a_{1},\ldots,a_{M}$
	such that $a_{1}+\ldots+a_{M}=\frac{\eta K}{1+\eta}$ and $\binom{\frac{\eta K}{1+\eta}}{a_{1},\ldots,a_{M}}\geq s\cdot\binom{\frac{\eta K}{1+\eta}}{\frac{\eta K}{M\left(1+\eta\right)},\ldots,\frac{\eta K}{M\left(1+\eta\right)}}$
	is bounded from above by $\frac{\left(\frac{\pi e}{2}\right)^{\frac{M-1}{2}}}{\left(M-1\right)\sqrt{\pi}}\left(4\sqrt{\frac{\eta K\ln\left(s^{-1}\right)}{\left(M-1\right)\left(1+\eta\right)}}+1\right)^{M-1}$,
	and the number of non-negative integer $M$-tuples $b_{1},\ldots,b_{M}$
	such that $b_{1}+\ldots+b_{M}=\frac{K}{1+\eta}$ and $\binom{\frac{K}{1+\eta}}{b_{1},\ldots,b_{M}}\geq s\cdot\binom{\frac{K}{1+\eta}}{\frac{K}{M\left(1+\eta\right)},\ldots,\frac{K}{M\left(1+\eta\right)}}$
	is bounded from above by $\frac{\left(\frac{\pi e}{2}\right)^{\frac{M-1}{2}}}{\left(M-1\right)\sqrt{\pi}}\left(4\sqrt{\frac{K\ln\left(s^{-1}\right)}{\left(M-1\right)\left(1+\eta\right)}}+1\right)^{M-1}$.
	In total, without taking into consideration the interactions between
	the three multiplicands (so our bound is not tight), the number of
	$\mathbf{x}\in D_{K,M}$ which uphold $F\left(\mathbf{x}\right)\geq s\cdot F\left(\mathbf{x^{\star}}\right)$
	is upper bounded by:
	\begin{align*}
	& \underbrace{\frac{K\sqrt{\left(2\ln\left(s^{-1}\right)-1\right)^{2}\left(1+\eta\right)^{2}-4\eta}}{2\left(1+\eta\right)\left(\ln\left(s^{-1}\right)-1\right)}}_{\text{binomial}\,+\,\eta\text{-exponent}}\\
	& \cdot\underbrace{\frac{\left(\frac{\pi e}{2}\right)^{\frac{M-1}{2}}}{\left(M-1\right)\sqrt{\pi}}\left(4\sqrt{\frac{\eta K\ln\left(s^{-1}\right)}{\left(M-1\right)\left(1+\eta\right)}}+1\right)^{M-1}}_{a-\text{multinomial}}\\
	& \cdot\underbrace{\frac{\left(\frac{\pi e}{2}\right)^{\frac{M-1}{2}}}{\left(M-1\right)\sqrt{\pi}}\left(4\sqrt{\frac{K\ln\left(s^{-1}\right)}{\left(M-1\right)\left(1+\eta\right)}}+1\right)^{M-1}}_{b-\text{multinomial}}
	\end{align*}
	and since $\frac{\eta K}{\left(M-1\right)\left(1+\eta\right)}\geq1$
	and $s\leq e^{-1.5}$ (and hence $\frac{2\ln\left(s^{-1}\right)-1}{2\left(\ln\left(s^{-1}\right)-1\right)}\leq2$),
	this can be further bounded by:
	\[
	\frac{2K}{\left(M-1\right)^{2}\pi}\cdot\left(\frac{25\pi eK\ln\left(s^{-1}\right)\sqrt{\eta}}{2\left(M-1\right)\left(1+\eta\right)}\right)^{M-1}
	\]
\end{proof}
\begin{lemma}
	\label{lem:A lower bound on the number of non-negligible summands}Let
	$K,M$ be two fixed natural numbers, $\eta\in\left(0,1\right]$, and
	$s\in\left(0,e^{-1.5}\right]$ a constant sensitivity parameter. Denote:
	\[
	D_{K,M}\coloneqq\left\{ \left(n,\boldsymbol{a},\boldsymbol{b}\right)\in\mathbb{N}^{2M+1}\left|\substack{0\leq n\leq K\\
		a_{1}+\ldots+a_{M}=n\\
		b_{1}+\ldots+b_{M}=K-n
	}
	\right.\right\} 
	\]
	define $F:D_{K,M}\longrightarrow\mathbb{R}$ as:
	\[
	F\left(n,\boldsymbol{a},\boldsymbol{b}\right)=\binom{K}{n}\eta^{n}\binom{n}{a_{1},\ldots,a_{M}}\binom{K-n}{b_{1},\ldots,b_{M}}
	\]
	and let $\mathbf{x^{\star}}\coloneqq\underset{\mathbf{x}\in D_{K,M}}{\arg\max}F\left(\mathbf{x}\right)$.
	If $\frac{K}{1+\eta}\geq M^{2}$, then the number of $\mathbf{x}\in D_{K,M}$
	which uphold $F\left(\mathbf{x}\right)\geq s\cdot F\left(\mathbf{x^{\star}}\right)$
	is bounded from below by:
	\[
	\frac{1}{M\sqrt{\pi}}\left(\frac{\pi eK\ln\left(s^{-1}\right)}{2M^{2}\left(1+\eta\right)}\right)^{\frac{M-1}{2}}
	\]
\end{lemma}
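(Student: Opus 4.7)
The plan is to restrict to an explicit subfamily of $D_{K,M}$ on which $F$ is easy to lower bound, and count that subfamily via the preceding lemmas. By the preceding lemma characterizing the argmax of $F$, the maximizer $\mathbf{x}^{\star}$ has $n^{\star}\simeq\eta K/(1+\eta)$ together with balanced coordinates $a_{i}^{\star}\simeq n^{\star}/M$ and $b_{i}^{\star}\simeq(K-n^{\star})/M$. I will construct my subfamily by freezing $n:=n^{\star}$ and $\mathbf{a}:=\mathbf{a}^{\star}$ at these maximizing values---so that both $\binom{K}{n}\eta^{n}$ and $\binom{n}{a_{1},\ldots,a_{M}}$ simultaneously attain their individual maxima (the first by the definition of $n^{\star}$, the second by lemma \ref{lem:Maximum of a multinomial coefficient})---and letting only $\mathbf{b}$ range over the simplex $b_{1}+\cdots+b_{M}=K-n^{\star}=K/(1+\eta)$.

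On this restricted subfamily the ratio $F(\mathbf{x})/F(\mathbf{x}^{\star})$ telescopes to the single factor $\binom{K-n^{\star}}{b_{1},\ldots,b_{M}}/\binom{K-n^{\star}}{(K-n^{\star})/M,\ldots,(K-n^{\star})/M}$, so the condition $F(\mathbf{x})\ge s\cdot F(\mathbf{x}^{\star})$ is precisely the condition on a single multinomial coefficient handled by lemma \ref{lem:Number of non-negligible multinomial coefficients}. Applying that lemma's lower bound with $K$ replaced by $K-n^{\star}=K/(1+\eta)$ gives
\begin{equation*}
\#\{\mathbf{x}\}\ \ge\ \frac{(\pi e/2)^{(M-1)/2}}{M\sqrt{\pi}}\left(\frac{2}{M}\sqrt{\frac{K\ln(s^{-1})}{1+\eta}}-1\right)^{M-1}.
\end{equation*}

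The remaining step is to absorb the $-1$ using the hypotheses. Since $K/(1+\eta)\ge M^{2}$ and $s\le e^{-1.5}$ (so $\ln(s^{-1})\ge 3/2$), the inner quantity satisfies $\frac{2}{M}\sqrt{\frac{K\ln(s^{-1})}{1+\eta}}\ge 2\sqrt{3/2}>2$, which lets me use the elementary inequality $x-1\ge x/2$ for $x\ge 2$ and lose only a factor of two per dimension. Collecting the remaining constants via the identity $(\pi e/2)^{(M-1)/2}\cdot M^{-(M-1)}\cdot(K\ln(s^{-1})/(1+\eta))^{(M-1)/2}=\bigl(\pi eK\ln(s^{-1})/(2M^{2}(1+\eta))\bigr)^{(M-1)/2}$ reproduces the claimed expression up to the outer $\frac{1}{M\sqrt{\pi}}$ prefactor.

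The main difficulty, such as it is, is verifying that the frozen choice of $(n,\mathbf{a})$ truly makes the first two factors of $F$ simultaneously maximal so that the ratio telescopes cleanly---this is exactly the content of the preceding maximum-characterization lemma---and confirming that integer-rounding of $n^{\star}$ and of the balanced $a_{i}^{\star},b_{i}^{\star}$ does not degrade the count; the latter is already folded into lemma \ref{lem:Number of non-negligible multinomial coefficients} through its underlying lattice-point volume estimate from lemma \ref{lem:Number of integer lattice points in a high-dimensional ball}. Beyond this, no essentially new ideas are needed: the proof is a direct composition of the three preceding lemmas with a single absorption inequality made possible by the hypothesis $K/(1+\eta)\ge M^{2}$.
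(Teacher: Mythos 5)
Your proof is correct and follows essentially the same route as the paper's: freeze $n=n^{\star}$ and $\mathbf{a}=\mathbf{a}^{\star}$, count the $\mathbf{b}$-slice via the lower bound of Lemma~\ref{lem:Number of non-negligible multinomial coefficients} applied with $K$ replaced by $K-n^{\star}\simeq K/(1+\eta)$, and absorb the $-1$ using $K/(1+\eta)\ge M^{2}$ and $\ln(s^{-1})>1$. The paper states this more tersely (it just observes the counted $\mathbf{b}$'s form a subset of $D_{K,M}$ on which $F\ge s\cdot F(\mathbf{x}^\star)$, then does the same absorption), whereas you spell out the telescoping of the ratio explicitly. One small imprecision worth noting: the ratio $F(n^{\star},\mathbf{a}^{\star},\mathbf{b})/F(\mathbf{x}^{\star})$ reduces to $\binom{K-n^{\star}}{\mathbf{b}}/\binom{K-n^{\star}}{\mathbf{b}^{\star}}$ simply because the first two factors are identical in numerator and denominator, not because they are individually maximal---and in fact $n^{\star}$ from Lemma~\ref{lem:Maximum of =00007BK =00005Cchoose n=00007D=00005Ceta^=00007Bn=00007D=00005Cbinom=00007Bn=00007D=00007Ba_=00007B1=00007D,=00005Cldots,a_=00007BM=00007D=00007D=00005Cbinom=00007BK-n=00007D=00007Bb_=00007B1=00007D,=00005Cldots,b_=00007BM=00007D=00007D} maximizes the whole product, not $\binom{K}{n}\eta^n$ alone. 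What is actually needed, and what that lemma provides, is that $\mathbf{b}^{\star}$ is the balanced partition of $K-n^{\star}$, so that Lemma~\ref{lem:Number of non-negligible multinomial coefficients} applies to the denominator. This does not affect the correctness of your argument.
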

\begin{proof}
	By lemma \ref{lem:Number of non-negligible multinomial coefficients},
	the number of non-negative integer $M$-tuples $b_{1},\ldots,b_{M}$
	such that $b_{1}+\ldots+b_{M}=\frac{K}{1+\eta}$ and $\binom{\frac{K}{1+\eta}}{b_{1},\ldots,b_{M}}\geq s\cdot\binom{\frac{K}{1+\eta}}{\frac{K}{M\left(1+\eta\right)},\ldots,\frac{K}{M\left(1+\eta\right)}}$
	is bounded from below by $\frac{\left(\frac{\pi e}{2}\right)^{\frac{M-1}{2}}}{M\sqrt{\pi}}\left(2\frac{\sqrt{\frac{K}{1+\eta}\ln\left(s^{-1}\right)}}{M}-1\right)^{M-1}$.
	Since these $b$s are only a subset of the elements of $D_{K,M}$
	which $F$ takes into consideration, this is also a (quite loose)
	lower bound on the number of $\mathbf{x}\in D_{K,M}$ which uphold
	$F\left(\mathbf{x}\right)\geq s\cdot F\left(\mathbf{x^{\star}}\right)$.
	
	Now, we know that:
	\[
	\frac{K}{1+\eta}\geq M^{2}
	\]
	and since $s\leq e^{-1.5}$, it holds that $\ln\left(s^{-1}\right)>1$,
	we get:
	\begin{align*}
	& \frac{K}{1+\eta}\ln\left(s^{-1}\right)>\frac{K}{1+\eta}\geq M^{2}\\
	\iff & -1>-\frac{\sqrt{\frac{K}{1+\eta}\ln\left(s^{-1}\right)}}{M}
	\end{align*}
	and therefore:
	\begin{align*}
	& \frac{\left(\frac{\pi e}{2}\right)^{\frac{M-1}{2}}}{M\sqrt{\pi}}\left(2\frac{\sqrt{\frac{K}{1+\eta}\ln\left(s^{-1}\right)}}{M}-1\right)^{M-1}\\
	& >\frac{\left(\frac{\pi e}{2}\right)^{\frac{M-1}{2}}}{M\sqrt{\pi}}\left(2\frac{\sqrt{\frac{K}{1+\eta}\ln\left(s^{-1}\right)}}{M}-\frac{\sqrt{\frac{K}{1+\eta}\ln\left(s^{-1}\right)}}{M}\right)^{M-1}\\
	& =\frac{1}{M\sqrt{\pi}}\left(\frac{\pi eK\ln\left(s^{-1}\right)}{2M^{2}\left(1+\eta\right)}\right)^{\frac{M-1}{2}}
	\end{align*}
\end{proof}

	\section{Lower bounds on the $\varepsilon$-separation rank}\label{lower_bound}
	\subsection{Preliminaries}
	\subsubsection{Tensors and their matricization}
	We begin by laying out basic concepts in tensor theory required for
	the upcoming analysis. The core concept of a \emph{tensor} may be thought of as a
	multi-dimensional array. The \emph{order} of a
	tensor is defined to be the number of indexing entries in the array,
	referred to as \emph{modes}. The \emph{dimension} of a tensor in a particular
	mode is defined as the number of values taken by the index in that
	mode. If $\A$ is a tensor of order $N$ and dimension $M_i$ in each mode
	$i\in[N]$, its entries are denoted $\A_{d_1...d_N}$, where the index in each
	mode takes values $d_i\in [M_i]$.
	
	We will make use of the concept of the \emph{matricization of $\A$ \wrt~the balanced partition $(P,Q)$}, denoted $\mat{\A}_{P,Q}\in\R^{M^{\nicefrac{N}{2}}\times M^{\nicefrac{N}{2}}}$, which is essentially the arrangement of the tensor elements as a matrix whose rows correspond to $P$ and columns to $Q$.
	Suppose $\A\in\R^{M{\times\cdots\times}M}$
	is a tensor of order $N$, and let $(P,Q)$ be a balanced partition of $[N]$, \ie~$P$
	and~$Q$ are disjoint size $\nicefrac{N}{2}$ subsets of $[N]$ whose union gives~$[N]$.
	The \emph{matricization of $\A$ \wrt~the partition $(P,Q)$}, denoted
	$\mat{\A}_{P,Q}$, is the $M^{{\nicefrac{N}{2}}}$-by-$M^{{\nicefrac{N}{2}}}$ matrix holding the entries of $\A$ such that $\A_{d_1{\ldots}d_N}$ is placed in row index $1+\sum_{t=1}^{{\nicefrac{N}{2}}}(d_{p_t}-1)M^{{\nicefrac{N}{2}}-t}$ and column index $1+\sum_{t=1}^{{\nicefrac{N}{2}}}(d_{q_t}-1)M^{{\nicefrac{N}{2}}-t}$.
	
	We now present the concept of grid tensors, which are a form of function discretization~\citep{hackbusch2012tensor}. Essentially, the function is
	evaluated for a set of points on an exponentially large grid in the
	input space and the outcomes are stored in a tensor. Formally, fixing a set of \emph{template} vectors
	$\x^{(1)},\ldots,\x^{(Z)}\in\left[V\right]$, the points on the grid are the set
	$\{(\x^{(d_1)},\ldots,\x^{(d_N)})\}_{d_1,\ldots,d_N=1}^Z$. Given a function
	$y(\x^1,\ldots,\x^N)$, the set of its values on the grid arranged in the form of a tensor are
	called the grid tensor induced by $y$, denoted
	$\A(y)_{d_1,\ldots,d_N} \equiv y(\x^1=\x^{(d_1)},\ldots,\x^N=\x^{(d_N)})$.
	
	\subsubsection{$\varepsilon$-rank} 
	We will make use of the concept of \emph{$\varepsilon$-rank} ~\cite{alon2013approximate} of a matrix $A$ defined for any
	$\varepsilon>0$ as the minimum rank over matrices that approximate every entry of $A$ to within an additive $\varepsilon$.
	We will prove lower bounds on the $\varepsilon$s for which the $\varepsilon$-rank a matrix remain high by the following lemma:
	\begin{lemma}\label{lemma:lambda_k_lower_bound}
		Let $M\in\R^{n\times n}$ be symmetric matrix and $\varepsilon>0$, then:
		\begin{equation}
			\forall k\le  n\quad\lambda_k\left(M\right)\ge\varepsilon\implies\frac{\varepsilon}{2n}\textrm{-}\rank{M}\ge k
		\end{equation}
	\end{lemma}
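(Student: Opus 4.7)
The plan is to proceed by contradiction. Suppose $\frac{\varepsilon}{2n}$-$\rank{M}<k$; then by definition there exists a matrix $\tilde M\in\R^{n\times n}$ with $\textrm{rank}(\tilde M)<k$ and $|M_{ij}-\tilde M_{ij}|\le \varepsilon/(2n)$ for every $i,j\in[n]$. I would then aim to derive a contradiction with the hypothesis $\lambda_k(M)\ge\varepsilon$ by showing that the spectral gap forced by this eigenvalue lower bound cannot be closed by the low-rank matrix $\tilde M$ under an entry-wise perturbation of this size.

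First I would convert the entry-wise bound on $E:=M-\tilde M$ into a spectral-norm bound. Since every entry of $E$ has absolute value at most $\varepsilon/(2n)$, the Frobenius norm satisfies $\|E\|_F\le\sqrt{n^2\cdot(\varepsilon/(2n))^2}=\varepsilon/2$, and the spectral norm is dominated by the Frobenius norm, so $\|M-\tilde M\|_2\le\varepsilon/2$. Next I would invoke Weyl's inequality for singular values, $|\sigma_i(M)-\sigma_i(\tilde M)|\le\|M-\tilde M\|_2$ for all $i$, which requires no symmetry of $\tilde M$. Because $\textrm{rank}(\tilde M)<k$, we have $\sigma_k(\tilde M)=0$, and therefore $\sigma_k(M)\le\varepsilon/2$.

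Finally I would use symmetry of $M$ together with the hypothesis $\lambda_k(M)\ge\varepsilon>0$. The eigenvalues $\lambda_1(M)\ge\cdots\ge\lambda_k(M)$ are all at least $\varepsilon$ and strictly positive, so the singular values $\sigma_i(M)=|\lambda_{\pi(i)}(M)|$ (sorted in decreasing order) include at least $k$ values of magnitude $\ge\varepsilon$, whence $\sigma_k(M)\ge\varepsilon$. This contradicts $\sigma_k(M)\le\varepsilon/2$, establishing that no such $\tilde M$ can exist and hence $\frac{\varepsilon}{2n}$-$\rank{M}\ge k$.

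The argument is short and each step is standard; the only mild subtlety is the distinction between eigenvalues and singular values, which is handled by observing that positivity of the top $k$ eigenvalues (guaranteed by $\lambda_k(M)\ge\varepsilon>0$) makes them coincide with singular values and allows Weyl's inequality to transfer the low-rank gap of $\tilde M$ to a spectral-gap contradiction for $M$. No additional technology is needed beyond $\|E\|_2\le\|E\|_F\le n\|E\|_{\max}$ and Weyl's perturbation bound.
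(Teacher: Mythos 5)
Your proposal is correct and is a genuinely different argument from the paper's. You pass to the Frobenius-to-spectral bound $\|E\|_2\le\|E\|_F\le n\|E\|_{\max}\le\varepsilon/2$, then invoke Weyl's (Mirsky's) singular-value perturbation inequality together with $\sigma_k(\tilde M)=0$ to force $\sigma_k(M)\le\varepsilon/2$, and finally use symmetry and $\lambda_k(M)\ge\varepsilon>0$ to get $\sigma_k(M)\ge\varepsilon$, a contradiction. Every step is a standard inequality and the argument is airtight. The paper instead works with $\ell_1$-normalized eigenvectors $v_1,\dots,v_n$ of $M$ and shows $\|(M+E)v_i\|_1\ge\lambda_i-\varepsilon/2>0$ for each $i\le k$, then asserts that since the $v_i$ are linearly independent, $\rank{M+E}\ge k$. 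That last inference is not valid as stated: the condition $(M+E)v_i\ne0$ for each basis vector does not rule out a nontrivial kernel vector inside $\mathrm{span}(v_1,\dots,v_k)$ (take a rank-one $A$ and two independent vectors neither of which is annihilated by $A$). To make the paper's style of argument rigorous one must lower-bound $\|(M+E)v\|$ for \emph{every} unit $v$ in the span, which works cleanly in $\ell_2$ using orthonormality of the eigenvectors and $\|E\|_2\le\|E\|_F$, but the $\ell_1$-normalized version as written does not give that directly. Your Weyl-based route packages exactly this "restriction of $M+E$ to the top-$k$ eigenspace is bounded below" idea into a one-line inequality and avoids the gap entirely, at the modest cost of citing an external perturbation theorem instead of arguing from scratch.
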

	\begin{proof}
		Let $E\in\left[\frac{-\varepsilon}{2n},\frac{\varepsilon}{2n}\right]^{n\times n}$, we need to prove that $\rank{M+E}\ge k$.  Since $M$ is symmetric, $M$ is diagonalizable with eigenvalues $\lambda_1\ge\lambda_2\ge\dots\ge\lambda_n$. Denote by $v_1,v_2,\dots,v_n$ the eigenvectors that are normalized according to the $l_1$ norm, then for any $i\le k$ we have that:
		\begin{align}
			\norm{\left(M+E\right)v_i}_1\ge\norm{Mv_i}_1-\norm{Ev_i}_1=\lambda_i-\norm{Ev_i}_1\ge\lambda_i-n\frac{\varepsilon}{2n}=\frac{\varepsilon}{2}>0
		\end{align}
		In particular for any $i\le k$ we have that $\left(M+E\right)v_i\ne0$ and since $v_1,v_2,\dots,v_k$ are linearly independent we conclude that $\rank{M+E}\ge k$.
	\end{proof}	

	Finally, we will use the following lemma for lower bounding the amount of small eigenvalues of symmetric matrices:	
	\begin{lemma}\label{lemma:number_of_high_eigenvalues_lower_bound}
		Let $M\in\R^{n\times n}$ be a symmetric matrix with diagonal entries that equal to $1$, and denote its eigenvalues by $\lambda_1\ge\lambda_2,\dots,\lambda_n$. Then:
		\begin{equation}\label{eq:number_of_high_eigenvalues_lower_bound}
			r\coloneqq\max\left\{k:\lambda_k\ge\frac{1}{n}\right\}\ge\frac{n-1}{\norm{M}_F}
		\end{equation}
	\end{lemma}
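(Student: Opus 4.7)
My approach will exploit two basic invariants of a symmetric matrix with unit diagonal: the trace, which equals $n$, and the Frobenius norm, which controls the sum of squared eigenvalues via $\|M\|_F^2=\sum_{i=1}^n\lambda_i^2$. The plan is to lower bound the sum of the ``large'' eigenvalues (those contributing to $r$) using the trace identity, and then upper bound the same sum using Cauchy--Schwarz together with the Frobenius norm.

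First I would split the trace according to the threshold $1/n$. By definition of $r$, every $\lambda_i$ with $i>r$ satisfies $\lambda_i<1/n$, so
\[
\sum_{i>r}\lambda_i \;<\; (n-r)\cdot\tfrac{1}{n}\;\le\;1.
\]
Combined with $\sum_{i=1}^n\lambda_i=\mathrm{tr}(M)=n$, this yields $\sum_{i\le r}\lambda_i>n-1$. Crucially, all of $\lambda_1,\ldots,\lambda_r$ are nonnegative (in fact $\ge 1/n$), so I can apply Cauchy--Schwarz directly:
\[
\sum_{i\le r}\lambda_i \;\le\; \sqrt{r}\,\sqrt{\sum_{i\le r}\lambda_i^2} \;\le\; \sqrt{r}\,\|M\|_F.
\]
Chaining the two inequalities gives $\sqrt{r}\,\|M\|_F>n-1$, equivalently $r>(n-1)^2/\|M\|_F^2$.

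To pass from this quadratic-looking bound to the linear one in the statement, I would combine it with the trivial fact that $r\ge 1$: since $\lambda_1\ge\frac{1}{n}\sum_i\lambda_i=1\ge 1/n$, the index $1$ is always counted by $r$. Thus
\[
r\;\ge\;\max\!\left\{1,\;\frac{(n-1)^2}{\|M\|_F^2}\right\}\;\ge\;\sqrt{1\cdot\frac{(n-1)^2}{\|M\|_F^2}}\;=\;\frac{n-1}{\|M\|_F},
\]
which is exactly \eqref{eq:number_of_high_eigenvalues_lower_bound}. The main ``obstacle'' is really just recognizing that the direct Cauchy--Schwarz bound is of the form $r\gtrsim(n-1)^2/\|M\|_F^2$, which is stronger than the claim when $\|M\|_F\le n-1$ but vacuous when $\|M\|_F$ is large; the geometric-mean trick with $r\ge 1$ is what unifies both regimes into the clean statement requested. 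No nontrivial spectral inequality beyond Cauchy--Schwarz and the trace/Frobenius identities is needed.
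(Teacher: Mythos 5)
Your proof is correct, but it takes a genuinely different route from the paper's. Both arguments start from the trace identity $\sum_i\lambda_i=n$ and split at the threshold $1/n$, yielding $\sum_{i\le r}\lambda_i\ge n-1$. The paper then uses the elementary bound $\sum_{i\le r}\lambda_i\le r\lambda_1$ (each summand is at most the largest eigenvalue) together with $\lambda_1\le\norm{M}_F$, and this gives $r\ge(n-1)/\norm{M}_F$ in one step, with no need for an auxiliary regime. You instead apply Cauchy--Schwarz to get $\sum_{i\le r}\lambda_i\le\sqrt{r}\,\norm{M}_F$, which produces the quadratic bound $r\ge(n-1)^2/\norm{M}_F^2$; since that bound is vacuous when $\norm{M}_F>n-1$, you then need the observation $r\ge1$ (valid because $\lambda_1\ge\mathrm{tr}(M)/n=1$) and the geometric-mean inequality $\max\{a,b\}\ge\sqrt{ab}$ to recover the linear statement. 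The paper's derivation is shorter and avoids the case split; your derivation is slightly longer but has the side benefit of proving the strictly stronger bound $r\ge(n-1)^2/\norm{M}_F^2$ in the regime $\norm{M}_F\le n-1$, which is the regime that actually matters downstream. One small stylistic note: your inequality $\sum_{i>r}\lambda_i<(n-r)/n$ should be stated with $\le$ to cover the degenerate case $r=n$ (empty sum), though this does not affect the final conclusion since $n>n-1$.
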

	\begin{proof}
		Since the trace of a matrix equals to both the sum of its eigenvalues and its diagonal entries, we get:
		\begin{align}
			n=\sum_{i=1}^{n}\left[M\right]_{ii}=\trace{M}=\sum_{i=1}^{n}\lambda_i\le r\lambda_1+\frac{n-r}{n}\le r\lambda_1+1
		\end{align}
		Eq.~\ref{eq:number_of_high_eigenvalues_lower_bound} follows from the fact that:
		\begin{align}
			\norm{M}_F=\sqrt{\lambda_1^2+\dots\lambda_n^2}\ge\lambda_1
		\end{align}
	\end{proof}	
	
	\subsubsection{High-Dimensional Spheres} 
	We will use the Lebesgue measure on the sphere for taking expectations on $\SSS^d$.
	For any measurable subset $A\subseteq\SSS^d$ this measure is defined as the $d+1$ dimensional volume of the "wedge" in the ball $\B^{d+1}$:
	\begin{equation}
		\mu\left(A\right)\coloneqq\frac{\lambda^{d+1}\left(\left\{ tx\,|\,x\in A,t\in\left[0,1\right]\right\} \right)}{\lambda^{d+1}\left(\B^{d+1}\right)}
	\end{equation}
	Where $\lambda^{d+1}$ denotes the Lebesgue measure on $\R^{d+1}$.
	We will also use an unnormalized version of this measure:
	\begin{equation}
		\mu_{\text{unnormalized}}\left(A\right)\coloneqq\mu\left(A\right)\cdot{\lambda^{d+1}\left(\B^{d+1}\right)}
	\end{equation}
	
	\cite{smith1989unitball} showed that $\mu_{\text{unnormalized}}\left(\SSS^d\right)$ is monotonically decreasing for $d>5$. The following lemma bounds the rate of this decrease:
	\begin{lemma}\label{lemma:high_dimensional_unit_balls_voulume_ratio}
		For any $d>5$ the following holds:
		\begin{equation}
			\frac{\mu_{\text{unnormalized}}\left(\SSS^{d-1}\right)}{\mu_{\text{unnormalized}}\left(\SSS^d\right)}\le\frac{d+1}{2\pi}
		\end{equation}
	\end{lemma}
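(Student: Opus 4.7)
The plan is to reduce the claimed inequality to a ratio of Euclidean ball volumes, and then combine the classical recursion for those volumes with the monotonicity statement cited in the line just above the lemma. First, directly from the definition of $\mu$, the wedge $\{tx : x \in \SSS^k,\, t \in [0,1]\}$ generated by the full sphere is the closed unit ball $\B^{k+1}$, so $\mu(\SSS^k)=1$ and hence $\mu_{\text{unnormalized}}(\SSS^k)=\lambda^{k+1}(\B^{k+1})$. Writing $V_n\coloneqq\lambda^n(\B^n)$ for the Lebesgue volume of the $n$-dimensional unit ball, the ratio in the lemma statement is exactly $V_d/V_{d+1}$.

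Next, I would invoke the classical ball-volume recurrence
\[
V_{n+2}\;=\;\frac{2\pi}{n+2}\,V_n,
\]
which follows at once from $V_n=\pi^{n/2}/\Gamma(n/2+1)$ and the functional equation $\Gamma(x+1)=x\Gamma(x)$. Specializing to $n=d-1$ gives $V_{d+1}=(2\pi/(d+1))\,V_{d-1}$, and rearranging yields
\[
\frac{V_d}{V_{d+1}}\;=\;\frac{d+1}{2\pi}\cdot\frac{V_d}{V_{d-1}},
\]
so the target bound is equivalent to the one-step inequality $V_d \le V_{d-1}$. Translating the Smith \& Vamanamurthy monotonicity statement cited just above the lemma --- namely that $\mu_{\text{unnormalized}}(\SSS^n)$ is monotonically decreasing for $n>5$ --- through the identification $\mu_{\text{unnormalized}}(\SSS^n)=V_{n+1}$ gives that $V_{n+1}$ is decreasing in $n$ on $n>5$, which after a one-step index shift is exactly the required $V_d \le V_{d-1}$ for $d>5$.

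The only real obstacle I foresee is bookkeeping around the boundary of the monotonicity range: one must track carefully that the $d > 5$ hypothesis of the lemma lines up, after the shift between $\SSS^d$ and $\B^{d+1}$, with the exact range on which the cited monotonicity applies. Any boundary case not immediately covered by the citation can be handled by plugging in the closed-form values $V_5=8\pi^2/15$, $V_6=\pi^3/6$ and $V_7=16\pi^3/105$ and checking $V_6 \le V_5$ by hand, after which the remaining values follow from the two-step recurrence.
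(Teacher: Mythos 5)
Your argument uses the same two ingredients as the paper --- the Smith--Vamanamurthy monotonicity of $\mu_{\text{unnormalized}}(\SSS^d)$ and the two-step ball-volume recurrence --- but you arrange them more carefully, and this matters. Writing $V_n$ for the volume of the unit $n$-ball (so that $\mu_{\text{unnormalized}}(\SSS^k)=V_{k+1}$), the paper first uses monotonicity to replace the denominator $\mu_{\text{unnormalized}}(\SSS^{d})$ by $\mu_{\text{unnormalized}}(\SSS^{d+1})$ and then asserts that the two-apart ratio $\mu_{\text{unnormalized}}(\SSS^{d-1})/\mu_{\text{unnormalized}}(\SSS^{d+1})$ equals $\frac{d+1}{2\pi}$; but under the paper's own indexing that ratio is $V_d/V_{d+2}=\frac{d+2}{2\pi}$, so the chain as written certifies only the weaker bound $\frac{d+2}{2\pi}$. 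Your route --- factor the ratio as $\frac{d+1}{2\pi}\cdot\frac{V_d}{V_{d-1}}$ via the recurrence and then use monotonicity to bound the second factor by one --- lands on the claimed constant exactly, so it is in fact the cleaner of the two.

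Two small notes on the boundary bookkeeping you flag. Taking the cited statement literally ($V_{n+1}$ decreasing for $n>5$) gives $V_{d-1}\ge V_d$ only once $d-1>6$, i.e.\ $d\ge 8$; the shift you need is two steps rather than one, so $d\in\{6,7\}$ does have to be checked by hand, exactly as you anticipate. But your proposed patch is not quite complete as stated: the two-step recurrence advances the inequality by two indices (from $V_{d-1}\ge V_d$ and $\frac{V_{d+2}}{V_{d+1}}=\frac{d+1}{d+2}\cdot\frac{V_d}{V_{d-1}}\le 1$ one infers $V_{d+1}\ge V_{d+2}$), so verifying $V_6\le V_5$ seeds only one parity class; you also need $V_7\le V_6$ to seed the other, which fortunately you have already computed.
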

	\begin{proof}
		Since $d>5$ we know that $\mu_{\text{unnormalized}}\left(\SSS^d\right)>\mu_{\text{unnormalized}}\left(\SSS^{d+1}\right)$ and therefore:
		\begin{equation}
			\frac{\mu_{\text{unnormalized}}\left(\SSS^{d-1}\right)}{\mu_{\text{unnormalized}}\left(\SSS^d\right)}\le\frac{\mu_{\text{unnormalized}}\left(\SSS^{d-1}\right)}{\mu_{\text{unnormalized}}\left(\SSS^{d+1}\right)}
		\end{equation}
		Finally, \cite{gipple2014volume} showed that $\frac{\mu_{\text{unnormalized}}\left(\SSS^{d-1}\right)}{\mu_{\text{unnormalized}}\left(\SSS^{d+1}\right)}=\frac{d+1}{2\pi}$, completing the proof.
	\end{proof}
	Finally, we will use a well known fact regarding the variation of the sphere volume for different radii (see for example~\cite{smith1989unitball}):
	\begin{fact}\label{fact:high_dimensional_balls_voulume_ratio}
		For any $d\in\N,\,R>0$ the following holds:
		\begin{equation}
			\frac{\mu_{\text{unnormalized}}\left(R\SSS^{d}\right)}{\mu_{\text{unnormalized}}\left(\SSS^d\right)}=R^{d+1}
		\end{equation}
	\end{fact}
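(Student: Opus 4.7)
The plan is to prove the stated ratio by unfolding the definitions of $\mu_{\text{unnormalized}}$ back to the Lebesgue measure on $\R^{d+1}$, identifying the relevant ``wedge'' sets with standard balls, and then invoking the scaling behaviour of Lebesgue measure.

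First I would observe that from the two definitions given earlier,
\[
\mu_{\text{unnormalized}}(A)\;=\;\mu(A)\cdot\lambda^{d+1}(\B^{d+1})\;=\;\lambda^{d+1}\bigl(\{tx:x\in A,\,t\in[0,1]\}\bigr),
\]
so $\mu_{\text{unnormalized}}$ is literally the Lebesgue measure of the cone over $A$ with apex at the origin. Applying this with $A=\SSS^d$ gives the cone $\{tx:x\in\SSS^d,\,t\in[0,1]\}=\B^{d+1}$, hence $\mu_{\text{unnormalized}}(\SSS^d)=\lambda^{d+1}(\B^{d+1})$. Applying it with $A=R\SSS^d$ identifies the cone as the ball of radius $R$, so $\mu_{\text{unnormalized}}(R\SSS^d)=\lambda^{d+1}(R\B^{d+1})$.

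The remaining step is the homogeneity of Lebesgue measure in dimension $d+1$: for any measurable $E\subseteq\R^{d+1}$ and $R>0$ we have $\lambda^{d+1}(RE)=R^{d+1}\lambda^{d+1}(E)$, applied to $E=\B^{d+1}$. Substituting into the ratio gives
\[
\frac{\mu_{\text{unnormalized}}(R\SSS^d)}{\mu_{\text{unnormalized}}(\SSS^d)}=\frac{\lambda^{d+1}(R\B^{d+1})}{\lambda^{d+1}(\B^{d+1})}=\frac{R^{d+1}\lambda^{d+1}(\B^{d+1})}{\lambda^{d+1}(\B^{d+1})}=R^{d+1},
\]
which completes the proof. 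There is no genuine obstacle here; the only point deserving care is checking the set-theoretic identity $\{tx:x\in R\SSS^d,\,t\in[0,1]\}=R\B^{d+1}$, which follows immediately since $R\B^{d+1}=\{ry:y\in\B^{d+1}\}$ and every $y\in\B^{d+1}$ can be written as $t\hat{y}$ with $\hat{y}\in\SSS^d$ and $t=\lVert y\rVert\in[0,1]$, so $Ry=t(R\hat{y})$ with $R\hat{y}\in R\SSS^d$.
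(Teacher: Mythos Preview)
Your proof is correct. The paper does not actually give a proof of this statement: it simply records it as a well-known fact about how the volume of a ball scales with its radius, citing \cite{smith1989unitball}. Your argument, which unwinds the definition of $\mu_{\text{unnormalized}}$ to $\lambda^{d+1}$ of the cone, identifies the cones over $\SSS^d$ and $R\SSS^d$ with $\B^{d+1}$ and $R\B^{d+1}$, and then invokes the degree-$(d{+}1)$ homogeneity of Lebesgue measure, is exactly the elementary verification one would give if asked to justify the fact, and it is complete as written.
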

	\subsection{Proof of the lower bound}\label{sec:lower_bound_proof}
	In this subsection, we prove theorem~\ref{theorem:tight} of the main text. We will follow the proofs of~\cite{levine2020limits,wies2021vocabulary}, with important adjustments to the $\varepsilon$-sequential-separation rank definition.
	
	We begin by showing that high \textit{$\varepsilon$-rank} \cite{alon2013approximate} of the grid tensor matricization implies high \textit{$\varepsilon$-sequential-separation rank} (see section~\ref{sec:2:2} ~of the main text) of the function. Essentially, we apply claim~1 from \cite{levine2020limits} to $\varepsilon$-approximations obtained from the \textit{$\varepsilon$-separation-rank} definition.
	This relation, which holds for all functions, is
	formulated below for functions realized by the analyzed Transformer network:
	
	\begin{lemma}\label{lemma:grid_sep_deep}
		For $y^{(p,i),L,d_x}_\text{\emph{in-context}}$ as defined in theorem~\ref{theorem:upper_bounds} of the main text.
		Let $\varepsilon\textrm{-}\mathrm{seq}\textrm{-}sep\left(y^{(p,i),L,d_x}_\text{\emph{in-context}}\right)$ denote its $\varepsilon$-sequential-separation rank. Then, for any integer $Z$, any $\varepsilon>0$, any set of template vectors $\x^{(1)},\ldots,\x^{(Z)}\in\R^{d_x}$ and any sub-matrix $M$ of $\mat{\A(\mathcal{Z}_{y^{(p,i),L,d_x}_\text{\emph{in-context}}})}_{\aaa,\bb}$ it holds that:
		\begin{equation}
			\varepsilon\textrm{-}\mathrm{seq}\textrm{-}sep\left(y^{(p,i),L,d_x}_\text{\emph{in-context}}\right)\geq \varepsilon\textrm{-}\rank{M},
		\end{equation}
		where $\A(\mathcal{Z}_{y^{(p,i),L,d_x}_\text{\emph{in-context}}})$ is the grid tensor of $\mathcal{Z}_{y^{(p,i),L,d_x}_\text{\emph{in-context}}}$ with
		respect to the above template vectors.
	\end{lemma}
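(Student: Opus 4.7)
The plan is to lift the standard identity "separation rank of a function equals the rank of its grid-tensor matricization" \citep{levine2020limits} into the $\varepsilon$-approximate regime, and then conclude by monotonicity of $\varepsilon$-rank under taking sub-matrices. First, unpack the definition of $\varepsilon$-seq-sep: if $R:=\varepsilon\textrm{-}\mathrm{seq}\textrm{-}\mathrm{sep}(y^{(p,i),L,d_x}_{\text{in-context}})$, then there exists an auxiliary function $\tilde{\mathcal{Z}}:(\R^{d_x})^2\to\R$ with $\mathrm{sep}_{(\aaa,\bb)}(\tilde{\mathcal{Z}})=R$ and $|\tilde{\mathcal{Z}}(\aaa,\bb)-\mathcal{Z}_{y^{(p,i),L,d_x}_{\text{in-context}}}(\aaa,\bb)|\le\varepsilon$ for all $\aaa,\bb$. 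By the definition of separation rank (eq.~\ref{eq:sep}), one can write $\tilde{\mathcal{Z}}(\aaa,\bb)=\sum_{r=1}^{R}g_r(\aaa)\,g'_r(\bb)$ for some scalar functions $g_r,g'_r$.

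Next, evaluate both functions on the grid determined by the chosen template vectors $\x^{(1)},\ldots,\x^{(Z)}$, and matricize the resulting order-$2$ grid tensors with respect to the trivial partition $(\aaa,\bb)$. The matricization $\mat{\A(\tilde{\mathcal{Z}})}_{\aaa,\bb}$ is a $Z\times Z$ matrix whose $(d_1,d_2)$ entry equals $\sum_{r=1}^{R}g_r(\x^{(d_1)})g'_r(\x^{(d_2)})$, which is a sum of $R$ outer products of vectors in $\R^{Z}$. Hence $\mathrm{rank}\big(\mat{\A(\tilde{\mathcal{Z}})}_{\aaa,\bb}\big)\le R$. At the same time, the entry-wise approximation guarantee transfers directly to the grid tensors: every entry of $\mat{\A(\mathcal{Z}_{y^{(p,i),L,d_x}_{\text{in-context}}})}_{\aaa,\bb}-\mat{\A(\tilde{\mathcal{Z}})}_{\aaa,\bb}$ is bounded by $\varepsilon$ in absolute value.

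By the definition of $\varepsilon$-rank from \cite{alon2013approximate}, this exhibits a rank-$R$ matrix that $\varepsilon$-approximates $\mat{\A(\mathcal{Z}_{y^{(p,i),L,d_x}_{\text{in-context}}})}_{\aaa,\bb}$ entry-wise, giving $\varepsilon\textrm{-}\mathrm{rank}\big(\mat{\A(\mathcal{Z}_{y^{(p,i),L,d_x}_{\text{in-context}}})}_{\aaa,\bb}\big)\le R$. Finally, for any sub-matrix $M$ of this matricization, restricting an $\varepsilon$-approximating matrix to the same row/column indices produces an $\varepsilon$-approximation of $M$ with rank no larger than $R$, so $\varepsilon\textrm{-}\mathrm{rank}(M)\le R=\varepsilon\textrm{-}\mathrm{seq}\textrm{-}\mathrm{sep}(y^{(p,i),L,d_x}_{\text{in-context}})$, establishing the claim.

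The proof is essentially a bookkeeping exercise once the $\varepsilon$-seq-sep approximator $\tilde{\mathcal{Z}}$ is produced; the only subtle point is that the sub-matrix step relies on the fact that the approximating matrix constructed in the previous step can itself be restricted without increasing its rank, which is immediate. There is no analytic obstacle here, only the care needed to track the correct ordering of quantifiers (the approximating $\tilde{\mathcal{Z}}$ must be chosen once, independently of the grid, so that the uniform entry-wise bound on the grid tensor follows).
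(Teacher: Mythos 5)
Your argument is correct and is essentially identical to the paper's proof: extract an approximating function $\tilde{\mathcal{Z}}$ of separation rank $R$ from the definition of $\varepsilon$-seq-sep, observe that its grid-tensor matricization has rank at most $R$ (you rederive this from the outer-product sum, the paper cites claim~1 of \cite{levine2020limits}), transfer the entry-wise $\varepsilon$ bound to the grid, and restrict to the sub-matrix. The paper additionally disposes of the trivial $\varepsilon\textrm{-}\mathrm{seq}\textrm{-}\mathrm{sep}=\infty$ case up front, but this is a formality and does not affect the substance of your argument.
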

	\begin{proof}
		If $\varepsilon\textrm{-}\mathrm{seq}\textrm{-}\mathrm{sep}\left(y^{(p,i),L,d_x}_\text{\emph{in-context}}\right)=\infty$ then the inequality is trivially satisfied. Otherwise, assume that $\varepsilon\textrm{-}\mathrm{seq}\textrm{-}\mathrm{sep}\left(y^{(p,i),L,d_x}_\text{\emph{in-context}}\right)=K\in\N$, and let $\tilde{y}$ be an $\varepsilon$-approximation for $\mathcal{Z}_{y^{(p,i),L,d_x}_\text{\emph{in-context}}}$ with $\mathrm{seq}\textrm{-}sep\left(\tilde{y}\right)=K$. By claim~1 of \cite{levine2020limits} we have that $\rank{\mat{\A(\tilde{y})}_{\aaa,\bb}}\leq K$. Denote by $\tilde{M}$  the sub-matrix of $\mat{\A(\tilde{y})}_{\aaa,\bb}$ that corresponds to the rows and columns in $M$.	Now, since $\tilde{y}$ is an $\varepsilon$-approximation for $\mathcal{Z}_{y^{(p,i),L,d_x}}$ we have that $\norm{M-\tilde{M}}_{\infty}\le\varepsilon$.
		Finally, by definition $\varepsilon\textrm{-}\rank{M}\leq\rank{\tilde{M}} \leq\rank{\mat{\A(\tilde{y})}_{\aaa,\bb}}\leq K$. 
	\end{proof}
	
	Relying on lemma~\ref{lemma:grid_sep_deep}, we will bound the $\varepsilon$-sequential-separation rank from below via the $\varepsilon$-rank of sub-matrices of $\mat{\A(\mathcal{Z}_{y^{(p,i),L,d_x}_\text{\emph{in-context}}})}_{\aaa,\bb}$. 
	Denote $d\coloneqq\nicefrac{\left(d_x-H\right)}{2},\lambda\coloneqq3^{L-2}$, lemmas~\ref{lemma:lambda_k_lower_bound},~\ref{lemma:number_of_high_eigenvalues_lower_bound} assure us that for $n\coloneqq\multiset{d}{\lambda}=\Omega\left(2^{L\cdot d_x}\right)$ it is enough to prove that there exists an assignment to the network's weights, as well as choice of templates vectors, for which there exists sub-matrix $M\in\R^{n\times n}$ of $\mat{\A(\mathcal{Z}_{y^{(p,i),L,d_x}_\text{\emph{in-context}}})}_{\aaa,\bb}$ that is symmetric with diagonal entries that equals to $1$ and with $\norm{M}_F\le\left(\sqrt{\left(d+1\right)}\right)n^{\frac{3}{4}}$, in order to show that  $	\varepsilon\textrm{-}\mathrm{seq}\textrm{-}sep\left(y^{(p,i),L,d_x}_\text{\emph{in-context}}\right)\ge \frac{n^{\nicefrac{1}{4}}}{2\sqrt{d+1}}$ for $\varepsilon\le\frac{1}{2n^2}$. 
	
	Now we will use a corollary that is direct results of the proof in~\cite{levine2020limits}. This corollary shows that if $\mathcal{Z}_{y^{(p,i),L,d_x}_\text{\emph{in-context}}}$ is able to produce vectors that do not change the analysis in~\cite{levine2020limits}, then for any matrix $B\in\R^{n\times d}$ with rows that are $l_2$ normalized, there exists an assignment to the networks weights, as well as choice of templates vectors, for which there exists a sub-matrix of the grid tensor matricization that is equal to\footnote{We ignored the constant that appear in eq~28 of~\cite{levine2020limits} since we can get rid of this constant by dividing the last layer output matrices. Importantly, this constant is larger than $1$ and therefore the network weights boundedness assumption is not violated} $M=\left(BB^T\right)^{\odot \lambda}$.
	Importantly $M$ is symmetric. In addition, since the rows of $B$ are $l_2$ normalized, the diagonal entries of $M$ equals to $1$. Therefore, $M$ upholds the assumptions of lemmas~\ref{lemma:lambda_k_lower_bound},~\ref{lemma:number_of_high_eigenvalues_lower_bound},~\ref{lemma:grid_sep_deep} and it is enough to find $B$ for which $\norm{M}_F\le\left(\sqrt{\left(d+1\right)}\right)n^{\frac{3}{4}}$.
	
	\begin{corollary}\label{corollary:sufficient_assigmnet_layer_1}
		Let $d,\lambda>0$, assume that for any matrix $A\in\R^{\multiset{d}{3^{L-2}}\times d}$ with rows that are $l_2$ normalized, there exists a choice of template vectors $\x^{(1)},\ldots,\x^{(Z)}$, an assignment to the embedding layer and the first self-attention layer key and query weights, such that for any $j_1,j_2\in\left[\multiset{d}{3^{L-2}}\right]$ the output of the first self-attention layer on $\left(j_1,j_2+\multiset{d}{3^{L-2}}\right)$ is: $$\y^{(1,j)}=\left(\sum_{h=1}^H W^{O,1,h} W^{V,1,h} \right)\uu$$ for	
		\begin{align*}
			\forall \alpha\in\left[d_x\right]\quad\uu_\alpha=\begin{cases}
				A_{j_1,\phi(\alpha)} & (\alpha-1)\bmod d_{a}<\frac{d_{a}-1}{2}\wedge\phi(\alpha)\le d\\
				A_{j_2,\phi\left(\alpha-\frac{d_{a}-1}{2}\right)} & \frac{d_{a}-1}{2}\leq(\alpha-1)\bmod d_{a}<d_{a}-1\wedge\phi(\alpha-\frac{d_{a}-1}{2})\le d\\
				2N & (\alpha-1)\bmod d_{a}=d_{a}-1\\
				0 & \text{Otherwise}
			\end{cases}
		\end{align*}
		where $\phi(j) \equiv \left\lfloor \nicefrac{j - 1}{d_a} \right\rfloor \cdot (d_a - 1) + (j - 1 \bmod d_a) + 1$.
		
		Then for any matrix $B\in\R^{n\times d}$ with rows that are $l_2$ normalized, there exists an assignment to the networks weights, as well as choice of templates vectors for which there exists sub-matrix of the grid tensor matricization that equal to $M=\left(BB^T\right)^{\odot \lambda}$.
	\end{corollary}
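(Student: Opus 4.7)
The plan is to invoke the corollary's hypothesis with the choice $A := B$, which is valid because $B \in \R^{n \times d}$ has exactly $n = \multiset{d}{3^{L-2}}$ rows that are $l_2$-normalized, matching the required input type. This choice gives an embedding layer, first-layer key/query weights, and templates $\x^{(1)},\ldots,\x^{(Z)}$ such that evaluating the analyzed Transformer at input position pair $(j_1, j_2 + n)$ produces the structured first-layer vector $\y^{(1,j)} = \bigl(\sum_{h} W^{O,1,h} W^{V,1,h}\bigr)\uu$, whose coordinates interleave the rows $B_{j_1}$ and $B_{j_2}$ via the index map $\phi$, together with the constant padding coordinate $2N$ in the last slot of each head block and zeros elsewhere. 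For the sentence-association operation $\mathcal{Z}$, I would evaluate at $\aaa = \bb = \vone$ on the grid, so the auxiliary layer acts as the identity and does not alter this construction.

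Next, I would configure the remaining value/output matrices of the first layer and all matrices of layers $2,\ldots,L$ exactly as in the lower-bound construction of~\cite{levine2020limits}, with the same multi-head partitioning that isolates the $B_{j_1}$-block, the $B_{j_2}$-block, and the $2N$-coordinate in separate sub-strides of width $d_a$. The point of this split is that each subsequent self-attention layer, being a degree-$3$ polynomial in its input (one factor from each of $W^{\textrm{K}}, W^{\textrm{Q}}, W^{\textrm{V}}$), can be arranged to implement one multiplicative step that raises the $\langle B_{j_1}, B_{j_2}\rangle$ inner product structure carried in $\uu$ to the next power of $3$, while the $2N$-coordinate (together with the choice of query weights in the following layer) ensures the softmax-free attention coefficients are normalized correctly and that the relevant sub-block is selected without crosstalk.

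A straightforward induction on $\ell$ then shows that at layer $\ell$, the corresponding entry of the grid tensor carries the Hadamard monomial $\langle B_{j_1}, B_{j_2}\rangle^{3^{\ell-2}}$, up to a fixed scalar that can be absorbed into the final output matrices (this is the step that uses lemmas from~\cite{levine2020limits,wies2021vocabulary} verbatim). After $L$ layers this yields $\langle B_{j_1}, B_{j_2}\rangle^{3^{L-2}}$ in the $(j_1, j_2)$ entry of the relevant submatrix of $\mat{\A(\mathcal{Z}_{y^{(p,i),L,d_x}_\text{in-context}})}_{\aaa,\bb}$, which is exactly $\bigl((BB^T)^{\odot \lambda}\bigr)_{j_1,j_2}$ with $\lambda = 3^{L-2}$.

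The main obstacle is bookkeeping: one must verify that the $\phi$-interleaving, multi-head splitting, and the reserved $2N$-coordinate truly keep the $B_{j_1}$-block and $B_{j_2}$-block separated through all $L-1$ subsequent layers, so that the inductive step produces a pure Hadamard power of $BB^T$ rather than a mixed expression. This is precisely the delicate combinatorial verification carried out in~\cite{levine2020limits}; the adaptation here is checking that the sentence-association layer (eq.~\ref{eq:sent_ass_layer}), when instantiated with the grid vectors $\aaa = \bb = \vone$ used to produce the chosen submatrix, commutes with every step of that construction, so that the original lower-bound machinery transfers unchanged to the sequential-separation-rank setting.
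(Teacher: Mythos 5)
Your high-level plan is sound: instantiate the hypothesis with $A := B$, then wire up the first-layer value/output matrices and layers $2,\ldots,L$ exactly as in the lower-bound construction of \cite{levine2020limits} to implement Hadamard powers of $BB^T$, yielding $\langle B_{j_1}, B_{j_2}\rangle^{3^{L-2}}$ in the $(j_1,j_2)$ entry. This is essentially what the paper does, which invokes the \cite{levine2020limits} machinery verbatim once the hypothesis is in hand.

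However, there is a conceptual gap in how you handle the sentence-association variables $\aaa,\bb$. You state that you ``would evaluate at $\aaa = \bb = \vone$ on the grid, so the auxiliary layer acts as the identity,'' and later that ``the grid vectors $\aaa=\bb=\vone$ [are] used to produce the chosen submatrix.'' This inverts the construction. The matricization $\mat{\A(\mathcal{Z}_y)}_{\aaa,\bb}$ is taken over the variables $\aaa$ and $\bb$: its rows and columns are indexed by different grid values of $\aaa$ and $\bb$, respectively. If you fix $\aaa=\bb=\vone$, the grid tensor collapses to a single evaluation point and the matricization is trivially rank~$1$, which cannot equal $(BB^T)^{\odot\lambda}$. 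The correct move (carried out in Lemma~\ref{lemma:vocab_sequential_assigmnet} of the paper, which establishes the corollary's hypothesis) is the opposite: set the vocabulary embedding matrix to all-ones, so that $E_{M^{\textrm{V}}}(w)\odot\aaa = \aaa$, and encode the rows of $B$ through the \emph{templates for $\aaa,\bb$}. It is precisely these templates, not the word embeddings, that are varied to index the submatrix. Your phrasing conflates the roles that word embeddings play in the original \cite{levine2020limits} analysis (where they are the gridded variables) with the role played here by $\aaa,\bb$ (the new gridded variables introduced by the sentence-association layer). Once you replace ``$\aaa=\bb=\vone$'' with ``vocabulary embeddings set to $\vone$, and $\aaa,\bb$ ranging over the templates supplied by the hypothesis with $A=B$,'' the rest of the argument goes through as you describe.
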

	Now we will shows that indeed $\mathcal{Z}_{y^{(p,i),L,d_x}_\text{\emph{in-context}}}$ is able to produce vectors that do not change the analysis in~\cite{levine2020limits} and the assumptions of corollary~\ref{corollary:sufficient_assigmnet_layer_1} holds.
	\begin{lemma}\label{lemma:vocab_sequential_assigmnet}
		Let $A\in\R^{\multiset{d}{3^{L-2}}\times d}$ with rows that are $l_2$ normalized, then there exists a choice of template vectors $\x^{(1)},\ldots,\x^{(Z)}$, an assignment to the embedding layer and the first self-attention layer key and query weights, such that for any $j_1,j_2\in\left[\multiset{d}{3^{L-2}}\right]$ the output of the first self-attention layer on $\left(j_1,j_2+\multiset{d}{3^{L-2}}\right)$ is: $$\y^{(1,j)}=\left(\sum_{h=1}^H W^{O,1,h} W^{V,1,h} \right)\uu$$ for	
		\begin{align*}
			\forall \alpha\in\left[d_x\right]\quad\uu_\alpha=\begin{cases}
				A_{j_1,\phi(\alpha)} & (\alpha-1)\bmod d_{a}<\frac{d_{a}-1}{2}\wedge\phi(\alpha)\le d\\
				A_{j_2,\phi\left(\alpha-\frac{d_{a}-1}{2}\right)} & \frac{d_{a}-1}{2}\leq(\alpha-1)\bmod d_{a}<d_{a}-1\wedge\phi(\alpha-\frac{d_{a}-1}{2})\le d\\
				2N & (\alpha-1)\bmod d_{a}=d_{a}-1\\
				0 & \text{Otherwise}
			\end{cases}
		\end{align*}
		where $\phi(j) \equiv \left\lfloor \nicefrac{j - 1}{d_a} \right\rfloor \cdot (d_a - 1) + (j - 1 \bmod d_a) + 1$.
	\end{lemma}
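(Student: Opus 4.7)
The plan is to construct explicit template vectors, an embedding matrix $M^{\mathrm{V}}$, and first-layer Key/Query weights, and then verify by direct computation that the resulting first-layer output matches the claimed form. The structural observation is that $\uu$ decomposes, within each head-block of size $d_a$, into four kinds of slots: an $\aaa$-data region (the first $(d_a-1)/2$ entries), a $\bb$-data region (the next $(d_a-1)/2$ entries), a normalization slot (entry $d_a-1$), and zero-padding everywhere else. I will assign disjoint ``channels'' to each kind, exploiting that the sentence-association layer multiplies $S_1$-embeddings by $\aaa$ and $S_2$-embeddings by $\bb$ entrywise.

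First I would pick two distinct vocabulary tokens $w_1 \ne w_2$ and take $S_1, S_2$ to be $N$ copies of $w_1, w_2$ respectively. I would then define $\e_s := E_{M^{\mathrm{V}}}(w_s)$ so that $\e_1$ equals $1/N$ on every $\aaa$-slot, $0$ on every $\bb$-slot, and $1$ on every normalization slot, with $\e_2$ being the mirror image. The template vectors encode the rows of $A$: for $j \in [n]$ I would set $\x^{(j)}_\alpha = A_{j, \phi(\alpha)}$ on the $\aaa$-slots satisfying $\phi(\alpha) \le d$, $\x^{(j)}_\alpha = 1$ on every normalization slot, and $0$ elsewhere; $\x^{(j+n)}$ is defined symmetrically using the $\bb$-slots and row $A_{j,\cdot}$.

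To trivialize the attention, I would choose $W^{\mathrm{Q},1,h} = W^{\mathrm{K},1,h}$ to be the matrix whose only nonzero entry reads off a single normalization slot. By construction, every sentence-associated embedding carries the value $1$ at that slot regardless of $\aaa,\bb$, so $W^{\mathrm{Q}}\x^{i}_{\mathrm{sent}}$ and $W^{\mathrm{K}}\x^{j}_{\mathrm{sent}}$ both equal the same fixed unit vector and $a^{i}_{hj} \equiv 1$ for all $i,j,h$ and all templates. The first-layer output then collapses to $\bigl(\sum_h W^{\mathrm{O},1,h} W^{\mathrm{V},1,h}\bigr)\bigl(N(\e_1 \odot \aaa)+N(\e_2 \odot \bb)\bigr)$, and a slot-by-slot check matches this inner vector to $\uu$: $A_{j_1,\phi(\alpha)}$ on $\aaa$-slots, $A_{j_2,\phi(\alpha-(d_a-1)/2)}$ on $\bb$-slots, $N\cdot 1 + N\cdot 1 = 2N$ at every normalization slot, and $0$ elsewhere.

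The main obstacle is essentially bookkeeping: carefully tracking the three-way head-block partition so that the embedding, template, and attention-trivialization channels remain pairwise non-overlapping on $\R^{d_x}$, and confirming that the resulting assignment of weights and templates is compatible with the hypotheses of corollary~\ref{corollary:sufficient_assigmnet_layer_1} and with the weight-boundedness assumptions inherited from the outer argument. Once the construction is in place, the attention-coefficient identity $a_{hj}^i\equiv 1$ reduces the first-layer formula in eq.~\ref{eq:our_layer} to a direct entrywise computation.
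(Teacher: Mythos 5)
Your construction is correct and follows the same high-level strategy as the paper's proof: trivialize the attention weights by dedicating a normalization slot so that $a^i_{hj}\equiv 1$, pack the rows of $A$ into disjoint $\aaa$- and $\bb$-channels of $\R^{d_x}$, and let the resulting linear first layer produce $\sum_{j}\g^{j,0}=N(\text{$S_1$-contribution})+N(\text{$S_2$-contribution})$, which matches $\uu$ slot by slot. The place where you diverge from the paper is the division of labor between the vocabulary embedding and the template vectors. The paper sets $M^{\textrm{V}}_{i,j}=1$ for all $i,j$, so that $E_{M^{\textrm{V}}}(w)\odot\aaa=\aaa$ for \emph{every} word $w$; the $\nicefrac{1}{N}$ normalization and the masking-to-zero of the opposite channel are then both put into the template vectors $\x^{(i)}$. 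You instead make the vocabulary embeddings nontrivial ($\e_1$ and $\e_2$ carry the $\nicefrac{1}{N}$ factor and zero out the opposite channel), and put the raw $A$-entries into the templates. Both work, but the paper's version buys two things you trade away: first, it is sentence-agnostic — it makes no assumption about $S_1,S_2$, whereas your construction forces $S_1$ and $S_2$ to be $N$-fold repeats of two distinct tokens and would break if the given sentences shared a vocabulary item, since a shared word $w$ cannot have $E_{M^{\textrm{V}}}(w)=\e_1$ and $E_{M^{\textrm{V}}}(w)=\e_2$ simultaneously; second, all entries of the paper's $M^{\textrm{V}}$ equal $1$, which sits comfortably inside the $\Lambda_{\min}>0$ boundedness regime used in the surrounding argument, whereas your $\e_1,\e_2$ have zero entries. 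For the existential lower-bound claim these are harmless if one is also free to choose the sentences, but the paper's choice of the all-ones embedding matrix removes that dependence altogether and is the cleaner route.
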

	\begin{proof}
		We will ignore $\mathcal{Z}_{y^{(p,i),L,d_x}_\text{\emph{in-context}}}$'s element-wise multiplication with vocabulary embedding matrix by choosing $\forall i,j\,M^{\textrm{V}}_{i,j}=1$ (by the terms of corollary~\ref{corollary:sufficient_assigmnet_layer_1} it suffices to find any assignment of the learned weights).
		
		For any $i\in\left[2\multiset{d}{3^{L-2}}+1\right]$ our templates vectors will be: 
		\begin{align*}
			x_{j}^{(i)}=\frac{1}{N}\begin{cases}
				A_{i,\phi(j)} & \substack{i\leq\multiset{d}{3^{L-2}}\\
					\wedge(j-1)\bmod d_{a}<\frac{d_{a}-1}{2}\wedge\phi(\alpha)\le d
				}
				\\
				A_{i-\multiset{d}{3^{L-2}}+1,\phi\left(j-\frac{d_{a}-1}{2}\right)} & \substack{\multiset{d}{3^{L-2}}<i\leq2\multiset{d}{3^{L-2}}\\
					\wedge\frac{d_{a}-1}{2}\leq(j-1)\bmod d_{a}<d_{a}-1\wedge\phi(\alpha-\frac{d_{a}-1}{2})\le d
				}
				\\
				N & (j-1)\bmod d_{a}=d_{a}-1\\
				0 & \text{Otherwise}
			\end{cases}
		\end{align*}
		
		We will implement summation of the inputs embedding in the first self-attention layer, we will follow \cite{levine2020limits} and set the first layer self-attention key and query weights to:
		\begin{align*}		
			W_{i,j}^{K,1,h}&=W_{i,j}^{Q,1,h}=1_{i=1 \wedge j=d_{a}}
		\end{align*}
		This assignment implements summation of the inputs embedding in the first self-attention layer since:
		\begin{align}
			\y^{(1,i)}(\x^{(d_{1})},\ldots,\x^{(d_{2})})_{\alpha}&=\sum_{j=1}^{N}\sum_{h=1}^{H}\left\langle W^{Q,1,h}\left(M_{\textrm{V}}w_{1}^{j}\odot\x^{(d_{1})}\right),W^{K,1,h}\left(M_{\textrm{V}}w_{1}^{j}\odot\x^{(d_{1})}\right)\right\rangle \\&W^{O,1,h}W^{V,1,h}\left(M_{\textrm{V}}w_{1}^{j}\odot\x^{(d_{1})}\right)\\&+\sum_{j=1}^{N}\sum_{h=1}^{H}\left\langle W^{Q,1,h}\left(M_{\textrm{V}}w_{2}^{j}\odot\x^{(d_{2})}\right),W^{K,1,h}\left(M_{\textrm{V}}w_{2}^{j}\odot\x^{(d_{2})}\right)\right\rangle \\&W^{O,1,h}W^{V,1,h}\left(M_{\textrm{V}}w_{2}^{j}\odot\x^{(d_{2})}\right)\\&\overset{1}{=}\sum_{j=1}^{N}\sum_{h=1}^{H}\overbrace{\x^{(d_{1})}_{d_{a}}}^{=1}\cdot\overbrace{\x^{(d_{1})}_{d_{a}}}^{=1}W^{O,1,h}W^{V,1,h}\x^{(d_{1})}\\&+\sum_{j=1}^{N}\sum_{h=1}^{H}\overbrace{\x^{(d_{2})}_{d_{a}}}^{=1}\cdot\overbrace{\x^{(d_{2})}_{d_{a}}}^{=1}W^{O,1,h}W^{V,1,h}\x^{(d_{2})}\\&\overset{2}{=}\left(\sum_{h=1}^{H}W^{O,1,h}W^{V,1,h}\right)N\left(\x^{(d_{1})}+\x^{(d_{2})}\right)
		\end{align}
		where $(1)$ is because $W^{Q,1,h} = W^{K,1,h}$ are matrices that are zero everywhere except for entry $(1,d_a)$ and that all the entries in the vocabulary embedding matrix equals to $1$, and $(2)$ because of linearity.	
		Therefore, for any $j_1,j_2\in\left[\multiset{d}{3^{L-2}}\right]$ the output of the first self-attention layer on $j_1,j_2$ is:
		\begin{align}
			\label{equation:summation_assigmnet_first_layer_output}
			\y^{(1,j)}=\left(\sum_{h=1}^{H}W^{O,1,h}W^{V,1,h}\right)\underbrace{N\left(\x^{(d_{1})}+\x^{(d_{2})}\right)}_{\eqqcolon\uu}
		\end{align}
		
		Finally, we need to show that indeed for any $j_1,j_2\in\left[\multiset{d}{3^{L-2}}\right]$ eq~\ref{equation:summation_assigmnet_first_layer_output} give the desired $\uu$:
		\begin{align*}
			\forall\alpha\in\left[d_{x}\right]\quad\uu_{\alpha}=\begin{cases}
				A_{j_{1},\phi(\alpha)} & (\alpha-1)\bmod d_{a}<\frac{d_{a}-1}{2}\wedge\phi(\alpha)\le d\\
				A_{j_{2},\phi\left(\alpha-\frac{d_{a}-1}{2}\right)} & \frac{d_{a}-1}{2}\leq(\alpha-1)\bmod d_{a}<d_{a}-1\wedge\phi(\alpha-\frac{d_{a}-1}{2})\le d\\
				2N & (\alpha-1)\bmod d_{a}=d_{a}-1\\
				0 & \text{Otherwise}
			\end{cases}
		\end{align*}
		The third and forth cases are clear from $\x's$ definition, so it remain to prove the first and second cases. For this we will examine $d_1,d_2$.		
		$d_1=j_1\leq \multiset{d}{3^{L-2}}$ and therefore:
		\begin{align*}
			\x_{\alpha}^{(d_{1})}=\begin{cases}
				A_{j_{1},\phi(\alpha)} & (\alpha-1)\bmod d_{a}<\frac{d_{a}-1}{2}\wedge\phi(\alpha)\le d\\
				0 & \frac{d_{a}-1}{2}\leq(\alpha-1)\bmod d_{a}<d_{a}-1\wedge\phi(\alpha-\frac{d_{a}-1}{2})\le d
			\end{cases}
		\end{align*}
		$d_2=j_2+\multiset{d}{3^{L-2}}\in\left[1+\multiset{d}{3^{L-2}},2\multiset{d}{3^{L-2}}\right] $ and therefore:
		\begin{align*}
			\x_{\alpha}^{(d_{2})}=\begin{cases}
				0 & (\alpha-1)\bmod d_{a}<\frac{d_{a}-1}{2}\wedge\phi(\alpha)\le d\\
				A_{j_{2},\phi\left(\alpha-\frac{d_{a}-1}{2}\right)} & \frac{d_{a}-1}{2}\leq(\alpha-1)\bmod d_{a}<d_{a}-1\wedge\phi(\alpha-\frac{d_{a}-1}{2})\le d
			\end{cases}
		\end{align*}
		So it clear that also the first and second cases upholds.
	\end{proof}
	
	Returning to finding $B$ for which $\norm{M}_F\le\left(\sqrt{\left(d+1\right)}\right)n^{\frac{3}{4}}$, we will use the probabilistic method for proving the existence of such $B$, \ie we will show that for random $B$ the expectation of $\norm{M}_F\le\left(\sqrt{\left(d+1\right)}\right)n^{\frac{3}{4}}$ and therefore in particular there exists such $B$.
	\begin{lemma}\label{lemma:expectations_of_frobenius_norm}
		For any $d,\lambda\in\N$ such that $\lambda\ge d$ the following holds:
		\begin{equation}
			\EE_{B\sim\left(\SSS^d\right)^n}\left[\norm{\left(BB^T\right)^{\odot \lambda}}_F\right]\le\sqrt{\left(d+1\right)}\multiset{d}{\lambda}^{\frac{3}{4}}
		\end{equation}
	\end{lemma}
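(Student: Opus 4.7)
The plan is to pass to the second moment of the Frobenius norm via Jensen's inequality and evaluate it in closed form using the spherical volume tools of the preliminaries. By concavity of $\sqrt{\cdot}$ it suffices to show $\EE\bigl[\norm{(BB^T)^{\odot\lambda}}_F^2\bigr]\le (d+1)\multiset{d}{\lambda}^{3/2}$. Expanding, $\norm{(BB^T)^{\odot\lambda}}_F^2=\sum_{i,j=1}^{n}\langle b_i,b_j\rangle^{2\lambda}$; the $n$ diagonal entries contribute $n$ (unit vectors), and independence and identical distribution of the rows reduce the off-diagonal contribution to $n(n-1)\alpha$, where $\alpha:=\EE_{b_1,b_2\sim\SSS^d}[\langle b_1,b_2\rangle^{2\lambda}]$. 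The task thus reduces to computing $\alpha$ and matching it against $n=\multiset{d}{\lambda}$.

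To compute $\alpha$, I would first use orthogonal invariance of the uniform measure on $\SSS^d$ to freeze $b_1=e_1$, so that $\alpha=\EE_{b\sim\SSS^d}[b_1^{2\lambda}]$. Slicing $\B^{d+1}$ orthogonally to the first axis gives slices $\sqrt{1-c^2}\B^d$ whose $d$-dimensional Lebesgue volume is $(1-c^2)^{d/2}\mu_{\mathrm{unnormalized}}(\SSS^{d-1})$ by Fact~\ref{fact:high_dimensional_balls_voulume_ratio}. Converting the sphere integral to a ball integral via polar coordinates applied to the homogeneous polynomial $x_1^{2\lambda}$, and then either integrating by parts once or iterating Lemma~\ref{lemma:high_dimensional_unit_balls_voulume_ratio} $\lambda$ times (each step raising the ambient dimension by two), yields the recurrence $\EE[X^{2\lambda}]=\frac{2\lambda-1}{d+2\lambda-1}\EE[X^{2\lambda-2}]$ and hence the closed form
\[
\alpha=\prod_{k=1}^{\lambda}\frac{2k-1}{d+2k-1}.
\]

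With this in hand, writing $\multiset{d}{\lambda}=\prod_{k=1}^{\lambda}\frac{d+k-1}{k}$, the target reduces to the combinatorial inequality $\sqrt{\multiset{d}{\lambda}}\cdot\alpha\le d$, because that estimate gives $n(n-1)\alpha\le n\cdot d\sqrt{n}=d\,n^{3/2}$, whence $\EE\bigl[\norm{(BB^T)^{\odot\lambda}}_F^2\bigr]\le n+d\,n^{3/2}\le(d+1)n^{3/2}$ for $n\ge 1$, and Jensen delivers the claim.

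The main obstacle is this combinatorial inequality. The per-index factor $\frac{(2k-1)\sqrt{(d+k-1)/k}}{d+2k-1}$ is not monotone in $k$ and can exceed $1$ for intermediate $k$, so a term-by-term argument will not work. My plan is to split the product at $k\sim d$: the range $k\le d$ supplies the dominant decay through $\frac{d+k-1}{d+2k-1}$, while the complementary range $k>d$---which is nonempty precisely because of the standing hypothesis $\lambda\ge d$---provides additional slack since $\frac{2k-1}{d+2k-1}$ is then uniformly bounded away from $1$. Matching the two regimes via standard Stirling estimates on $(2\lambda-1)!!$ and $\binom{d+\lambda-1}{\lambda}$ should close the inequality with room to spare.
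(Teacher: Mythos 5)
Your overall reduction coincides with the paper's: pass to the second moment by Jensen, expand the squared Frobenius norm as $\sum_{i,j}\langle b_i,b_j\rangle^{2\lambda}$, and use rotation invariance to turn the off-diagonal expectation into $\alpha := \EE_{u\sim\SSS^d}[u_1^{2\lambda}]$. Your closed form $\alpha=\prod_{k=1}^{\lambda}\frac{2k-1}{d+2k-1}$ is correct (e.g.\ $d=2$ telescopes to $\frac{1}{2\lambda+1}$). You are in fact more careful than the paper about the diagonal: the paper simply writes the second moment as $n^2\,\EE_{u,v\sim\SSS^d}[\langle u,v\rangle^{2\lambda}]$, forgetting that the $n$ diagonal terms equal $1$ rather than $\alpha$; your decomposition $n+n(n-1)\alpha$ is the honest one, which is why you need the slightly sharper requirement $\alpha\sqrt{\multiset{d}{\lambda}}\le d$ rather than $\le d+1$.

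The genuine gap is exactly that inequality. You state $\sqrt{\multiset{d}{\lambda}}\cdot\alpha\le d$ and then offer only a plan (``split the product at $k\sim d$ \ldots Stirling estimates \ldots should close the inequality''), but this is the mathematical crux of the lemma, not a routine cleanup step: the per-factor ratio $\frac{(2k-1)\sqrt{(d+k-1)/k}}{d+2k-1}$ is not monotone, the two ranges interact multiplicatively, and the argument is a priori delicate because the product of all factors can be made arbitrarily close to $1$ if $\lambda$ were allowed to be small. A helpful observation you could add is that $\alpha(\lambda)\sqrt{\multiset{d}{\lambda}}$ is \emph{strictly decreasing in $\lambda$}: a one-line computation shows $\bigl(\tfrac{2\lambda+1}{d+2\lambda+1}\bigr)^2\tfrac{\lambda+d}{\lambda+1}<1$, so the maximum over the admissible range is attained at $\lambda=d$, and it then suffices to check a single explicit quantity $\prod_{k=1}^{d}\tfrac{2k-1}{d+2k-1}\cdot\sqrt{\binom{2d-1}{d}}\le d$. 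That makes the Stirling step cleaner, but you still have to do it; as written the proof is incomplete.

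For comparison, the paper does not go through the exact product at all. It instead bounds the integrand of $\alpha=c_d\int_{-1}^{1}(1-x^2)^{d/2}x^{2\lambda}\,dx$ pointwise: a separate lemma shows $(1-x^2)^{d/2}x^{2\lambda}\le\multiset{d}{\lambda}^{-1/2}$ for all $x\in[0,1]$ (and $\lambda\ge d$), and a spherical-volume ratio bound supplies the prefactor, giving $\alpha\le(d+1)\multiset{d}{\lambda}^{-1/2}$ without any Stirling bookkeeping. If you want to repair your proof without developing the asymptotics, mimicking this max-of-integrand estimate (or equivalently bounding $\prod_k\frac{2k-1}{d+2k-1}$ by the value of $(1-x^2)^{d/2}x^{2\lambda}$ at its critical point $x^2=\frac{2\lambda}{2\lambda+d}$) is probably the shortest path; just note that to keep your tidier diagonal accounting you should aim for the constant $d$ rather than $d+1$, or relax your final inequality to match.
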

	\begin{proof}
		We start by bounding the expectation of the squared norm:
		\begin{align}
			\EE_{B\sim\left(\SSS^{d}\right)^{n}}\left[\norm{\left(BB^{T}\right)^{\odot\lambda}}_{F}^{2}\right]&=\sum_{i,j=1}^{n}\EE_{B\sim\left(\SSS^{d}\right)^{n}}\left[\left[\left(BB^{T}\right)^{\odot2\lambda}\right]_{i,j}\right]\\&=\sum_{i,j=1}^{n}\EE_{u,v\sim\SSS^d}\left[{\langle u, v\rangle}^{2\lambda}\right]=n^2\EE_{u,v\sim\SSS^d}\left[{\langle u, v\rangle}^{2\lambda}\right]
		\end{align}
		Now, $\EE_{u,v\sim\SSS^d}\left[{\langle u,v\rangle}^{2\lambda}\right]\le\left(d+1\right)\multiset{d}{\lambda}^{-\frac{1}{2}}$ by lemma~\ref{lemma:expectations_of_cosine_similarity_powers_upper_bound} and therefore we got that:
		\begin{align}
			\EE_{B\sim\left(\SSS^{d}\right)^{n}}\left[\norm{\left(BB^{T}\right)^{\odot\lambda}}_{F}^{2}\right]\le\left(d+1\right)\multiset{d}{\lambda}^{\frac{3}{2}}
		\end{align}
		Finally, by Jensen inequality we have that:
		\begin{align}
			\EE_{B\sim\left(\SSS^d\right)^n}\left[\norm{\left(BB^T\right)^{\odot \lambda}}_F\right]\le\sqrt{\EE_{B\sim\left(\SSS^d\right)^n}\left[\norm{\left(BB^T\right)^{\odot \lambda}}_F^2\right]}\le\sqrt{\left(d+1\right)}\multiset{d}{\lambda}^{\frac{3}{4}}
		\end{align}
	\end{proof}	

	\subsection{Technical lemmas}\label{sec:technical_lemmas}
	\begin{lemma}\label{lemma:expectations_of_cosine_similarity_powers_upper_bound}
		For any $d,\lambda\in\N$ such that $\lambda\ge d$ the following holds:
		\begin{equation}
		\EE_{u,v\sim\SSS^d}\left[{\langle u, v\rangle}^{2\lambda}\right]\le\left(d+1\right)\multiset{d}{\lambda}^{-\frac{1}{2}}
		\end{equation}
	\end{lemma}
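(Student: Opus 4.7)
Plan. My approach is to compute $\EE_{u,v\sim\SSS^d}[\langle u,v\rangle^{2\lambda}]$ in closed form and then match it against $\multiset{d}{\lambda}^{-1/2}$ by a term-by-term comparison. By the rotational invariance of the uniform distribution on $\SSS^d$, I first condition on $v$ and rotate coordinates so that $v=e_1$, reducing the target to $\EE_{u\sim\SSS^d}[u_1^{2\lambda}]$. The standard slicing of the surface measure on $\SSS^d$, $d\sigma=(1-t^2)^{(d-2)/2}\,dt\cdot\sigma(\SSS^{d-1})$ (equivalently, the fact that $u_1^{2}\sim\mathrm{Beta}(\tfrac{1}{2},\tfrac{d}{2})$), then yields the convenient product form
$$\EE_{u\sim\SSS^d}\bigl[u_1^{2\lambda}\bigr]=\prod_{i=0}^{\lambda-1}\frac{2i+1}{d+2i+1}.$$

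Next I exploit the complementary product identity $\multiset{d}{\lambda}=\binom{d+\lambda-1}{\lambda}=\prod_{i=0}^{\lambda-1}\frac{d+i}{i+1}$ to recast the desired inequality as
$$\prod_{i=0}^{\lambda-1}\frac{(2i+1)\sqrt{d+i}}{(d+2i+1)\sqrt{i+1}}\le d+1.$$
I plan to prove the stronger statement that every individual factor on the left is already at most $1$, leaving the $(d+1)$ as slack. Squaring, the $i$-th factor is $\le 1$ exactly when $(2i+1)^2(d+i)\le(d+2i+1)^2(i+1)$; after expansion, the difference right-minus-left collapses to the manifestly nonnegative polynomial
$$(i+1)d^2+(2i+1)d+(2i+1)^2\ge 0,$$
which is evident for all $d,i\ge 0$. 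Multiplying these inequalities over $i$ and taking square roots completes the proof.

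The main difficulty is spotting the correct term-by-term decomposition; once located, the algebra is routine. I note that the route above uses neither the factor $(d+1)$ nor the hypothesis $\lambda\ge d$, suggesting the authors may have instead taken a sphere-volume route based on the identity $\EE[\langle u,v\rangle^{2\lambda}]=V_{d+2\lambda-1}/(\pi V_{d-1}V_{2\lambda-1})$ (derived by converting the ratio $\Gamma(\tfrac{d+1}{2})/\Gamma(\lambda+\tfrac{d+1}{2})$ to volume ratios) and telescoping with Lemma~\ref{lemma:high_dimensional_unit_balls_voulume_ratio}. In that alternative the assumption $\lambda\ge d$ would arise naturally from the need to keep intermediate dimensions above the threshold $d>5$ required for the monotonicity estimate of \cite{smith1989unitball}.
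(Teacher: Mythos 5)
Your proof is correct, and it takes a genuinely different route from the one in the paper. You compute the exact moment $\EE_{u\sim\SSS^d}[u_1^{2\lambda}]=\prod_{i=0}^{\lambda-1}\frac{2i+1}{d+2i+1}$ via the Beta$\left(\tfrac12,\tfrac{d}{2}\right)$ distribution of $u_1^2$, pair it against the product form $\multiset{d}{\lambda}=\prod_{i=0}^{\lambda-1}\frac{d+i}{i+1}$, and reduce to the per-factor inequality $(2i+1)^2(d+i)\le(d+2i+1)^2(i+1)$, whose slack is exactly $(i+1)d^2+(2i+1)d+(2i+1)^2\ge 0$; I checked this algebra and it is right. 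The paper instead keeps the slicing integral explicit, bounds the measure ratio $\mu_{\text{unnormalized}}(\sqrt{1-x^2}\,\SSS^{d-1})/\mu_{\text{unnormalized}}(\SSS^d)$ via Lemma~\ref{lemma:high_dimensional_unit_balls_voulume_ratio} (which is where the $d>5$ monotonicity result of \cite{smith1989unitball} enters) and Fact~\ref{fact:high_dimensional_balls_voulume_ratio}, then bounds the integrand $x^{2\lambda}(1-x^2)^{d/2}$ pointwise by $\multiset{d}{\lambda}^{-1/2}$ via a critical-point calculation in Lemma~\ref{lemma:intagrad_upper_bound} (which is where the hypothesis $\lambda\ge d$ is actually used). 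Your guess about the paper's route is therefore essentially right. What your approach buys: it is an exact identity followed by a term-by-term squeeze, it needs neither $\lambda\ge d$ nor $d>5$, and it in fact proves the stronger bound $\EE\left[\langle u,v\rangle^{2\lambda}\right]\le\multiset{d}{\lambda}^{-\nicefrac{1}{2}}$ without the $(d+1)$ prefactor, since you show each factor in the product is already $\le 1$. What the paper's approach buys is that it reuses the geometric sphere-volume lemmas that are set up as general preliminaries anyway, at the cost of a looser constant and the two extra hypotheses.
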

	\begin{proof}
		We will use conditional expectation to make reduction for simpler expectation. From rotational invariance of the uniform measure on $\SSS^d$ we know that for every rotation matrix $R\in SO\left(d\right)$ and unit vector $v\in\SSS^d$ we have that:
		\begin{equation}
		\EE_{u\sim\SSS^d}\left[{\langle u, v\rangle}^{2\lambda}|v\right]=\EE_{u\sim\SSS^d}\left[{\langle Ru, Rv\rangle}^{2\lambda}|v\right]=\EE_{u\sim\SSS^d}\left[{\langle u, Rv\rangle}^{2\lambda}|v\right]
		\end{equation}
		where the first equality holds because $R$ is orthogonal. Therefore, by choosing $R$ such that $Rv=e_1$ we will get that:
		\begin{align}
		\EE_{u,v\sim\SSS^{d}}\left[{\langle u,v\rangle}^{2\lambda}\right]&=\EE_{v\sim\SSS^{d}}\left[\EE_{u\sim\SSS^{d}}\left[{\langle u,v\rangle}^{2\lambda}|v\right]\right]\\&=\EE_{v\sim\SSS^{d}}\left[\EE_{u\sim\SSS^{d}}\left[{u_{1}}^{2\lambda}|v\right]\right]=\EE_{u\sim\SSS^{d}}\left[u_{1}^{2\lambda}\right]
		\end{align}
		Now we can calculate the last expectation directly:
		\begin{align}
		\EE_{u\sim\SSS^{d}}\left[u_{1}^{2\lambda}\right]=\int_{u\in\SSS^{d}}&u_{1}^{2\lambda}d\mu\left(u\right)=\int_{-1}^{1}\frac{\mu_{\text{unnormalized}}\left(\left(\sqrt{1-x^{2}}\right)\SSS^{d-1}\right)}{\mu_{\text{unnormalized}}\left(\SSS^{d}\right)}x^{2\lambda}dx
		\end{align}
		Now, by lemma~\ref{lemma:high_dimensional_unit_balls_voulume_ratio} and fact~\ref{fact:high_dimensional_balls_voulume_ratio} we have that:
		\begin{align}
		0<\frac{\mu_{\text{unnormalized}}\left(\left(\sqrt{1-x^{2}}\right)\SSS^{d-1}\right)}{\mu_{\text{unnormalized}}\left(\SSS^{d}\right)}\le\frac{d+1}{2\pi}\left(1-x^{2}\right)^{\frac{d}{2}}
		\end{align}
		Therefore we have that:
		\begin{align}
		\EE_{u,v\sim\SSS^{d}}\left[{\langle u,v\rangle}^{2\lambda}\right]\le\frac{d+1}{2}\int_{-1}^{1}\left(1-x^{2}\right)^{\frac{d}{2}}x^{2\lambda}dx=\left(d+1\right)\int_{0}^{1}\left(1-x^{2}\right)^{\frac{d}{2}}x^{2\lambda}dx
		\end{align}
		Finally, by lemma~\ref{lemma:intagrad_upper_bound} each term in the integral is upper bounded by $\multiset{d}{\lambda}^{-\frac{1}{2}}$ and thus:
		\begin{align}
		\EE_{u,v\sim\SSS^{d}}\left[{\langle u,v\rangle}^{2\lambda}\right]\le\left(d+1\right)\multiset{d}{\lambda}^{-\frac{1}{2}}
		\end{align}
	\end{proof}	
	\begin{lemma}\label{lemma:intagrad_upper_bound}
		For any $x\in\left[0,1\right]$ and $d,\lambda\in\N$ such that $\lambda\ge d$ the following holds:
		\begin{equation}
		x^{2\lambda}\left(1-x^2\right)^{\frac{d}{2}}\le\multiset{d}{\lambda}^{-\frac{1}{2}}
		\end{equation}
	\end{lemma}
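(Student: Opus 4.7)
The right-hand side $\multiset{d}{\lambda}^{-1/2}$ equals $\binom{d+\lambda-1}{\lambda}^{-1/2}$, so squaring and substituting $y = x^2 \in [0,1]$ reduces the statement to the cleaner claim
\[
\max_{y \in [0,1]} y^{\lambda}(1-y)^{d/2} \;\le\; \binom{d+\lambda-1}{\lambda}^{-1/2}.
\]
I will attack it by (i) locating the maximum of $g(y) := y^{\lambda}(1-y)^{d/2}$ in closed form, and (ii) bounding the resulting expression by a suitable binomial-coefficient inverse.

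\textbf{Step 1: maximize $g$.} Differentiating, $g'(y) = y^{\lambda-1}(1-y)^{d/2-1}\bigl[\lambda(1-y) - \tfrac{d}{2}y\bigr]$, which vanishes on $(0,1)$ at $y^* = \tfrac{2\lambda}{2\lambda+d}$. A sign check shows this is the global maximum on $[0,1]$, with
\[
g(y^*) \;=\; \left(\tfrac{2\lambda}{2\lambda+d}\right)^{\lambda} \left(\tfrac{d}{2\lambda+d}\right)^{d/2}.
\]
So it suffices to prove $\left(\tfrac{2\lambda}{2\lambda+d}\right)^{2\lambda} \left(\tfrac{d}{2\lambda+d}\right)^{d} \le \binom{d+\lambda-1}{\lambda}^{-1}$.

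\textbf{Step 2: two-step binomial bound.} I will chain through $\binom{2\lambda+d}{2\lambda}$. First, the standard inequality $\binom{n}{k} \le \dfrac{n^n}{k^k (n-k)^{n-k}}$ (immediate from the binomial identity $1 = \sum_j \binom{n}{j}(k/n)^j((n-k)/n)^{n-j} \ge \binom{n}{k}(k/n)^k((n-k)/n)^{n-k}$) applied with $n = 2\lambda + d$, $k = 2\lambda$ gives exactly
\[
\left(\tfrac{2\lambda}{2\lambda+d}\right)^{2\lambda} \left(\tfrac{d}{2\lambda+d}\right)^{d} \;\le\; \binom{2\lambda+d}{2\lambda}^{-1}.
\]
Second, I will verify $\binom{2\lambda+d}{2\lambda} \ge \binom{d+\lambda-1}{\lambda}$ by computing the ratio explicitly,
\[
\frac{\binom{2\lambda+d}{2\lambda}}{\binom{d+\lambda-1}{\lambda}} \;=\; \frac{d+\lambda}{d} \prod_{i=1}^{\lambda} \frac{d+\lambda+i}{\lambda+i},
\]
where every factor is $\ge 1$ for $d,\lambda \ge 1$ (so the hypothesis $\lambda \ge d$ is actually more than we need here). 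Combining these two inequalities closes the argument.

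\textbf{Anticipated obstacles.} The calculus step is routine. The only mildly delicate piece is the ratio identity in the second inequality, where one must be careful counting factors in the numerator and denominator after cancellation; I expect no genuine difficulty, just bookkeeping. No case analysis on $x$ is required since the critical point $y^*$ lies in the interior of $[0,1]$ and the boundary values of $g$ are zero.
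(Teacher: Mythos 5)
Your proof is correct, and it diverges from the paper's after the shared calculus step. Both proofs locate the same interior critical point $x^2 = \tfrac{2\lambda}{2\lambda+d}$ and reduce to bounding $\left(\tfrac{2\lambda}{2\lambda+d}\right)^{2\lambda}\left(\tfrac{d}{2\lambda+d}\right)^{d}$. From there the paper proceeds by a chain of crude estimates: it uses $\lambda\ge d$ to replace $2\lambda+d$ by $3\lambda$ and by $1.5(\lambda+d)$, invokes $(1-t)^\lambda \le e^{-\lambda t}$, checks numerically that $e^{1/6}/\sqrt{1.5}\le 1$, and finally applies the standard bound $\binom{n}{k}\le (en/k)^k$. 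Your route instead applies the exact inequality $\binom{n}{k}(k/n)^k((n-k)/n)^{n-k}\le 1$ with $n=2\lambda+d$, $k=2\lambda$, obtaining $\binom{2\lambda+d}{2\lambda}^{-1}$ in one step, and then shows $\binom{2\lambda+d}{2\lambda}\ge\binom{d+\lambda-1}{\lambda}=\multiset{d}{\lambda}$ via the factorwise-$\ge 1$ ratio identity (which I verified is correct after cancellation). This buys two things the paper's argument does not: it avoids all constant-chasing and asymptotic estimates in favor of exact combinatorial inequalities, and, as you note, it discards the hypothesis $\lambda\ge d$ entirely, making the result strictly more general. The paper's version has the mild virtue of reusing the $\binom{n}{k}\le(en/k)^k$ estimate that appears elsewhere in its analysis, but your argument is shorter, tighter, and easier to audit.
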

	\begin{proof}
		Note that since $x^{2\lambda}\left(1-x^2\right)^{\frac{d}{2}}=0$ in the boundaries $\left(x\in\left\{0,1\right\}\right)$, it is enough to prove the inequality for critical points.
		\begin{align}
		0&=\left(x^{2\lambda}\left(1-x^{2}\right)^{\frac{d}{2}}\right)^{'}=\left(2\lambda x^{2\lambda-1}\right)\left(1-x^{2}\right)^{\frac{d}{2}}-\frac{2d}{2}x^{2\lambda+1}\left(1-x^{2}\right)^{\frac{d}{2}-1}\\&\iff0=2\lambda\left(1-x^{2}\right)-dx^{2}\iff x^{2}=\frac{2\lambda}{2\lambda+d}
		\end{align}
		Therefore, $x^{2}=\frac{2\lambda}{2\lambda+d}$ is the only critical point and:
		\begin{align}
		&x^{2\lambda}\left(1-x^{2}\right)^{\frac{d}{2}}\le\left(\frac{2\lambda}{2\lambda+d}\right)^{\lambda}\left(1-\frac{2\lambda}{2\lambda+d}\right)^{\frac{d}{2}}\\&=\left(1-\frac{d}{2\lambda+d}\right)^{\lambda}\left(\frac{d}{2\lambda+d}\right)^{\frac{d}{2}}\\&\le\left(1-\frac{d}{3\lambda}\right)^{\lambda}\left(\frac{d}{1.5\left(\lambda+d\right)}\right)^{\frac{d}{2}}\\&\le \left(\sqrt[3]{e}\right)^{-d}\left(\sqrt{\frac{e}{1.5}}\right)^{d}\left(\frac{d}{e\left(\lambda+d\right)}\right)^{\frac{d}{2}}\\&\le\left(\frac{d}{e\left(\lambda+d\right)}\right)^{\frac{d}{2}}\le\multiset{d}{\lambda}^{-\frac{1}{2}}
		\end{align}
		where the last inequality follow from the fact that:
		\begin{align}
		\multiset{\lambda}{d}=\binom{\lambda+d-1}{d}\le\left(\frac{e\left(\lambda+d-1\right)}{d}\right)^{d}\le\left(\frac{e\left(\lambda+d\right)}{d}\right)^{d}
		\end{align}
	\end{proof}

\section{Experimental Details}\label{exp_details}
\subsection{kNN-TAPT}
We conducted the network training described in section \ref{sec:3:1} of the main text with AdamW optimizer (with the parameters suggested in the original RoBERTa paper: $\beta_{1}=0.9$, $\beta_{2}=0.98$, $\varepsilon=10^{-6}$ and weight decay of 0.01), with batch sizes of 128 or 256 (depending on model size) and sequences of 256 tokens each. We started with pretrained RoBERTA-base weights from the HuggingFace Transformers repository \footnote{\url{https://huggingface.co/transformers/}}, and continued training them on the MLM task with masking probability of 15\%, where each masked token had a probability of 80\% of being replaced with the special $\mathtt{\left[MASK\right]}$ token, 10\% of being replaced with a random token and 10\% of being kept the same. The data used for this phase of training was created using the four different procedures described in section \ref{sec:3:1}. After the training was finished, we evaluated the models' performance using the SentEval kit.

\subsection{kNN-Pretraining}
We conducted the network training described in section \ref{sec:3:2} of the main text with AdamW optimizer (with the parameters suggested in the original GPT-2 paper: $\beta_{1}=0.9$, $\beta_{2}=0.95$, $\varepsilon=10^{-8}$ and weight decay of 0.1), with batch size of 512 and sequences of 256 tokens each. We pretrained a HuggingFace Transformers implementation of GPT-2 from scratch on Wikipedia with the standard LM objective, and switched to a mixture of the standard data and our generated kNN data in two different points during training. After the training was finished, we evaluated the models' performance on the Natural Questions benchmark.

\section{kNN-Pretraining at different checkpoints}\label{knn_pret_results}
The following table includes F1 evaluation scores of zero shoe closed book Natural Questions examples for different model sizes at different training checkpoints. Overall, further pretraining seems to improve the effectiveness of kNN-Pretraining. 	\begin{table}[h]
		\centering
		\begin{tabular}{ccc}
			\toprule
			Model  & Reg.+kNN & NQ \\
			Size & Steps& F1\\
			\midrule		
			$110$M  & 200 / 400+0 & $<10^{-3}$   \\
			$110$M  & 200+40 & $6.2\cdot10^{-3}$   \\
			$110$M  & 400+40 & $7.9\cdot10^{-3}$   \\
			$345$M  & 200 / 400+0 & $<10^{-3}$   \\	
			$345$M & 200+10 & $9.6\cdot10^{-3}$       \\
			$345$M& 400+10 & $1.4\cdot10^{-2}$       \\
			\bottomrule
		\end{tabular}
		\vspace{-0.5mm}
		\caption{Impact of model size and regular pretraining steps on the Natural Questions F1 score of kNN-Pretraining. \label{tab:ppl}\vspace{-1.2em}} \end{table}

\section{kNN-Pretraining on additional benchmarks}\label{app:glue_tasks}

The main text describes experiments on the Natural Questions dataset. We test how kNN-Pretraining affects other NLU tasks, by examining several tasks from the GLUE benchmark~\citep{wang2018glue} -- Multi-Genre Natural Language Inference (MNLI)~\citep{williams2017broad}, Recognizing Textual Entailment (RTE)~(\citet{dagan2010recognizing} and others), and the The Winograd Schema Challenge (WNLI)~\citep{levesque2012winograd}. 
As in the case of Natural Questions, we evaluate the zero-shot performance of our models since it is a direct probe to the abilities of the model straight after the process of pretraining. 
In contrast to Natural Questions, the GLUE tasks we examined are classification tasks and not generation tasks, so assessing zero shot performance on them is not straightforward. 
We therefore follow the template-based method of \citet{eval-harness} for converting the tasks' data into a format processable by unidirectional language models. 

Notably, the examined GLUE classification tasks are not easy for the examined unidirectional models in zero shot. 
Table~\ref{tab:glue} includes the zero-shot scores of the $345$M parameter model that trained regularly for $200$K steps and then continued training for $20$K steps of kNN-Pretraining, versus the average of 3 baselines that trained regularly for the same number of overall steps (the same models used in figure~\ref{fig:fig1}). Similarly to the results on Natural Questions (figure~\ref{fig:fig1}), all examined models score only slightly better than random guess on the examined GLUE tasks. However (and again similarly to the case of Natural Questions), we get a clear signal that kNN-Pretraining significantly moves the needle when applied for just $10\%$ of the regular pretraining time.
We conjecture that when using stronger models (that train for longer and over more data), the positive effect of kNN-Pretraining will be enhanced, since as the model improves, it can better understand and utilize the various in-context hints that kNN-Pretraining provides.

\begin{table}[h]
\centering
\begin{tabular}{lccc}
\toprule
GLUE Task        & MNLI & RTE& WNLI  \\
\midrule
Random guess & 33.3 & 50.0  & 50.0 \\
3 baselines -- Average score & 35.1 & 52.0  & 51.1 \\
3 baselines -- Max score & 35.3 & 52.3  & 54.9 \\
kNN-Pretraining  &\textbf{ 35.5} & \textbf{53.0}  & \textbf{56.3} \\
\bottomrule
\end{tabular}
\caption{Zero-shot accuracy scores on several GLUE tasks of a kNN-Pretrained model versus 3 baselines that trained regularly on the same data.} \label{tab:glue}\vspace{-1.2em}
\end{table}
\end{document}